\newcommand{\sndS}{$\mathsf{Simple\mbox{-}Evol\mbox{-}Gen}$}
\title{Evolving a Vector Space with any
  Generating Set}
\author{
  Richard Nock\\
{\normalsize Data61, The Australian National University \& The
  University of Sydney}\\
  \texttt{richard.nock@data61.csiro.au}
\and
 Frank Nielsen\\
{\normalsize Ecole Polytechnique \& Sony Computer Science
  Laboratories, Inc.}\\
 \texttt{frank.nielsen@acm.org}
}
\date{}
\begin{document}

\maketitle

\begin{abstract}

In Valiant's model of evolution, a
class of representations is evolvable iff a polynomial-time process of
random mutations guided by selection converges with high probability
to a representation as $\epsilon$-close
as desired from the optimal one, for any required
$\epsilon>0$. Several previous positive results exist that can be
related to evolving a vector space, but each former result imposes
disproportionate representations or
restrictions on (re)initialisations, distributions,
performance functions and/or the
mutator. In this paper, we show that all it takes to evolve a 
normed vector
space is merely a set that generates the
space. Furthermore, it takes only
$\tilde{O}(1/\epsilon^2)$ steps and it is essentially stable, agnostic and handles target drifts that rival some proven
in fairly restricted settings. Our algorithm can be viewed as a close
relative to a popular fifty-years old gradient-free optimization method for which
little is still known from the convergence standpoint: Nelder-Mead simplex method.
\end{abstract}

\noindent {\bf keywords}:
Evolvability, vector space, Bregman divergence.

\section{Introduction}

How can evolution learn ? About a year ago, strong connections between
evolution and machine learning at large were highlighted and discussed
\citep{bEA,jHC,lpEA,wsHCa,wsHCb,zsIA}. One key open challenge
emerged
as "Evo-Devo", the study of
phenotypic variation, the evolution of traits and how these can benefit
evolution for future outcomes given that it is essentially a myopic process. This
addresses several key observations, (i) the evolutionary process
has indeed no anticipation of future
outcomes, (ii) the process is randomized and weakly guided by the
responses to current conditions. Also, (iii) changes happen at the
genotypic level but in fact are observed at a phenotypic level which
depends on past selections. Modularity \citep{hhlmFM} plays a key role in these:
a certain form of intermediate level
organisation, modular and multivariate, might facilitate selections
through relevant combinations of modules that were successful in the
past. \textit{Stability} is also important as evolution is a still
poorly understood balance
between change and conservation \citep{swFA}. We complete
Evo-Devo with the capacity of evolution to be
(iv) agnostic, (v) adaptive and (vi) distribution-free in the machine learning jargon \citep{kvwEW,vE}:
\textit{agnostic} because parsimony and model complexity constraints on organisms
\citep{wsHCa} might just prevent selection to reach a perfectly
accurate and encodable organism even if carried out forever, \textit{adaptive} because the "evolution
of evolvability" \citep{wsHCa} shall require evolution to handle drifts in the
optimal organisms (due \textit{e.g.} to changes in external
conditions), and finally \textit{distribution-free} with respect to
conditions to cope with adaptability over long horizon with
various environmental conditions \citep{wsHCa}.

The evolvability
model of \cite{vE} is an excellent candidate to frame and formalize
such properties, but in the large body of work published on or before
\cite{wsHCa} in Valiant's evolvability model, it is quite remarkable
that no result frames substantial part of the constraints
above, and even less so comes up with a potentially
\textit{implementable} stochastic algorithm. This last question is
of practical importance at a time where gradient-free optimization is sparking new
interest in computer science and optimization \citep{nsRG}.
\begin{center}
\fbox{%
    \parbox{0.95\textwidth}{
\textbf{Our contribution, summarized,} is a proof of Valiant's evolvability for
finite-dimensional normed vector spaces using their simplest defining structure, a
generating set --- and with \textit{no} distributional assumptions
on conditions. In addition, we prove that the same algorithm, which
spans few lines of pseudocode, can be made powerful beyond Valiant's initial
requirements, including being agnostic, strictly monotonic, stable and
handling significant target drift.
}
}
\end{center}
 Our
proof of evolvability is \textit{constructive}. In the Evo-Devo scenario, we perform a genotype /
phenotype distinction by representing modular components of the phenotypes as
functions mapping observable conditions to real-valued vectors (\textit{e.g.}
relative size, height, concentration of certain proteins,
etc.). An organism is a linear a combination of
these functions, represented by a vector whose coordinates weight each
of these. Mutations are represented by a set of vectors in the same
space. Incidentally and interestingly, the vocabulary of linear algebra translates to
high-level characterizations of the evo-devo regime: for example,
\textit{pleiotropy}\footnote{Roughly, phenomenon by which a gene
  affects two or more traits.} may
arise from the non-sparsity of a
mutation vector \citep{sOH}, a small number of mutations --- in
particular not defining a generating set --- may
indicate parsimony pressure on evolution \citep{wsHCa}, a large number
of mutations may indicate genetic redundancy \citep{kspGR} and so on.

\begin{figure}[t]
\begin{center}
\includegraphics[trim=10bp 320bp 50bp
130bp,clip,width=0.6\columnwidth]{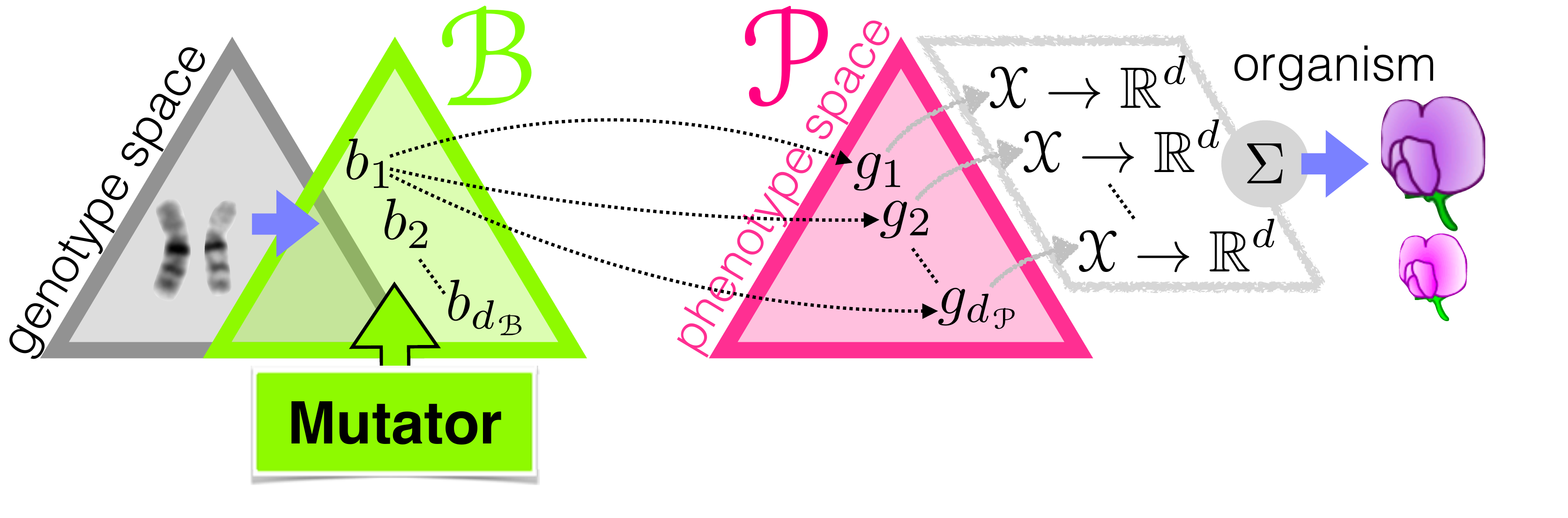}
\end{center}
\caption{In our model, a phenotype is a linear combination of functions
  mapping conditions from a set $\mathcal{X}$ to observables in
  $\mathbb{R}^d$ (right). An organism is a linear combination of these, and it
is evolved through mutations using a set of vectors $\mathcal{B}$
defining the effect of each mutations on each function of the
phenotype (left). The mutator is the Turing-computable mechanism performing
mutations in Valiant's evolvability model \citep{vE} (see text).}
\label{f-modglob}
\end{figure}

The rest of this paper is organised as follows: Section $\S$\ref{sec-rel} presents related work in Valiant's evolvability model; $\S$\ref{sec-evol}
details the evolvability model, $\S$\ref{sec-proof} states and gives a high-level proof of
evolvability, $\S$\ref{sec-agno}, $\S$ \ref{sec-stab} and $\S$\ref{sec-drift} respectively
state the agnostic, stability and drift-compliant evolvability results and
$\S$\ref{sec-toy-exp} sketches toy experiments. A last
$\S$\ref{sec-disc} discusses and concludes. 
For space considerations, an Appendix (starting $\S$\ref{app-overview}) provides all proofs and
complete experimental details.

\begin{table}[t]
\begin{center}
{\footnotesize
\begin{tabular}{r|cac||cccccc|c}
 & (A)& (B) & (C)& \textit{(D)} & \textit{(E)} & \textit{(F)} & \textit{(G)} & \textit{(H)} & \textit{(I)} & us\\ \hline
Any target$^\natural$ & \tickYes & \tickYes & \tickNo & \tickYes & \tickYes & \tickYes & \tickNo & \tickYes & \tickYes & \tickYes\\
Unrestricted loss$^\dagger$ & \tickNo & \tickNo & \tickNo & \tickNo & \tickYes & \tickNo & \tickNo & \tickNo &  \tickNo & \tickYes\\
Weak mutations$^\ddagger$ & \tickNo & \tickYes & \tickYes & \tickNo & \tickNo & \tickNo & \tickYes & \tickNo & \tickNo & \tickYes\\
Non-reflexive neighborhood$^\heartsuit$ & \tickNo & \tickNo & \tickNo & \tickNo & \tickNo & \tickNo & \tickNo & \tickNo & \tickNo & \tickYes\\
Optimal sized neighborhood$^\sharp$ & \tickYes & \tickYes & \tickYes &
\tickNo & \tickNo & \tickNo & \tickNo & \tickYes & \tickNo &
\tickYes\\
Optimal magnitude (mutations)$^\flat$ & \tickYes & \tickYes &
\tickYes & \tickNo & \tickNo & \tickNo & \tickNo & \tickYes & \tickNo &
\tickYes\\
Strictly monotonic evolution & \tickYes & \tickYes & \tickYes & \tickNo
& \tickNo/\tickYes & \tickYes & \tickYes & \tickYes & \tickYes & \tickYes\\
No distribution assumption & \tickNo & \tickYes & \tickNo & \tickYes & \tickYes & \tickNo & \tickYes & \tickNo & \tickYes & \tickYes\\
Unknown distribution & \tickNo & \tickYes & \tickYes & \tickNo &
\tickYes & \tickYes & \tickYes & \tickYes & \tickNo & \tickYes\\
Handles agnostic evolution & \tickNo & \tickYes & \tickNo & \tickNo &
\tickNo & \tickNo & \tickNo & \tickNo & \tickNo & \tickYes\\ 
\hline
\end{tabular}
}
\end{center}
\caption{{\small Main related evolvability results (not restricted to
  vector spaces) and their key properties in the literature (A =
  \cite{mEV}, B = \cite{fRO}, C = \cite{dtOE}, D = \cite{fEF}, E =
  \cite{vEOa}, F = \cite{akAEa}, G = \cite{fDI}, H = \cite{kvwEW}, I =
  \cite{fAC}). References indicated in \textit{italic} can be directly related to evolving vector
  spaces. ($\natural$) There is no assumption on the target function
  except its finiteness (encoding).
($\dagger$) ``Unrestricted'' means that evolution allows twice differentiable 
  non-boolean, non-well-behaved \citep{fDI} and non-quadratic losses. ($\ddagger$) Weak mutations:
  modifications do not depend on the current organism. ($\heartsuit$)
  Non-reflexive: neighborhood does not contain the current organism. ($\sharp$)
  Neighborhood size always $O(\dG)$. ($\flat$) Mutations magnitude is
  $\Omega(\epsilon)$. The gray column relates to the simplest form of organisms addressed, singletons
\citep{fRO}.}}
\label{t-comparison}
\end{table}

\section{Related work and comparison}\label{sec-rel}

In Leslie Valiant's model \citep{vE}, 
evolution has to come with high probability $\epsilon$-close to the
optimum after a polynomial number of iterations. Evolution makes
local modifications to a function that acts as an organism and weakly
minimizes a loss function through a mutator (shown in Figure \ref{f-modglob}). 
There has been a large amount of work in the evolvability model,
summarized in Table \ref{t-comparison} (the Table is discussed in
Section $\S$ \ref{sec-disc}). Row ``non-reflexive
neighborhood'' is a new feature that we have found nowhere else: the fact
that the current organism $f$ does not belong to the mutant set forces the
mutator to evolve $f$ without the safety net that reflexive mutations
belong to neutral neighbors, which therefore
somewhat artificially
contain ``worst case'' evolution.
Perhaps the work that is the closest to
ours with respect to the framework is that of Paul Valiant \citep{vEOa}, which evolves organisms encoding reals instead of binary number
as in \cite{vE} --- a setting arguably closer to natural biological
processes. In \cite{vEOa}, the problem corresponds to the restriction of Figure
\ref{f-modglob} for $\mathcal{X} =
\mathbb{R}^n$ and phenotypes being fixed-degree polynomials. There are two key contributions in the work of \cite{vEOa}.
The first one, which we relate to as the "indirect approach", works under
a broad setting which parallels ours : any convex loss and any
distribution on $\mathcal{X}$. We call it indirect because it relies on the beautiful recording
trick that representations can "hardcode" the optimisation steps of
evolution \citep{fEF,fRO,vEOa}. This trick comes however with some significant
downsides. 
First, the \textit{coding size} of representations
grows at \textit{each} generation and ultimately depend on the desired
accuracy for evolution. In particular, it is polynomial in
$1/\epsilon$, which can be huge. Second, the mutator simulates a weak
optimiser and so the \textit{time complexity} for \textit{each} mutation is also
big, more precisely of the order of the time complexity of the weak
optimisation algorithm it emulates \textit{times} the coding size of
the maximal performance. Third and worse, evolution comes with
\textit{restart}: at each generation, there is a chance that \textit{all} past evolution
history
is wiped out and the representation is initialized to a default one.

The second contribution of \cite{vEOa} is more direct since it trades
the complex mutator for a much simpler and randomized hill climber. However, the analysis is now significantly more restricted as evolution
is proven only for the quadratic loss and the distribution is restricted
to a ball on $\mathcal{X}$. Also, evolution still suffers
downsides as the \textit{coding size expands} at each
generation and the mutator is computationally quite ineffective
and biologically unplausible: the neighborhood to find new mutants size is huge --- polynomial in
$1/\epsilon$ and other factors --- and it resamples its stock of
available mutations at each generation. Finally, neither of \cite{vEOa}'s
schemes are known to be agnostic nor stable in any way --- we note
that stability is an important notion in biology \citep{swFA} but is
not a feature of Valiant's original evolvability model.

Our main result suffers none of these downsides: our mutator meets time
and space optimality properties (Section \ref{sec-disc}), we do not change the set of mutations (Section \ref{sec-proof}), we do not do
restart. Also, our evolvability scheme is agnostic (Section
\ref{sec-agno}), stable (Section \ref{sec-stab}) and
handles significant drift (Section \ref{sec-drift}). Finally, instead of fixed-degree polynomials, we consider any finite-valued function $f: \mathcal{X}
\rightarrow \mathbb{R}^d$. Thus, we can evolve
functions with infinite Taylor expansion, 
something \cite{vEOa} does not cover\footnote{It is also not
  clear whether a simple trick to extend \cite{vEOa} --- replacing
  variables by bounded functions --- is possible without
  endangering the distribution support assumption or the complexity
  parameters.}. Finally, our mutator yields an extremely simple and
provable evolutionary
scheme, implementable using
few lines of code as sketched in Algorithm \ref{algo:SNDS} (\sndS). 

\noindent As a brief comparison with other work, the mutator is not
organism-dependent like in \citep{kvwEW}, we have no distribution
assumptions like in \citep{akAEa,kvwEW} or a requirement to know this
distribution like in \citep{fEF}, and the same scheme can be made agnostic or
handle drift more significant than some allowed in more restricted
settings \citep{kvwEW}. We
insists on the no-distribution assumption: in some
work, this distribution is constrained, smooth and nice \citep{akAEa},
uniform \citep{mEV,vE}, spherically symmetric \citep{kvwEW}, a product
of Gaussians with polynomial variance \citep{kvwEW}, or with
support restricted to a ball \citep{vEOa}.
\begin{algorithm}[t]
\caption{\sndS($f_0$)}
\begin{algorithmic}
{\small
\label{algo:SNDS}
\STATE \textbf{Input}: initial representation $f_0, T$;
\STATE For $t=0, 1, ..., T-1$
\STATE \quad Step $t$.1: Sample $\mathcal{S}$ as per (\ref{defmm-short});
\STATE \quad Step $t$.2: Compute $\bene(f_t)$ and $\neut(f_t)$ using
$\neigh_\epsilon(f_t)$ as per (\ref{defneighbor})
\STATE \quad Step $t$.3: \textbf{if} $\bene(f_t) \neq \emptyset$ then sample uniformly
$f_{t+1} $ in $\bene(f_t)$ as per (\ref{defbene});
\STATE \quad \qquad\qquad \textbf{else if} $\neut(f_t) \neq \emptyset$ then
sample uniformly
$f_{t+1} $ in $\neut(f_t)$ as per (\ref{defneut});
\STATE \quad \qquad\qquad \textbf{else} sample uniformly
$f_{t+1} $ in $\neigh_\epsilon(f_t)$;
\STATE \textbf{Return} $f_{T}$;
}
\end{algorithmic}
\end{algorithm}

\section{Evolvability model}\label{sec-evol}

We define key components of the Evolvability model and then
define the model \citep{vE}.

\paragraph{Topology of representations ---}Organisms are represented by
functions $f:{\mathcal{X}} \rightarrow
{\mathbb{R}}^{d}$ of a set ${\hg}$ (Figure \ref{f-modglob}), called the representation class,
supposed to be polynomial-time Turing-evaluatable.  ${\mathcal{X}}$ is the set of
conditions or experiences.
For any $f \in
{\hg}$, a neighborhood function is defined, $\neigh_\epsilon(f) \subseteq
{\hg}$, that depends on an accuracy parameter
$\epsilon>0$. The size of the neighborhood is required to be
polynomial in $1/\epsilon$, $d$ and the dimension of ${\mathcal{X}}$,
$\mathrm{dim}({\mathcal{X}})$. 

\paragraph{Performances of representations ---}Performances are
measured with respect to an unknown but fixed distribution
${\mathcal{D}}$ over ${\mathcal{X}}$, relatively to an unknown target function
$t\in {\hg}$. The expected performance of some $f\in
{\hg}$ with respect to $t$ is $\perf_{t,\varphi}(f, {\mathcal{D}})
\defeq - \expect_{x \sim {\mathcal{D}}} [D_\varphi(f(x) \| t(x))]$,
where $D_\varphi$ is Bregman divergence with (twice differentiable) generator $\varphi: {\mathbb{R}}^d \rightarrow
{\mathbb{R}}$ \citep{bTR,bgwOT,bmdgCWj,bnnBV}. By extension, the empirical performance
realized by $f$ on an i.i.d. sample ${\mathcal{S}}$ is defined as
$\perf_{t,\varphi}(f, {\mathcal{S}}) \defeq - \expect_{x \sim {\mathcal{S}}}
[D_\varphi(f(x) \| t(x))]$. Our expected performance $\perf_{t,\varphi}$ generalizes Valiant's
which computes $\expect_{{\mathcal{D}}} [ft]$. In Valiant's setting,
$d=1$, the output of functions is $\{-1,1\}$ and $D_\varphi$ is
the square loss $D_{\sql}$, and so $\perf_{t,\sql}(f,
{\mathcal{D}}) = 2 (\expect_{{\mathcal{D}}} [f(x)t(x)] - 1)$. Notice
that unlike \cite{vEOa}, we do not assume to know the functions defining phenotypes in
Figure \ref{f-modglob} (grey parallelogram), we just observe the
combination of their outputs. We consider it very natural, some sort of ``Petri dish''
model of performance evaluation.

\paragraph{Selection by mutations ---}A mutator $\mutator: {\hg} \times {\mathbb{N}} \rightarrow  {\hg}\cup
\{\bot\}$ 
is a randomized
polynomial-time Turing machine that depends upon an accuracy $\epsilon
> 0$ and a tolerance $\tol>0$. Tolerance is required to be polynomial
in $\epsilon$, $1/d$ and $1/\mathrm{dim}({\mathcal{X}})$. The mutator
returns a so-called ``mutant'' of some
input $f\in  {\hg}$ based on a weak evaluation of the quality of the
elements of $\neigh_\epsilon(f)$. More precisely, it takes as
input a
sample size $m>0$, samples i.i.d. a set ${\mathcal{S}}$ of $m$ conditions, 
and outputs some $g\in \bene(f)$ at random if $\bene(f)\neq \emptyset$,
or else $g\in
\neut(f)$ at random if $\neut(f)\neq \emptyset$, using a
fixed distribution $\upmu(g,f)$ with support $\bene(f)$ or $\neut(f)$. 
Those two sets $\bene(f)$
and $\neut(f)$ are defined respectively by:
\begin{eqnarray}
\bene(f) & \defeq & \{g \in \neigh_\epsilon(f): \perf_{t,\varphi}(g,
{\mathcal{S}}) \geq \perf_{t,\varphi}(f,
{\mathcal{S}}) + \tol\}\:\:,\label{defbene}\\
\neut(f) & \defeq & \{g \in \neigh_\epsilon(f): |\perf_{t,\varphi}(g,
{\mathcal{S}}) - \perf_{t,\varphi}(f,
{\mathcal{S}}) |< \tol\}\:\:.\label{defneut}
\end{eqnarray}
If both sets $\bene(f)$ and $\neut(f)$ are
empty, the mutator outputs $\bot$, meaning 
evolution has failed. 

\paragraph{Representations ---} Our framework being non-boolean, we
define the models that we evolve. Let $[n] \defeq \{0, 1, ..., n\}$ and
$[n]_* \defeq \{1, ..., n\}$ where $n$ is a natural integer. First, we have a set of functions
$\{g_1, g_2, ..., g_{\dG}\} \subset {\hg}$, each of which is of the form $g_j : {\mathcal{X}} \rightarrow
{\mathbb{R}}^d$ for $j\in [\dG]$ for some $\dG>0$. Each $g_j$ can be
thought as encoding a specific part of trait(s) representation, 
such as a relative concentration in specific proteins under any
experimental condition --- for this reason we suppose without loss of
generality that its norm $\|g_j(.)\|_2^2$ is finite almost everywhere
with respect to $\mathcal{D}$. The set of functions that we evolve lies in
the span of $\{g_1, g_2, ..., g_{\dG}\}$ --- which we also denote as $\hg$
for simplicity ---, \textit{i.e.}, consists of linear combinations of
functions of $\hg$. 
However, because we want our model to be general, we do not evolve
directly $\hg$. For this reason, we define a set of vectors ${\mathcal{B}} \defeq
\{b_1,
b_2, ..., b_{\dF}\}$, with $b_i \in \mathbb{R}^{\dG}$, that will
represent our set of mutations. Each (column) vector, $b_i \defeq
[b_i^1 \cdots b_i^{\dG}]^\top$ maps to a function $b_i(x) \defeq \sum_j
b_i^j \cdot g_j(x)$. Their span, $\mathrm{span}(\mutbasis)$, defines a
subspace of the phenotype vector space. While we will
investigate first the case $\mathrm{span}(\mutbasis) =\hg$, we shall
also cover the "agnostic" evolvability case where 
$\mathrm{span}(\mutbasis) \subset \hg$. An organism $f$ that our mutator builds
has evolved from some initial $f_0$ and can therefore be represented
as $f = f_0 + \sum_i f^i b_i$, where $[f^1 \: f^2 \: ... \: f^{\dF}] \in
{\mathbb{N}}^{\dF}$. Finally, To avoid confusion with ${\mathbb{R}}^d$, we let
$\|.\|_{\hg}$ denote the $L_2$ norm computed with respect to $\{g_1,
g_2, ..., g_{\dG}\}$, \textit{i.e.} the norms of the coordinates in $\hg$.

\paragraph{Evolvability horizon ---}  
In the same way as PAC-learnability allows to be polynomial in the size of
the target concept, evolvability has to allow a time complexity that
depends on some complexity measure with respect to the target
organism, and not just the number of description variables, which
would be $\mathrm{dim}({\mathcal{X}})$ in our case. Evolvability results
involving complex representations alleviate this
distinction by putting constraints on representations \citep{kvwEW,vEOa}. 
We integrate this notion in the form of what we call the \textit{Evolvability
horizon}, $\darwin$. $\darwin$ quantifies the necessary number of generations to come up with an encoding
``close'' to that of $t$. Any evolution using
$o(\darwin)$ generations, using only $\mutbasis$, would be bound to fail in the worst case.
\begin{definition}
The \textit{Evolvability horizon} $\darwin(f_0,t)$ ($\darwin$ for short) of $f_0$ wrt target $t$ is
$\darwin \defeq \left\lceil \frac{\|t-f_0\|_{\hg}}{\max_i
   \|b_{i}\|_{\hg}} \right\rceil$.
\end{definition}

\paragraph{Evolvability ---} We define
evolvability, following \citep{fRO,vE}.

\begin{definition}\label{defevolvability}
Assume the following fixed, for any accuracy $\epsilon>0$:
representation class ${\hg}$, mutator
neighborhood $\neigh_\epsilon$ and distribution $\mu$, distribution ${\mathcal{D}}$, generator
$\varphi$, tolerance $\tol$. Then ${\hg}$ is distribution-free
evolvable by mutator $\mutator(.,.)$ iff for any initial
representation $f_0$ and target representation $t$ such that $\darwin
\ll \infty$ ("$\ll \infty$" means finite), there exist
polynomial functions
$m$ and $T$ (both polynomial in $d,\mathrm{dim}({\mathcal{X}}),1/\epsilon,
\darwin$) such that $\forall 0<\epsilon \leq 1$, 
then  with probability $\geq
1-\epsilon$, the sequence $f_0, f_1, ..., f_{T}$ with $f_j \defeq
\mutator(f_{j-1}, m)$, $\forall j$, satisfies:
\begin{eqnarray}
\perf_{t,\varphi}(f_{T},
{\mathcal{D}}) & \geq & - \epsilon\:\:.\label{condgen}
\end{eqnarray}
\end{definition}
Our model of evolution holds without initialisation \citep{fEF,vE}. We
have simplified the presentation of the model, in particular removing
the notion of ``evolution algorithm'' and specifying evolution
directly from the mutator. This does not weaken the results.

\section{Evolvability of vector spaces}\label{sec-proof}
The main notations are summarized in
Appendix, Subsection \ref{app-sub-not}.
\begin{definition}\label{defPERMUT}
A mutator is \textbf{permissible} iff the neighborhood
used by the mutator is defined as:
\begin{eqnarray}
\neigh_\epsilon(f) & \defeq & \{f\}\oplus \{\sigma \alpha b_i: \sigma \in
\{-1,+1\}, b_i \in {\mutbasis}\}\:\:,\label{defneighbor}
\end{eqnarray}
for some set ${\mutbasis} \defeq \{b_1, b_2, ...,
b_{\dF}\}$, where $\oplus$ is
Minkowski sum. 
$\alpha>0$ (fixed) is called the \textit{magnitude} of the
mutations and $\sigma$ is called the polarity of the
mutation.
\end{definition}
We have not detailed the distribution of the mutator, $\upmu$ (Section
\ref{sec-evol}). In fact,
it can be any distribution with full support and (at least) inversely
polynomial density, following \textit{e.g.}, \citep{kvwEW}. We shall
consider the simplest of all, the uniform distribution. 
We
remark that ${\mutbasis}$ is not necessarily a
basis, nor normal, nor orthogonal. Also, $\alpha$ is the key parameter
to be tuned for evolvability. We evolve vector spaces under three
assumptions united in a Singularity-Free (SF) setting.

\begin{definition}
The (\textbf{SF}) setting is defined by the following three
assumptions:\\
\noindent (i) any genome ``can be coded'': $\mathrm{span}(\mutbasis) = {\hg}$, \\
\noindent (ii) any target organism $t$
is ``unique'': $\arg\min_f \perf_{t,\varphi}(f,
{\mathcal{D}}) = \{t\}$ ($\forall t\in \hg$), and\\
\noindent (iii) any non-void genome gets ``expressed'': $\pr_{x\sim \mathcal{D}}[g(x) \neq 0_{{\mathbb{R}}^d}]
  > 0, \forall g\neq 0_\hg$.
\end{definition}
Each of (i-iii) allows to define parameters that will be useful to quantify
evolution. 
We now provide a concise
version of our main results, hiding the less important parameters in
the corresponding $\theta, \tilde{O}$ notations (the complete statement of the Theorem is in Theorem \ref{evolmut-long}).
\begin{theorem}\label{evolmut-short}
(evolvability of vector spaces, concise statement) Assume (\textbf{SF}) holds. Then ${\hg}$ is distribution-free evolvable by
any permissible mutator $\mutator$, with tolerance $\tol =
\theta(\epsilon^{2})$ and magnitude of mutations $\alpha =
\theta(\epsilon)$. 
The number of conditions sampled at each iteration satisfies:
\begin{eqnarray}
m & = & \tilde{O}\left(\frac{ \darwin^4 }{\epsilon^{2}}\log\left(\frac{\dF
      \darwin}{\epsilon}\right)\right)\label{defmm-short}\:\:.
\end{eqnarray}
Finally, the number of evolution steps $T$ sufficient to comply with
ineq. (\ref{condgen}) is $T = \tilde{O}\left(\darwin^4/\epsilon^{2}\right)$.
\end{theorem}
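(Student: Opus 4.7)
The plan is to use the expected Bregman deficit $\Phi(f) \defeq -\perf_{t,\varphi}(f,\mathcal{D})$ as a Lyapunov potential and show that each step of \sndS drives it down by $\Omega(\epsilon^2/\mathrm{poly}(\darwin))$ until $\Phi(f_t)\le\epsilon$. The (\textbf{SF}) assumptions give exactly the right structure for this: (ii) makes $\Phi$ vanish only at $t$, (iii) prevents degenerate zero-divergence directions, and (i) is what will let us find a beneficial mutation whenever $\Phi(f)>\epsilon$.

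The per-step expansion I would use is the Bregman three-point identity: for $g = f + \sigma\alpha b$ with $b\in\mathcal{B}$ and $\sigma\in\{-1,+1\}$,
\[ \Phi(g) - \Phi(f) \;=\; \sigma\alpha\,\expect_x\!\left[\langle \nabla\varphi(f(x)) - \nabla\varphi(t(x)),\, b(x)\rangle\right] \;+\; \expect_x\!\left[D_\varphi(g(x)\,\|\,f(x))\right]. \]
With $\alpha = \theta(\epsilon)$ and the a.e.\ finiteness of the $\|g_j\|_2$, the second-order remainder is $O(\epsilon^2)$ uniformly over $\mathcal{B}$, so choosing $\tol = \theta(\epsilon^2)$ makes a mutation's beneficial/neutral status decided, up to lower-order error, by the sign and magnitude of the linear term.

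To produce a beneficial direction when $\Phi(f) > \epsilon$, (\textbf{SF})(i) lets me write $t - f = \sum_i \lambda_i b_i$, and the Bregman symmetric-divergence identity then gives
\[ \sum_i \lambda_i\cdot \expect_x[\langle \nabla\varphi(f) - \nabla\varphi(t),\, b_i\rangle] \;=\; \expect_x[D_\varphi(f\|t) + D_\varphi(t\|f)] \;\ge\; \Phi(f). \]
Bounding $\sum_i |\lambda_i|$ via a minimal-norm decomposition controlled by $\darwin$ and $\max_i\|b_i\|_{\hg}$, and averaging, produces some index $i^\star$ and polarity $\sigma^\star$ whose population gain exceeds $2\tol$ by an $\Omega(1/\mathrm{poly}(\darwin))$ margin. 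A uniform almost-sure bound of order $\alpha\darwin$ on the per-sample divergence difference, Hoeffding, and a union bound over the $2\dF$ candidate mutations per generation and over $T$ generations then yield the sample size~(\ref{defmm-short}) needed so that, with probability $\ge 1-\epsilon$, empirical orderings track population orderings within $\tol/2$ throughout the run. On that event \sndS always selects $f_{t+1}\in\bene(f_t)$ whenever $\Phi(f_t)>\epsilon$, and telescoping with a starting deficit $\Phi(f_0)$ polynomial in $\darwin$ and per-step drop $\Omega(\epsilon^2/\mathrm{poly}(\darwin))$ gives $T=\tilde{O}(\darwin^4/\epsilon^2)$.

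The hard part, I expect, is the \emph{non-reflexive} feature of $\neigh_\epsilon$: without the reflexive mutation in the neighborhood the \textbf{else} branches of \sndS must be shown never to fire while $\Phi(f_t)>\epsilon$, which forces the population beneficial gain to survive both the $O(\epsilon^2)$ Taylor remainder and the $\tol/2$ sampling error simultaneously. Calibrating $\alpha$, $\tol$ and $m$ so that this separation persists --- without the $\sum_i|\lambda_i|$ bound from the spanning decomposition blowing up --- is where the specific $\darwin^4/\epsilon^2$ exponent appears to be paid, and it is the step most at risk of failing under a loose analysis.
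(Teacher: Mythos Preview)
Your high-level architecture matches the paper's: Bregman three-point expansion into a linear ``return'' term and a quadratic ``premium'' term, a spanning argument to locate a direction whose linear term is large, Hoeffding concentration over $2\dF T$ events, and a telescoping potential argument. The paper packages these via the quantities $\mathrm{R}_{f,i}$, $\Pi_{f,i}$ (Definition~\ref{defretprem}) and the set $\hg_{t,\mathcal{D}}$ (Definition~\ref{defSet1}), but the skeleton is the same.

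There are, however, two genuine gaps at exactly the step you flag as ``most at risk''. First, your control of $\sum_i|\lambda_i|$ by ``$\darwin$ and $\max_i\|b_i\|_{\hg}$'' is not valid for an \emph{arbitrary} generating set, which is the whole point of the theorem. If the vectors in $\mathcal{B}$ are nearly dependent, even the minimal-$\ell_1$ decomposition of $t-f$ can have $\sum_i|\lambda_i|$ arbitrarily larger than $\|t-f\|_{\hg}/\max_i\|b_i\|_{\hg}$. The paper spends its main technical effort here (Lemma~\ref{lemmax}): it selects a distinguished basis $\mutbasis^*\subseteq\mutbasis$, lower-bounds the smallest eigenvalue of ${\matrice{b}^*}^\top\matrice{b}^*$ by $e^{-1}\mathrm{vol}(\tilde{\matrice{b}})^2$ via AM--GM on eigenvalues, and then computes the volume via a combinatorial identity on permutation signs (Lemma~\ref{eqpdg}). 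The resulting constant $B_{\hg}(\mutbasis^*)$, encoding angles and norm ratios of the basis, is what replaces your $\sum_i|\lambda_i|$ bound and is hidden in the $\tilde{O}$.

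Second, your ``uniform almost-sure bound of order $\alpha\darwin$'' for Hoeffding tacitly assumes $\|t-f_j\|_{\hg}$ stays $O(\darwin)$ throughout the run, but this is circular: you need concentration to get monotone decrease of $\Phi$, and you need monotone decrease to keep $\|t-f_j\|_{\hg}$ bounded. The paper breaks the circularity by defining the monotone prefix $\hgmon$ (Definition~\ref{defSet1}), proving that \emph{within} $\hgmon$ the performance is nondecreasing so $\|t-f_j\|_{\hg}^2\le(\hessianmax\expressmax/\hessianmin\expressmin)\darwin^2\max_i\|b_i\|_{\hg}^2$ (ineq.~(\ref{seqdec})), and then sizing $m$ for that bound; a separate Lemma~\ref{lemconv1} shows that the first organism to exit $\hgmon$ already satisfies $\Phi\le\epsilon$. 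Without this two-regime split your concentration step does not close.
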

We sketch the key steps for the proof of Theorem
\ref{evolmut-short}, first introducing key
definitions \ref{defretprem} and \ref{defexen} below.
\begin{definition}\label{defretprem}
For any representation $f$, condition $x$, magnitude $\alpha$ and
polarity $\sigma$, we let
\begin{eqnarray}
\mathrm{R}_{f, i} (x) & \defeq & \langle \sigma b_i(x),
(\nabla\varphi \circ t)(x) - (\nabla\varphi \circ f)(x)\rangle\:\:, \label{defreturn} \mbox{ and }\\
\Pi_{f,i}(x) & \defeq &
\frac{1}{\alpha} D_\varphi(f (x)
- \alpha \cdot (-\sigma) b_i (x) \| f (x)) \label{defpremium}
\end{eqnarray}
denote respectively the mutator's \textbf{return} and \textbf{premium} on $x$ given $f$, omitting $\sigma$ and
$\alpha$ in notations.
\end{definition}
We give an equivalent
definition for the set of beneficial mutations, using returns and
premiums.
\begin{lemma}\label{lemmut}
$\bene(f) = \left\{f+\sigma \alpha b_i \in \neigh_\epsilon(f): \expect_{{\mathcal{S}}}
[\mathrm{R}_{f, i}(x)] -\expect_{{\mathcal{S}}}
[\Pi_{f,i}(x)] \geq (\tol/\alpha)\right\} $.
\end{lemma}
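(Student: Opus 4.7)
The plan is to show that for any candidate mutant $g = f + \sigma \alpha b_i \in \neigh_\epsilon(f)$, the performance gap $\perf_{t,\varphi}(g,{\mathcal{S}}) - \perf_{t,\varphi}(f,{\mathcal{S}})$ equals exactly $\alpha \cdot \expect_{\mathcal{S}}[\mathrm{R}_{f,i}(x) - \Pi_{f,i}(x)]$. Once this pointwise identity is established, dividing the defining inequality of $\bene(f)$ by $\alpha > 0$ immediately gives the reformulation claimed in the lemma.

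First I would unfold the performance difference pointwise: for a fixed $x$, write
\begin{eqnarray*}
D_\varphi(f(x)\|t(x)) - D_\varphi(g(x)\|t(x)) & = & \varphi(f(x)) - \varphi(g(x)) + \langle \nabla\varphi(t(x)), g(x)-f(x)\rangle,
\end{eqnarray*}
where the $\varphi(t(x))$ terms cancel and the inner-product terms combine thanks to linearity. Since $g(x)-f(x) = \sigma\alpha\, b_i(x)$, this already produces the return-like term $\alpha\langle \sigma b_i(x), \nabla\varphi(t(x))\rangle$; however the remaining $\varphi(f(x)) - \varphi(g(x))$ is still ``raw'' and must be split so that $\nabla\varphi(f(x))$ appears explicitly.

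The second step is the key manipulation. I would add and subtract $\langle \nabla\varphi(f(x)), g(x)-f(x)\rangle$ to isolate a Bregman term anchored at $f(x)$. This gives
\begin{eqnarray*}
\varphi(f(x)) - \varphi(g(x)) & = & -\,\langle \nabla\varphi(f(x)), g(x)-f(x)\rangle - D_\varphi(g(x)\|f(x)),
\end{eqnarray*}
by the very definition of $D_\varphi$. Substituting this back and using once more that $g(x)-f(x)=\sigma\alpha b_i(x)$, the previous display becomes
\begin{eqnarray*}
D_\varphi(f(x)\|t(x)) - D_\varphi(g(x)\|t(x)) & = & \alpha\,\langle \sigma b_i(x), \nabla\varphi(t(x)) - \nabla\varphi(f(x))\rangle - D_\varphi(g(x)\|f(x)),
\end{eqnarray*}
which, by Definition \ref{defretprem} and the identity $g(x) = f(x) - \alpha(-\sigma) b_i(x)$, is precisely $\alpha\,\mathrm{R}_{f,i}(x) - \alpha\,\Pi_{f,i}(x)$.

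Finally, taking the empirical expectation over ${\mathcal{S}}$ on both sides and using linearity yields $\perf_{t,\varphi}(g,{\mathcal{S}}) - \perf_{t,\varphi}(f,{\mathcal{S}}) = \alpha\bigl(\expect_{\mathcal{S}}[\mathrm{R}_{f,i}(x)] - \expect_{\mathcal{S}}[\Pi_{f,i}(x)]\bigr)$. The condition $\perf_{t,\varphi}(g,{\mathcal{S}}) \geq \perf_{t,\varphi}(f,{\mathcal{S}}) + \tol$ defining $\bene(f)$ therefore rewrites as $\expect_{\mathcal{S}}[\mathrm{R}_{f,i}(x)] - \expect_{\mathcal{S}}[\Pi_{f,i}(x)] \geq \tol/\alpha$, which closes the proof. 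There is no real obstacle here: the whole lemma is an algebraic reorganization of the Bregman three-point identity, and the only point that requires care is keeping the sign conventions of $\sigma$ consistent between the mutation $+\sigma\alpha b_i$ in $\neigh_\epsilon(f)$ and the awkward-looking $-(-\sigma)$ sign that appears in the definition of the premium.
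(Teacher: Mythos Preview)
Your proof is correct and follows essentially the same route as the paper: both rely on the Bregman three-point identity $D_\varphi(f(x)\|t(x)) - D_\varphi(g(x)\|t(x)) = \langle g(x)-f(x),\nabla\varphi(t(x))-\nabla\varphi(f(x))\rangle - D_\varphi(g(x)\|f(x))$, then substitute $g-f=\sigma\alpha b_i$ and divide by $\alpha$. The paper simply states this identity in one line, whereas you derive it explicitly via the add-and-subtract of $\langle\nabla\varphi(f(x)),g(x)-f(x)\rangle$; there is no substantive difference.
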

(Proof in Appendix, Subsection \ref{prooflemmut}) We call Lemma
\ref{lemmut} the \textit{mean-divergence} decomposition of beneficial
mutations in reference to portfolio theory: when $D_\varphi$ is Mahalanobis
divergence, the model simplifies to an equivalent of Markowitz model
\citep{mPS} in which $\Pi_{f,i}(x) = (\alpha/2) \cdot \langle b_i(x),
\matrice{m} b_i(x)\rangle$ ($\matrice{m}$ symmetric positive
definite), \textit{i.e.} the magnitude of mutations
$\alpha$ is exactly Arrow-Pratt measure of absolute risk
aversion --- since $\alpha > 0$, evolution is "risk
averse". Due to the lack of space, we close this analogy here and
notice that the mutator's premium 
quantifies the (local) risk of mutating (it does not depend on $t$). 
Hereafter, we let ``expected'' return and ``expected'' premium denote the
expectation of (\ref{defreturn}) and (\ref{defpremium}). There is a
quantity that turns out to be key in our analysis. It ties expression
(phenotype) and encoding (genotype), the \exen-ratio.
\begin{definition}\label{defexen}
The phenotype-to-genotype (\exen) ratio of $f \in {\hg}$
given distribution ${\mathcal{D}}$ is:
\begin{eqnarray}
\rho(f|{{\mathcal{D}}}) & \defeq & \frac{\expect_{{\mathcal{D}}}
[\|f(x)\|^2_2]}{\|f\|_{{\hg}}}  = \frac{\var_{{\mathcal{D}}}
[\|f(x)\|_2] + (\expect_{{\mathcal{D}}}
[\|f(x)\|_2])^2}{\|f\|_{{\hg}}} \:\:. \label{ratgp}
\end{eqnarray}
The \exen-divergence between $f \in {\hg}$ and $g \in
{\hg}$ given distribution ${\mathcal{D}}$ is $\rho(f, g|{{{\mathcal{D}}}})
\defeq \rho(f-g|{{\mathcal{D}}})$.
\end{definition}
A justification for the name of $\rho(f, g|{{{\mathcal{D}}}})$ comes
from the fact that 
$\rho(f|{{\mathcal{D}}}) =
\rho(f,0_{{\hg}}|{{{\mathcal{D}}}})$, $0_{{\hg}}$ representing a ``void genome''.
The proof of Theorem \ref{evolmut-short} relies on \textit{two arguments}. The first establishes
that, provided the mutator samples sufficient conditions $m$, the set of
beneficial mutations is never empty with high probability, as long as the current $f$ is
``far'' from the optimum, where this distance notion relies on the
\exen-divergence between $f$ and the target $t$. In fact, we
show a bit more, as in this case mutations
may be \textit{superior beneficial}: we call them
superior beneficial because while beneficial mutations shall be proven to yield an improvement
of $\Omega(\epsilon^2)$ in performance, those superior beneficial mutations yield
a greater increase of $\Omega(\epsilon)$. 
In the second argument, we show that when the first argument does not
hold anymore, the requirements of evolvability are met, and the number of evolution steps is polynomial
in all required parameters, so vector spaces are
evolvable. To formalize these two arguments, we need to define an
important basis in $\mutbasis$, called
$\mutbasis^*$.
\begin{definition}\label{defBSTAR}
Let $\mutbasis^*\subseteq \mutbasis$ be a basis that maximises
$\overline{B}_{\hg}(\mutbasis') \defeq B_{\hg}(\mutbasis') / \max_{b
\in \mutbasis'} \|b\|_\hg$ over all bases $\mutbasis' \subseteq \mutbasis$, where $B_{\hg}(\mutbasis') \defeq (1-\kappa_n(\mutbasis'))\cdot
(1 - \kappa_a(\mutbasis')) \cdot \frac{\sum_{b \in \mutbasis'}
  \|b\|_{\hg}}{\dG}$ and $\kappa_n(\mutbasis')$ and $\kappa_a(\mutbasis')$
denote non-negative reals such that $G(\mutbasis') = (1-\kappa_n(\mutbasis')^{\frac{2}{\dG}}
A(\mutbasis')$, $\cos(\thetaV) = 1 - (1-\kappa_a(\mutbasis'))^{\frac{1}{\dG-1}}$.
Here, $G(.)$ and $A(.)$ are the geometric and
arithmetic means\footnote{The \textit{G}eometric and \textit{A}rithmetic means of reals
$z_1, z_2, ..., z_m$ are $G \defeq (\prod_j z_j )^{1/m}, A \defeq (1/m)
\cdot \sum_j z_j$.} of the squared norms in $\mutbasis'$, and $\thetaV
\defeq \min_{b\neq  b' \in \mutbasis'}\min\{|\angle b, b'|, |\pi - \angle b,
b'|\} \in [0, \pi/2]$ is the minimal angle between two 
vectors. 
\end{definition}
Roughly, the larger $B_{\hg}(.)$, the
better for evolution. This parameter tends to be larger as the basis in argument becomes closer to
orthonormality, and so orthonormal bases represent the ``easiest'' cases for
evolution from this standpoint. They turn out to be the ones of \citep[Section 6]{kvwEW}.
\begin{lemma}
$\forall {\mutbasis}'$ a basis of $\hg$, $B_{\hg}(\mutbasis')$ and
$\overline{B}_{\hg}(\mutbasis')$ are strictly positive.
\end{lemma}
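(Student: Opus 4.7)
The plan is to unpack the three factors defining $B_{\hg}(\mutbasis')$ and establish strict positivity of each by exploiting the basis hypothesis. Since $\mutbasis'$ is a basis of $\hg$, every $b \in \mutbasis'$ is nonzero and hence $\|b\|_{\hg} > 0$. This immediately gives that the arithmetic factor $\sum_{b \in \mutbasis'} \|b\|_{\hg} / \dG$ is strictly positive (and finite since $\mutbasis'$ is finite).

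Next I would handle the $1 - \kappa_n(\mutbasis')$ factor. By definition, $(1-\kappa_n(\mutbasis'))^{2/\dG} = G(\mutbasis')/A(\mutbasis')$, where $G$ and $A$ are the geometric and arithmetic means of the squared norms $\{\|b\|_{\hg}^2\}_{b \in \mutbasis'}$. The classical AM--GM inequality yields $G \leq A$, so $\kappa_n \geq 0$ is consistent with its stated sign. Crucially, since every $\|b\|_{\hg}^2 > 0$, both $G > 0$ and $A > 0$, which forces $0 < G/A \leq 1$, hence $\kappa_n \in [0,1)$ and $1-\kappa_n(\mutbasis') > 0$.

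For the $1 - \kappa_a(\mutbasis')$ factor, the definition yields $(1-\kappa_a(\mutbasis'))^{1/(\dG-1)} = 1 - \cos(\thetaV)$, so it suffices to show $\thetaV > 0$. If $\thetaV = 0$, then some distinct pair $b, b' \in \mutbasis'$ would satisfy $\min\{|\angle b, b'|, |\pi - \angle b, b'|\} = 0$, meaning $b$ and $b'$ are either parallel or anti-parallel, i.e.\ collinear, contradicting linear independence of $\mutbasis'$. Hence $\thetaV \in (0, \pi/2]$, so $\cos(\thetaV) \in [0,1)$, giving $(1-\kappa_a)^{1/(\dG-1)} > 0$ and therefore $1-\kappa_a(\mutbasis') > 0$.

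Combining the three strict positivities yields $B_{\hg}(\mutbasis') > 0$, and since $\max_{b \in \mutbasis'} \|b\|_{\hg}$ is a finite positive maximum over the finitely many positive norms, $\overline{B}_{\hg}(\mutbasis') = B_{\hg}(\mutbasis')/\max_{b \in \mutbasis'} \|b\|_{\hg} > 0$ as well. The only mildly subtle step is the angle argument: one must observe that $\thetaV$ is defined via a $\min$ of the acute and obtuse deviations from collinearity, so $\thetaV = 0$ captures \emph{both} the parallel and the anti-parallel degenerate cases simultaneously, and in either case the linear dependence of $b, b'$ is the culprit one can extract to contradict the basis assumption.
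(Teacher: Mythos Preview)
Your proof is correct and follows essentially the same approach as the paper: show that the basis hypothesis forces $\kappa_n(\mutbasis') < 1$ (since no basis vector is null) and $\kappa_a(\mutbasis') < 1$ (since no two basis vectors are collinear), and that the average-norm factor is positive. You simply give more explicit justification, invoking AM--GM and unpacking the parallel/anti-parallel cases in the definition of $\thetaV$, whereas the paper states these implications tersely.
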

\begin{proof}
If $\kappa_n({\mutbasis}') = 1$, then one vector in ${\mutbasis}'$
is the null vector, if $\kappa_a({\mutbasis}') =1$, then two vectors in
${\mutbasis}'$ are collinear, in whichever case ${\mutbasis}'$
cannot be a basis. Thus, 
$(\kappa_n({\mutbasis}'), \kappa_a({\mutbasis}')) \in
[0,1)^2$. Finally, no basis vector can be the null vector, so
$B_{\hg}(\mutbasis') > 0$ and $\overline{B}_{\hg}(\mutbasis') > 0$, as claimed.
\end{proof}
Wlog, we are assume
$\max_{i\in [\dF]_*}
   \|b_{i}\|_{\hg} = \max_{b_i \in \mutbasis^*}
   \|b_{i}\|_{\hg}$, and $\sup_{i,x} \single(\mutbasis'\cup \{b_i\}, x) \leq \sup_{x}
\single(\mutbasis^*, x)$ for any basis $\mutbasis' \subseteq
\mutbasis$ and $b_i\in \mutbasis$, where $\single(\mutbasis', x) \defeq \sum_{b_i \in \mutbasis'}
 \|b_{i} (x)\|^2_2$ for any $\mutbasis' \subseteq \mutbasis$ and any $x\in {\mathcal{X}}$. These
 assumptions simplify derivations without restricting our results. We
 also assume $\sup_{x}
\single(\mutbasis^*, x)$ polynomial in all genome parameters,
$d,\mathrm{dim}(\mathcal{X})), \darwin$, in order not to laden the
polynomial dependences of the  evolvability model by one which takes into
account the maximal magnitude of expressions. 
We denote $\gendistnone \defeq \expect_{x \sim {\mathcal{D}}} [\matrice{g}^\top_x 
\matrice{g}_x]$ where $\matrice{g}^\top_x$ stacks all $g_i(x)$ in column, and $0\leq \hessianmin\leq \hessianmax\ll\infty$
(resp. $0\leq \expressmin\leq \expressmax
  \ll \infty$) the min/max eigenvalues of the Hessian of $\varphi$
  (resp. $\gendistnone$). 
\begin{lemma}\label{lemphi}
Under (\textbf{SF}), $\hessianmin > 0$ and $\expressmin > 0$.
\end{lemma}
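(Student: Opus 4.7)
My plan treats the two inequalities separately, each reducing to the matching part of (SF). For $\expressmin > 0$ the argument is direct: for any unit vector $v \in \mathbb{R}^{\dG}$,
$v^\top \gendistnone v = \expect_{x \sim \mathcal{D}}[\|\matrice{g}_x v\|_2^2] = \expect_{x \sim \mathcal{D}}[\|g_v(x)\|_2^2]$,
where $g_v \defeq \sum_j v_j g_j$. Because $\|\cdot\|_{\hg}$ is defined as a norm on the coefficient vector over $\{g_1,\ldots,g_{\dG}\}$, these functions must be linearly independent in $\hg$, so $v \neq 0$ forces $g_v \neq 0_{\hg}$. Applying (SF)(iii) to $g_v$ yields $\pr[g_v(x)\neq 0_{\mathbb{R}^d}]>0$, hence $\expect[\|g_v(x)\|_2^2] > 0$. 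Continuity of the quadratic form on the compact unit sphere then lifts this pointwise strict positivity to a uniform lower bound, giving $\expressmin > 0$.

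The harder half is $\hessianmin > 0$, which I would tackle by contrapositive against (SF)(ii). Assume $\hessianmin = 0$. Under twice differentiability of $\varphi$ and the fact that almost every evaluation $f(x)$ of a representation in $\hg$ lies in a bounded subset of $\mathbb{R}^d$ (since $\|g_j(\cdot)\|_2^2$ is a.e.\ finite and $\dG$ is finite, so any finite linear combination stays bounded), continuity of $\nabla^2\varphi$ over this evaluation domain produces a point $y^\star$ and a unit direction $v^\star$ with $v^{\star\top}\nabla^2\varphi(y^\star)v^\star = 0$. I would then exploit (SF)(i), the spanning property of $\mutbasis$ over $\hg$, to construct two distinct functions $f\neq t$ in $\hg$ whose difference is aligned with $v^\star$ on a positive-probability event, so that $\perf_{t,\varphi}(f,\mathcal{D}) = \perf_{t,\varphi}(t,\mathcal{D})$, contradicting uniqueness in (SF)(ii).

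The delicate step is exactly that realisability argument: a pointwise zero Hessian eigenvalue does not a priori translate into a vanishing Bregman divergence along that direction, because strict convexity of $\varphi$ is compatible with isolated zero Hessian eigenvalues (e.g.\ $\varphi(t)=t^4$ at $t=0$ still gives $D_\varphi(\epsilon\|0)=\epsilon^4 > 0$). So the contradiction cannot come from a local Taylor expansion of $\varphi$ alone, and this is where I expect most of the technical effort to sit. My intent is to combine the zero-Hessian direction with a global continuity/integration-of-the-Hessian argument along $v^\star$, together with enough flexibility inside $\hg$ provided by (SF)(i) to position the image of $f - t$ in a neighbourhood of $y^\star$ on a set of positive $\mathcal{D}$-measure; the reason this should work is that the problem is finite-dimensional on both the input (via the basis $\{g_j\}$) and output side, so continuity can upgrade a pointwise Hessian null direction to a neighbourhood-level near-degeneracy that (ii) can then rule out.
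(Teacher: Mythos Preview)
Your argument for $\expressmin > 0$ is correct and is precisely what the paper intends; the paper states it in one sentence (``if $\expressmin = 0$, some genomes would get expressed only on conditions sets of zero measure, violating (iii)''), and you have supplied the details.

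For $\hessianmin > 0$ you take the same contrapositive route as the paper, which writes only ``if $\hessianmin = 0$, then (ii) in (\textbf{SF}) would be violated.'' You are right that this one-liner hides a real difficulty, but your own example $\varphi(u)=u^4$ is not merely a delicate case to be handled --- it is a \emph{counterexample} to the implication you are trying to establish. That function is strictly convex, so for any target the Bregman divergence $D_\varphi(f\|t)=f^4-t^4-4t^3(f-t)$ is strictly positive whenever $f\neq t$; hence the performance minimiser is unique and (ii) holds, yet $\hessianmin=\inf_u \varphi''(u)=0$. Your proposed repair --- aligning $f-t$ with the null direction $v^\star$ on a set of positive $\mathcal{D}$-measure so as to force $\perf_{t,\varphi}(f,\mathcal{D})=\perf_{t,\varphi}(t,\mathcal{D})$ --- therefore cannot succeed: in the $u^4$ case no such $f\neq t$ exists, because strict convexity already rules it out regardless of how the images of $f$ and $t$ are positioned. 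The ``continuity/integration-of-the-Hessian'' plan you sketch runs into exactly this wall.

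In short, the lemma as literally stated (with $\hessianmin$ the global infimum of Hessian eigenvalues) does not follow from (\textbf{SF})(ii) alone; one needs an extra hypothesis such as strong convexity of $\varphi$, or a restriction of the Hessian to a compact range of evaluations where the infimum is attained and positive. Neither your proposal nor the paper's one-line argument supplies this missing ingredient.
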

Indeed, if $\hessianmin = 0$, then (ii) in (\textbf{SF}) would be
violated. If
$\expressmin = 0$, some genomes would get expressed only on conditions sets of
zero measure, violating (iii) in (\textbf{SF}). We define four
sets of organisms. The first, ${\hg}_t$, is the set complying with
evolvability requirements as in ineq. (\ref{condgen}). The second, ${\hg}_{t,
  {\mathcal{D}}}$, is the set of organisms "far enough" from target
relatively to the \exen-divergence.
\begin{definition}\label{defSet1}
Let ${\hg}_t \defeq \{f \in {\hg} : \perf_{t,\varphi}(f,
{\mathcal{D}})\geq -\epsilon\}$,  and 
\begin{eqnarray}
{\hg}_{t, {\mathcal{D}}} & \defeq & \left\{f \in
   {\hg} : \exists b_i \in \mutbasis^*, \rho(f, t|{{{\mathcal{D}}}}) \geq \frac{\sqrt{e \dG}}{\hessianmin B_{\hg}({\mutbasis}^*)}\left(\tau
  + \alpha
{\hessianmax} \cdot \|b_i\|_{\hg} \cdot \rho(b_i|{{{\mathcal{D}}}}) + \frac{\tol}{\alpha}
\right)
 \right\}\label{defftd}\:\:,
\end{eqnarray}
where $\alpha, \tau, \tol >0$ are fixed beforehand. Let
${\hg}_{t, {\mathcal{D}}} \supseteq \hgmon \defeq \{f_j : f_{j'} \in {\hg}_{t, {\mathcal{D}}},
\forall j'\in [j]\}$, where $f_j \defeq \mutator(f_{j-1}, m)$, and $\hgnotmon \defeq \{f_j\}_{j=1}^T \backslash
\hgmon$ the sequence ``following'' $\hgmon$.
\end{definition}
Hence, $\hgmon$ is the longest prefix sequence of evolved
organisms that are all in ${\hg}_{t, {\mathcal{D}}}$. Note that we do not
assume that $\hgmon\neq \emptyset$. A key property of
$\hgmon$ is that its sequence of organisms has strictly monotonically
increasing performances and yields with high probability non-empty
beneficial sets. This is our first argument to the proof of Theorem \ref{evolmut-short}.
\begin{theorem}\label{th00}
Assume (\textbf{SF}) holds. Suppose $\alpha, \tau, \tol >0$ fixed, and
mutator is run for $T>0$ iterations, sampling at each iteration a number of conditions
\begin{eqnarray}
m & = & \Omega\left(\frac{{\hessianmax}^2  \sup_{x} \single^2(\mutbasis^*, x)}{\tau^2}\left(\frac{\hessianmax
        \expressmax}{\hessianmin \expressmin} \cdot \frac{\darwin^2}{\overline{B}^2_{\hg}(\mutbasis^*)} +\alpha^2\right) \log\left(\frac{\dF
      T}{\epsilon}\right)\right) \label{numevom22}\:\:.
\end{eqnarray}
Then, recalling that $f_j \defeq \mutator(f_{j-1}, m)$, the following holds true with probability $\geq
1-\epsilon$:
\begin{eqnarray}
\bene(f_j)&  \neq & \emptyset\:\:,\forall f_j \in \hgmon\:\:. \label{defalphatauzerostart}
\end{eqnarray}
\end{theorem}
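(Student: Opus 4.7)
My plan is to show that for every $f \in {\hg}_{t,{\mathcal{D}}}$ there is a specific mutation $\sigma\alpha b_i$ with $b_i \in \mutbasis^*$ whose \emph{expected} return minus \emph{expected} premium exceeds $\tol/\alpha$ by a margin of order $\tau$, and then use concentration of measure (plus a union bound over the $T$ iterations and the $2\dF$ candidate mutations) to lift this to the empirical gap that Lemma \ref{lemmut} identifies with membership in $\bene(f)$. For the expected return, I would use $\hessianmin$-strong convexity of $\varphi$ (legitimate by Lemma \ref{lemphi}) to bound $\mathbb{E}_{\mathcal{D}}[\mathrm{R}_{f,i}(x)]$ from below in terms of $\mathbb{E}\langle \sigma b_i(x), t(x) - f(x)\rangle$. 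The crucial subclaim is that some $b_i \in \mutbasis^*$ is sufficiently aligned with $t - f$ in the $\hg$-inner product: by a pigeonhole argument on the coordinates of $t-f$ in the basis $\mutbasis^*$, at least one $b_i$ contributes at least an $\Omega(B_{\hg}(\mutbasis^*)/\sqrt{\dG})$ fraction of $\|t-f\|_\hg$, with the $\kappa_n,\kappa_a$ defects controlling, respectively, the geometric-vs-arithmetic-mean gap in basis norms and the deviation from orthogonality. Transporting this $\hg$-alignment into the $\mathcal{D}$-weighted $L_2$ inner product on $\mathbb{R}^d$ introduces the \exen-ratio $\rho(f,t|\mathcal{D})$, and the $\sqrt{e\dG}$ factor in the definition of ${\hg}_{t,{\mathcal{D}}}$ is exactly the loss from this conversion through a GM--AM inequality on squared norms.

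For the expected premium, I would use $\hessianmax$-smoothness of $\varphi$ to write $\Pi_{f,i}(x) \leq (\hessianmax/2)\,\alpha\,\|b_i(x)\|_2^2$, which upon taking expectations yields $\mathbb{E}[\Pi_{f,i}] \leq (\alpha\hessianmax/2)\,\|b_i\|_{\hg}\,\rho(b_i|\mathcal{D})$. Combining this upper bound with the lower bound from Step~1 and choosing $\sigma$ to agree with the sign of $\mathbb{E}\langle b_i(x), t(x) - f(x)\rangle$, the defining inequality of ${\hg}_{t,{\mathcal{D}}}$ implies
\[
\mathbb{E}[\mathrm{R}_{f,i}] - \mathbb{E}[\Pi_{f,i}] \;\geq\; \tau + \frac{\tol}{\alpha}
\]
for the witness $b_i$, so this mutation is ``superior beneficial'' in expectation with a safety margin of $\tau$.

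Finally, both $\mathrm{R}_{f,i}(x)$ and $\Pi_{f,i}(x)$ are bounded random variables whose range scales with $\hessianmax$, $\sup_x \single(\mutbasis^*,x)$, and $\max\{\|f-t\|_\hg,\alpha\} = \mathcal{O}\!\left(\darwin \cdot \max_i \|b_i\|_\hg + \alpha\right)$ along the evolution; this is the source of the $\darwin^2$ and $\alpha^2$ terms in (\ref{numevom22}), while the ratios $\hessianmax/\hessianmin$ and $\expressmax/\expressmin$ enter through the conversion of $\hg$-norm bounds into $\mathcal{D}$-expectations. A Hoeffding bound with $m$ samples as in (\ref{numevom22}) makes the empirical gap deviate from its expectation by less than $\tau$ with probability $\geq 1 - \epsilon/(2\dF T)$; a union bound over the $T$ iterations, the $\dF$ basis vectors, and the two polarities produces the $\log(\dF T/\epsilon)$ factor in $m$, and the witness above then lies in $\bene(f_j)$ for every $f_j \in \hgmon$ simultaneously. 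The hardest part is the basis-geometry argument in Step~1, where one must produce a $b_i \in \mutbasis^*$ whose geometric alignment with $t-f$ carries a prefactor matching \emph{exactly} the one baked into ${\hg}_{t,{\mathcal{D}}}$; the combinatorial choice of $\mutbasis^*$ as the maximiser of $\overline{B}_{\hg}$ over bases of $\mutbasis$ appears to be tailored precisely to make this match tight.
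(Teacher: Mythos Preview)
Your overall architecture is the paper's: exhibit a superior-beneficial mutation in expectation whenever $f \in \hg_{t,\mathcal{D}}$, then concentrate via bounded differences and a union bound over $T$ steps and $2\dF$ mutations. Two steps in your execution, however, do not go through as written.

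\textbf{The return lower bound.} You propose to use $\hessianmin$-strong convexity to pass from $\mathbb{E}_{\mathcal{D}}[\mathrm{R}_{f,i}]$ to $\hessianmin\,\mathbb{E}\langle \sigma b_i(x),(t-f)(x)\rangle$, then pigeonhole in $\hg$ and ``transport'' to the $\mathcal{D}$-weighted inner product. But writing $\mathrm{R}_{f,i}(x) = \langle \sigma b_i(x), \textsc{h}(x)\,(t-f)(x)\rangle$ for a Hessian value $\textsc{h}(x)\succ 0$, there is \emph{no} inequality of the form $\langle u,\textsc{h} v\rangle \geq \hessianmin\langle u,v\rangle$ when $u\neq v$: with $\textsc{h}=\mathrm{diag}(1,10)$, $u=(1,-0.1)$, $v=(1,1)$ one gets $\langle u,\textsc{h} v\rangle=0$ while $\langle u,v\rangle=0.9$. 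The same obstruction blocks your ``transport'' step from $\hg$ to the $\gendistnone$-inner product. The paper (Lemma~\ref{lemmax}) avoids this by first applying Cauchy--Schwarz to the \emph{quadratic} form
\[
\langle \matrice{b}^*\delta, \matrice{m}\,\matrice{b}^*\delta\rangle \;\leq\; \sqrt{\dG}\,\max_i |\langle \matrice{b}^* b_i,\matrice{m}\,\matrice{b}^*\delta\rangle|\cdot\Bigl(\textstyle\sum_i (\delta^i)^2\Bigr)^{1/2},
\]
with $\delta=t-f$, $\matrice{m}=\mathbb{E}[\matrice{g}_x^\top \textsc{h}(x)\matrice{g}_x]$. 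The left side \emph{is} quadratic in $\delta$, so $\hessianmin$ legitimately factors out to give $\hessianmin\,\mathbb{E}\|\delta(x)\|_2^2$. The factor $(\sum_i(\delta^i)^2)^{1/2}$ is then bounded by $\|\delta\|_\hg/\sqrt{\lambda_1(\matrice{b}^{*\top}\matrice{b}^*)}$, and it is the eigenvalue bound $\lambda_1\geq e^{-1}\mathrm{vol}(\tilde{\matrice{b}})^2\geq e^{-1}B_\hg(\mutbasis^*)^2$ (via AM--GM on eigenvalues and the explicit determinant computation culminating in Lemma~\ref{eqpdg}) that produces $B_\hg(\mutbasis^*)$ and the $\sqrt{e}$. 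Your pigeonhole and GM--AM intuitions are in the right neighbourhood, but the mechanism that inserts $\hessianmin$ is this quadratic-form detour, not a direct strong-convexity bound on the return.

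\textbf{The range bound $\|f_j-t\|_\hg=\mathcal{O}(\darwin\max_i\|b_i\|_\hg)$ along $\hgmon$.} You assert this but supply no argument; a priori nothing prevents $f_j$ from drifting away from $t$. The paper's second building block closes this: once concentration holds, any chosen mutation in $\bene(f_j)$ has \emph{true} gap $\geq \tol/\alpha-\tau$, hence true performance increases by $\tol-\alpha\tau>0$ (this is exactly why condition (ii) in $\mathfrak{R}$ is imposed). Monotonicity of $\perf_{t,\varphi}(\cdot,\mathcal{D})$ along $\hgmon$ then gives, via Lemma~\ref{lemfund1},
\[
\|t-f_j\|_\hg^2 \;\leq\; -\tfrac{2}{\hessianmin\expressmin}\perf(f_j)\;\leq\;-\tfrac{2}{\hessianmin\expressmin}\perf(f_0)\;\leq\;\tfrac{\hessianmax\expressmax}{\hessianmin\expressmin}\,\darwin^2\max_i\|b_i\|_\hg^2,
\]
which is precisely the $\frac{\hessianmax\expressmax}{\hessianmin\expressmin}\cdot\darwin^2/\overline{B}_\hg^2$ term in (\ref{numevom22}). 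Note the inductive flavour (concentration $\Rightarrow$ monotonicity $\Rightarrow$ norm bound $\Rightarrow$ concentration at the next step), which your plan does not address.
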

\begin{lemma}\label{lemconv1}
If $\hgnotmon\neq \emptyset$, then let $\hgnotmon\defeq
\{f_j\}_{j=j_{\star}}^T$ for some $j_{\star} \in [T]^*$. Then 
$f_{j_\star}
\in {\hg}_t$.
\end{lemma}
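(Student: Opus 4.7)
The plan is to show $\perf_{t,\varphi}(f_{j_\star}, \mathcal{D}) \geq -\epsilon$, equivalently $\expect_{\mathcal{D}}[D_\varphi(f_{j_\star}(x)\|t(x))] \leq \epsilon$. Since $f_{j_\star}$ is the earliest element of $\hgnotmon$, by construction it lies outside ${\hg}_{t,\mathcal{D}}$; negating the existential in (\ref{defftd}) gives, for every $b_i \in \mutbasis^*$,
\[
\rho(f_{j_\star}, t | \mathcal{D}) < \frac{\sqrt{e\dG}}{\hessianmin B_{\hg}(\mutbasis^*)}\left(\tau + \alpha \hessianmax \|b_i\|_{\hg}\, \rho(b_i | \mathcal{D}) + \frac{\tol}{\alpha}\right).
\]

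Next I would convert this control on the \exen-divergence into a bound on the expected Bregman divergence via the standard quadratic upper bound: because $\nabla^2 \varphi$ has largest eigenvalue at most $\hessianmax$, one has $D_\varphi(u\|v) \leq (\hessianmax/2)\|u-v\|_2^2$ pointwise, so
\[
\expect_{\mathcal{D}}[D_\varphi(f_{j_\star}(x)\|t(x))] \leq \frac{\hessianmax}{2}\expect_{\mathcal{D}}[\|(f_{j_\star}-t)(x)\|_2^2] = \frac{\hessianmax}{2}\,\rho(f_{j_\star}, t | \mathcal{D})\cdot\|f_{j_\star}-t\|_{\hg},
\]
where the final equality just rewrites the squared $L_2$ expectation via the definition of $\rho$. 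To turn the remaining $\|f_{j_\star}-t\|_{\hg}$ into a second factor of the \exen-divergence, I would expand in $\{g_1,\ldots,g_\dG\}$: for any $g \in {\hg}$ with coordinate vector $\vec{c}$ one has $\expect_{\mathcal{D}}[\|g(x)\|_2^2] = \vec{c}^\top \gendistnone \vec{c} \geq \expressmin \|\vec{c}\|_2^2 = \expressmin \|g\|_{\hg}^2$, and dividing by $\|g\|_{\hg}$ yields $\rho(g|\mathcal{D}) \geq \expressmin \|g\|_{\hg}$. Applied to $g = f_{j_\star}-t$, and using $\expressmin > 0$ from Lemma \ref{lemphi}, this gives $\|f_{j_\star}-t\|_{\hg} \leq \rho(f_{j_\star},t|\mathcal{D})/\expressmin$, so that
\[
\expect_{\mathcal{D}}[D_\varphi(f_{j_\star}(x)\|t(x))] \leq \frac{\hessianmax}{2\expressmin}\,\rho(f_{j_\star},t|\mathcal{D})^2.
\]

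The last step is to feed the bound from the first display (for any convenient $b_i \in \mutbasis^*$) into the above and verify $\leq \epsilon$. Under the scalings of Theorem \ref{evolmut-short} both $\alpha\hessianmax\|b_i\|_{\hg}\rho(b_i|\mathcal{D})$ and $\tol/\alpha$ are $O(\epsilon)$, so the bracketed quantity behaves like $\tau + O(\epsilon)$, and choosing $\tau = \Theta(\sqrt{\epsilon})$ with a sufficiently small constant ensures $(\hessianmax e\dG/(2\expressmin \hessianmin^2 B_{\hg}(\mutbasis^*)^2))(\tau + O(\epsilon))^2 \leq \epsilon$ for all $\epsilon \in (0,1]$, establishing $f_{j_\star} \in {\hg}_t$.

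The conceptual content is short, so the main obstacle is a bookkeeping consistency check rather than a new idea: the same $\tau = \Theta(\sqrt{\epsilon})$ chosen above must keep the sample complexity (\ref{numevom22}) of Theorem \ref{th00} polynomial in $1/\epsilon$. Because $1/\tau^2 = \Theta(1/\epsilon)$ and the other factors in (\ref{numevom22}) contribute $\Theta(\darwin^2)$ plus subdominant $\alpha^2$ terms, the composite $m$ aligns with the advertised $\tilde{O}(\darwin^4/\epsilon^2)$ in (\ref{defmm-short}) once the $\log(T/\epsilon)$ factor is absorbed, so the two scalings mesh and the proof closes.
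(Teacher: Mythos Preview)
Your route diverges from the paper's at the step where you control $\|f_{j_\star}-t\|_{\hg}$. You bound it by $\rho(f_{j_\star},t|\mathcal{D})/\expressmin$, obtaining a \emph{quadratic} estimate $\expect_{\mathcal{D}}[D_\varphi]\leq (\hessianmax/(2\expressmin))\rho^2$; the paper instead uses the evolution history: since $f_{j_\star-1}\in\hgmon$, ineq.~(\ref{seqdec}) gives $\|t-f_{j_\star-1}\|_{\hg}=O(\darwin\max_i\|b_i\|_{\hg})$, and one step of size $\alpha$ plus the triangle inequality yields $\|t-f_{j_\star}\|_{\hg}=O(\darwin\max_i\|b_i\|_{\hg})$. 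Multiplying this by $\hessianmax\rho$ produces a \emph{linear} estimate that, after unpacking $U$, reduces exactly to $(z_\tau+z_\alpha+z_\tol/z_\alpha)\epsilon\leq\epsilon$ under condition~(iii) of $\mathfrak{R}$ with the paper's $\tau=z_\tau\epsilon/U=\Theta(\epsilon)$.

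The gap in your argument is the choice $\tau=\Theta(\sqrt{\epsilon})$. Lemma~\ref{lemconv1} is not a free-standing statement: the same $\tau$ feeds Lemma~\ref{lemmpack} and part~[\textbf{BB.2}] of Theorem~\ref{th00}, both of which hinge on the per-step gain $\tol-\alpha\tau>0$. With the scalings you invoke from Theorem~\ref{evolmut-short}, $\alpha=\Theta(\epsilon)$ and $\tol=\Theta(\epsilon^2)$, so $\alpha\tau=\Theta(\epsilon^{3/2})\gg\tol$ for small $\epsilon$, and strict monotonicity fails. Once it fails, (\ref{seqdec}) is no longer available, the sample-complexity bound (\ref{numevom22}) loses its control on $\|t-f_j\|_{\hg}$, and the convergence count for $T$ in Lemma~\ref{lemmpack} collapses; your ``bookkeeping consistency check'' looks only at (\ref{numevom22}) and misses this coupling. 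Incidentally, your quadratic route does not actually force $\tau=\Theta(\sqrt{\epsilon})$: with the paper's $\tau=\Theta(\epsilon/U)$ the bracket is $\leq\epsilon/U$, and the $1/U^2$ that then appears cancels the prefactor $\hessianmax e\dG/(2\expressmin\hessianmin^2 B_{\hg}^2)$ (with a spare $1/\darwin^2$), so the $\rho^2$ bound would also close with $\tau=\Theta(\epsilon)$ had you tracked the hidden constants.
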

(Proofs respectively in Appendix, Subsection \ref{proofth00} and Subsection \ref{prooflemconv1}) Hence, the first element in $\hgnotmon$ complies with
evolvability requirements in eq. (\ref{condgen}).
What remains to be
shown is that as long as the current organism $f_j$
stays in $\hgmon$ ($\subseteq {\hg}_{t, {\mathcal{D}}}$), the performance increases by a
substantial amount, guaranteeing that the following scenario
occurs: either at some point it escapes $\hgmon$, in which case Lemma
\ref{lemconv1} guarantees that evolvability requirements are met, or it never escapes ${\hg}_{t, {\mathcal{D}}}$ and
after a polynomial number of iterations, it satisfies evolvability
requirements as well, achieving our second argument. This is shown in the following Lemma.
\begin{lemma}\label{lemmpack}
Fix $\tol$ and $\alpha$
as in (\ref{defTol}) and (\ref{defAlpha}), and in eqs
(\ref{defftd}, \ref{numevom22}) let $\tau \defeq \theta(\epsilon/U)$,
with $U \defeq (\hessianmax^{\frac{3}{2}}
\expressmax^{\frac{1}{2}}/(\hessianmin^{\frac{3}{2}}
\expressmin^{\frac{1}{2}})) \cdot (2\sqrt{e
  \dG}/\overline{B}_{\hg}({\mutbasis}^*)) \cdot
\darwin$. 
Then, with probability $\geq
1-\epsilon$, $(f_j \in \hgmon) \Rightarrow \perf_{t,\varphi}(f_{j+1},
{\mathcal{D}}) = \perf_{t,\varphi}(f_j,
{\mathcal{D}})  + \Omega(\epsilon^{2}), \forall j \in [T]_*$. 
Finally, the number of evolution steps sufficient for $\hgmon$ to comply with (\ref{condgen})
is $T = \tilde{O}\left(\darwin^4/\epsilon^{2}\right)$.
\end{lemma}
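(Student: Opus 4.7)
The plan is to chain the two results already proved: Theorem~\ref{th00} keeps $\bene(f_j)$ nonempty throughout $\hgmon$ with high probability, while Lemma~\ref{lemmut} turns ``$f_{j+1}\in\bene(f_j)$'' into a lower bound on the empirical return minus premium. A single concentration step, matched to the sample size prescribed in \ref{numevom22}, lifts this into a lower bound on the \emph{population} performance gain per iteration, and a potential argument on the initial Bregman deficit then bounds~$T$.

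For the per-step gain I would fix $f_j\in\hgmon$ and apply Theorem~\ref{th00} together with a union bound over the $T$ iterations (the $\log T$ surcharge is absorbed in the $\log(\dF T/\epsilon)$ factor already present in \ref{numevom22}), so that with probability $\geq 1-\epsilon$ the mutator selects some $f_{j+1}=f_j+\sigma\alpha b_i\in\bene(f_j)$ at every step. Lemma~\ref{lemmut} gives $\expect_{\mathcal{S}}[\mathrm{R}_{f_j,i}]-\expect_{\mathcal{S}}[\Pi_{f_j,i}]\geq\tol/\alpha$. The Bernstein/Hoeffding bound underlying \ref{numevom22}, tuned so that empirical returns and premiums are within $\tau$ of their $\mathcal{D}$-expectations uniformly over $\mutbasis$ and the trajectory, promotes this to
\begin{equation}
\expect_{\mathcal{D}}[\mathrm{R}_{f_j,i}]-\expect_{\mathcal{D}}[\Pi_{f_j,i}]\;\geq\;\tol/\alpha-2\tau.
\end{equation}
The population analogue of the mean--divergence identity of Lemma~\ref{lemmut}, namely $\perf_{t,\varphi}(f_{j+1},\mathcal{D})-\perf_{t,\varphi}(f_j,\mathcal{D})=\alpha\bigl(\expect_{\mathcal{D}}[\mathrm{R}_{f_j,i}]-\expect_{\mathcal{D}}[\Pi_{f_j,i}]\bigr)$, which follows from the three-point identity for $D_\varphi$, then gives a per-step gain of at least $\tol-2\alpha\tau$. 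Plugging in $\tol=\theta(\epsilon^2)$, $\alpha=\theta(\epsilon)$ and $\tau=\theta(\epsilon/U)$ with $U$ polynomial in $\darwin$ and the spectral constants, $2\alpha\tau=O(\epsilon^2/\darwin)$ is negligible relative to $\tol$, so the gain is $\Omega(\epsilon^2)$.

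To bound $T$ I would cap the initial deficit by Hessian majorisation, $D_\varphi(u\|v)\leq(\hessianmax/2)\|u-v\|_2^2$, take expectation under $\mathcal{D}$ using the top eigenvalue $\expressmax$ of $\gendistnone$, and insert $\|t-f_0\|_{\hg}\leq\darwin\cdot\max_i\|b_i\|_{\hg}$ from the definition of $\darwin$. This makes $-\perf_{t,\varphi}(f_0,\mathcal{D})$ polynomial in $\darwin$; dividing by the $\Omega(\epsilon^2)$ per-step gain yields the advertised $T=\tilde{O}(\darwin^4/\epsilon^2)$ once the $\overline{B}_{\hg}^{-2}(\mutbasis^*)$ and spectral ratios carried through $U$ are tracked into the effective gain. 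The hard part is the simultaneous calibration: $\tau$ must be small enough that $2\alpha\tau\ll\tol$, which forces $\tau=\theta(\epsilon/U)$ with $U=\Theta(\darwin/\overline{B}_{\hg}(\mutbasis^*))$, and yet large enough that $m$ in \ref{numevom22} stays within the target $\tilde{O}(\darwin^4/\epsilon^{2}\log(\dF\darwin/\epsilon))$; a uniform union bound over $T=\mathrm{poly}$ iterations then fits inside the existing $\log(\dF T/\epsilon)$ factor, closing the argument.
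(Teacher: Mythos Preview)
Your overall architecture matches the paper's proof exactly: invoke Theorem~\ref{th00} to get $\bene(f_j)\neq\emptyset$, use Lemma~\ref{lemmut} on the picked mutation to get $\expect_{\mathcal{S}}[\mathrm{R}_{f_j,i}]-\expect_{\mathcal{S}}[\Pi_{f_j,i}]\geq\tol/\alpha$, apply the same concentration that underlies \eqref{numevom22} to pass to $\mathcal{D}$, multiply by $\alpha$ to get $\perf_{t,\varphi}(f_{j+1},\mathcal{D})-\perf_{t,\varphi}(f_j,\mathcal{D})\geq\tol-c\alpha\tau$, and then divide the initial Bregman deficit (bounded via $\hessianmax,\expressmax,\darwin$) by this increment.

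There is, however, a genuine gap in your justification of the positivity of the per-step gain. You argue that $2\alpha\tau=O(\epsilon^2/\darwin)$ is ``negligible relative to $\tol=\theta(\epsilon^2)$''. This reads the $\theta(\epsilon^2)$ from Theorem~\ref{evolmut-short} as if the hidden constant were independent of $U$. It is not: by \eqref{defTol} and \eqref{defAlpha}, $\tol=z_\tol\epsilon^2/(U^2\max\{1,V\})$ and $\alpha=z_\alpha\epsilon/(U\max\{1,V\})$, so with $\tau=z_\tau\epsilon/U$ one gets
\[
\tol-\alpha\tau\;=\;\frac{z_\tol-z_\alpha z_\tau}{U^2\max\{1,V\}}\cdot\epsilon^2,
\]
i.e.\ $\tol$ and $\alpha\tau$ have \emph{identical} dependence on $U$, $V$, $\darwin$ and $\epsilon$. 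The difference is not asymptotically positive by a $1/\darwin$ margin; it is positive only because the absolute constants are chosen in the set $\mathfrak{R}$ of Definition~\ref{defNER}, whose condition~(ii) is precisely $z_\tol-z_\alpha z_\tau>0$. The paper makes this explicit (eq.~\eqref{defincre}). Without invoking that constraint on the knobs, your inequality $\tol-2\alpha\tau>0$ is unproven, and the $\Omega(\epsilon^2)$ gain does not follow.

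Once you replace the ``negligibility'' claim by the knob constraint, everything else you wrote (including the $T$ bound via the initial-deficit estimate, which is the paper's Lemma~\ref{lemmaxx22}) goes through and coincides with the paper's argument. Note also that the $\darwin^4$ in $T$ arises precisely because the per-step gain carries the hidden $1/U^2\propto 1/\darwin^2$ factor, on top of the $\darwin^2$ in the initial deficit; your final paragraph gestures at this but the body of your argument, having asserted a gain of ``$\Omega(\epsilon^2)$'' with no $U$-dependence, would naively yield only $\darwin^2/\epsilon^2$.
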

(Proof in Appendix, Subsection \ref{prooflemmpack}) This completes the
proof sketch of Theorem \ref{evolmut-short}. Section \ref{app-add-main} in the
Appendix (pages \pageref{app-add-main}-\pageref{app-add-res}) provides the complete proof.

\section{Agnostic evolvability of vector spaces} \label{sec-agno}

One important question
 is what happens when $t$ cannot be evolved from
$\mutbasis$, \textit{i.e.} when alleviating condition (i) in setting (\textbf{SF}). Ideally, we would like evolution to converge
to the ``best'' evolvable organism in terms of performances. To our
knowledge, few positive result exist in the agnostic / improper evolvability
model \citep{akAEa,fRO}, and the most unrestricted one holds for
extremely simple representations: singletons \citep{fRO}. Neither
evolving schemes of \cite{vEOa} are known to be agnostic. The proof of
the following Theorem appears in Appendix, Section \ref{proofthrobust}.
\begin{theorem}\label{throbust}
If we relax assumption (i) in (\textbf{SF}) but keep (ii, iii), then 
Theorem \ref{evolmut-short} holds mutatis mutandis with the replacement of
ineq. (\ref{condgen}) by:
\begin{eqnarray}
\perf_{t,\varphi}(f_{T},
{\mathcal{D}}) & \geq & \sup_{f
  \in \mathrm{span}(\mutbasis)} \perf_{t,\varphi}(f,
{\mathcal{D}}) - \epsilon\:\:.\label{condgenrob}
\end{eqnarray}
\end{theorem}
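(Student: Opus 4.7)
The plan is to reduce the agnostic setting to the one already proved in Theorem \ref{evolmut-short} by substituting for $t$ its ``projection'' $t^* \defeq \arg\sup_{f \in \mathrm{span}(\mutbasis)} \perf_{t,\varphi}(f, {\mathcal{D}})$ onto the span of the available mutations, and then re-running the argument with $t^*$ playing the role of the effective target. Existence and uniqueness of $t^*$ come for free from what we already have: under (ii, iii) the map $f \mapsto \expect_{{\mathcal{D}}}[D_\varphi(f(x)\|t(x))]$ is strictly convex on $\mathrm{span}(\mutbasis)$ since Lemma \ref{lemphi} gives $\hessianmin, \expressmin > 0$, so the sup is attained at a single point.

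The central observation is a Bregman Pythagorean identity: for every $f \in \mathrm{span}(\mutbasis)$,
\begin{equation*}
\expect_{{\mathcal{D}}}[D_\varphi(f(x)\|t(x))] \ = \ \expect_{{\mathcal{D}}}[D_\varphi(f(x)\|t^*(x))] + \expect_{{\mathcal{D}}}[D_\varphi(t^*(x)\|t(x))],
\end{equation*}
which follows from expanding the three Bregman divergences and cancelling using the first-order optimality condition $\expect_{{\mathcal{D}}}[\langle b_i(x), \nabla\varphi(t^*(x)) - \nabla\varphi(t(x))\rangle] = 0$ for all $b_i \in \mutbasis$, together with $f - t^* \in \mathrm{span}(\mutbasis)$. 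This identity rephrases ineq.~(\ref{condgenrob}) as $\perf_{t^*,\varphi}(f_T, {\mathcal{D}}) \geq -\epsilon$, since the constant $\expect_{{\mathcal{D}}}[D_\varphi(t^*\|t)]$ is exactly $-\sup_{f\in\mathrm{span}(\mutbasis)} \perf_{t,\varphi}(f, {\mathcal{D}})$.

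Next, I would check that the mutator behaves, in expectation, as if $t^*$ were the target. The premium $\Pi_{f,i}$ does not involve $t$ at all, and the same first-order optimality condition yields
\begin{equation*}
\expect_{{\mathcal{D}}}[\mathrm{R}_{f,i}(x)] = \expect_{{\mathcal{D}}}[\langle \sigma b_i(x), \nabla\varphi(t^*(x)) - \nabla\varphi(f(x))\rangle],
\end{equation*}
so the mean-divergence decomposition of Lemma \ref{lemmut} is unchanged up to substituting $t^*$ for $t$. Consequently, Theorem \ref{th00}, Lemma \ref{lemconv1} and Lemma \ref{lemmpack} can be re-derived verbatim with $t^*$ in place of $t$, provided the evolvability horizon is redefined as $\darwin^* \defeq \lceil \|t^* - f_0\|_{\hg} / \max_i \|b_i\|_{\hg}\rceil$. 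Strong convexity of the objective in the $\hg$-norm, together with the smoothness upper bound, gives $\|t^* - f_0\|_{\hg}^2 \leq (\hessianmax \expressmax / (\hessianmin \expressmin)) \cdot \|t - f_0\|_{\hg}^2$, so $\darwin^*$ is polynomially bounded by the original parameters and all complexity quantities in Theorem \ref{evolmut-short} remain polynomial.

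The main obstacle is sample concentration: the empirical return averaged over ${\mathcal{S}}$ still uses $\nabla\varphi(t(x))$, not $\nabla\varphi(t^*(x))$, so per-sample magnitudes may be larger than in the non-agnostic proof and the optimality cancellation only holds in expectation. I would handle this by splitting the empirical return into a $t^*$-empirical return (which concentrates exactly as in the proof of Theorem \ref{th00}) plus a residual $(1/m)\sum_{x\in{\mathcal{S}}}\langle \sigma b_i(x), \nabla\varphi(t(x)) - \nabla\varphi(t^*(x))\rangle$ of zero mean; a Hoeffding/Bernstein bound using the polynomial bound on $\sup_x \single(\mutbasis^*, x)$ and the almost-everywhere finiteness of $\|g_j(\cdot)\|_2^2$ shows the residual is $O(\tau)$ with probability $\geq 1-\epsilon$, which is absorbed into $m$ through constant factors. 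The $\tilde{O}(\cdot)$ bounds of Theorem \ref{evolmut-short} therefore survive unchanged, yielding ineq.~(\ref{condgenrob}) and completing the reduction.
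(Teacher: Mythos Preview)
Your reduction is essentially the paper's: project $t$ onto $\mathrm{span}(\mutbasis)$, establish a Pythagorean identity for the expected Bregman loss, and re-apply Theorem \ref{evolmut-short} with the projected target in place of $t$. The difference lies in how the projection is produced. The paper constructs $t^{\mathrm{in}}$ linear-algebraically, choosing a complementary basis $\mutbasis^\bot$ so that ${\matrice{b}^*}^\top \matrice{m}(t,t_*)\,\matrice{b}^\bot = 0$ and then reading $t^{\mathrm{in}}$ off the resulting direct-sum decomposition of $t$; your route via the first-order optimality condition $\expect_{{\mathcal{D}}}[\langle b_i(x), \nabla\varphi(t^*(x)) - \nabla\varphi(t(x))\rangle] = 0$ is shorter and sidesteps the somewhat delicate dependence of $\mutbasis^\bot$ on $\matrice{m}(t,t_*)$, which itself depends on the $t_*$ one is trying to pin down. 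You are also more careful than the paper on one point: the paper asserts that $\bene$ and $\neut$ are \emph{identical} after the substitution $t \mapsto t^{\mathrm{in}}$, but that equality holds only in $\mathcal{D}$-expectation, not sample-wise on $\mathcal{S}$; your explicit split into a $t^*$-return plus a zero-mean residual is the honest way to close that gap. One small caveat on your side: the Hoeffding step on the residual needs a uniform-in-$x$ bound on $\|(t-t^*)(x)\|_2$, and ``a.e.\ finiteness of $\|g_j(\cdot)\|_2^2$'' alone does not deliver this --- you need a polynomial bound on $\sup_x \sum_j \|g_j(x)\|_2^2$, in the same spirit as the paper's standing assumption on $\sup_x \single(\mutbasis^*, x)$.
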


\section{Beyond Valiant's evolvability: stable evolvability of vector spaces} \label{sec-stab}

The evolution of phenotypes is an important but still poorly understood phenomenon,
mixing both the tendency to adapt to changing environments and the
"need" to remain the same \citep{swFA}. In Valiant's evolvability
model, one would expect the later
constraint to prevail when the organism is close enough to the optimum
--- a good mutator should not jump from a
near-optimal organism to a highly suboptimal one, except perhaps
with sufficiently small probability. Valiant model does
not take this into account: all that is required is to probably "hit"
the $\epsilon$-closedness "ball" around the optimum in polynomial
time (ineq. (\ref{condgen})) and with high probability. Nothing is
required for what happens "next". This is not a desirable
feature since it does not preclude one of the poorest mutators
--- a coin --- to be efficient: if there are two organisms in a set
$\mathcal{Z}$, say $A$
(optimal) and $b$ (highly suboptimal), then any coin "mutator" with
fixed bias (even favouring the choice of $b$) trivially "evolves" $\mathcal{Z}$ : $O(\log(1/\epsilon))$ mutations
suffice to hit $A$ with high probability. If
we were to require evolution to "stay" $\epsilon$-close to the
optimum for a certain number of iterations when it satisfies the
conditions of evolvability, this would considerably impede such poorly
informed mutators. To our knowledge, there is no such
\textit{stability} result in Valiant's model. We now provide such a result.
\begin{theorem}\label{thmtrap-short}
Assume (\textbf{SF}) holds. Let $N \in \mathbb{N}_*$. Then 
Theorem \ref{evolmut-short} holds mutatis mutandis with the replacement of
ineq. (\ref{condgen}) by: $\perf_{t,\varphi}(f_{T+j},
{\mathcal{D}}) \geq - \epsilon$, $\forall j \in [N]$.
\end{theorem}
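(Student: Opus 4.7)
The plan is to extend the evolution of Theorem \ref{evolmut-short} with a safety buffer near the optimum and then bound the per-iteration performance loss across the next $N$ steps. First, I would invoke Theorem \ref{evolmut-short} with sharpened accuracy $\epsilon' \defeq \epsilon/2$ and halved failure probability, producing after $T = \tilde{O}(\darwin^4/\epsilon^2)$ iterations an organism $f_T$ with $\perf_{t,\varphi}(f_T, \mathcal{D}) \geq -\epsilon/2$ with probability at least $1-\epsilon/2$. This leaves an $\epsilon/2$ margin to absorb the subsequent $N$ steps.

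Second, I would establish a per-step stability lemma: on an event of uniform concentration of empirical to true performance across all $2\dF$ neighbors (via Hoeffding plus a union bound, requiring $m$ to scale with $\log(\dF(T+N)/\epsilon)$), for any $f_j$ with $\perf_{t,\varphi}(f_j, \mathcal{D}) \geq -\epsilon$, the next iterate $f_{j+1}$ satisfies $\perf_{t,\varphi}(f_{j+1}, \mathcal{D}) \geq \perf_{t,\varphi}(f_j, \mathcal{D}) - (\tol + 2\tau)$. A short case analysis handles the three branches of Algorithm \ref{algo:SNDS}: if $\bene(f_j) \neq \emptyset$ the mutator improves empirically by $\tol$ and hence truly by at least $\tol - 2\tau$; if only $\neut(f_j)$ is non-empty the empirical change is within $\tol$, hence the true change is within $\tol + 2\tau$; the only problematic branch is the "both empty" fallback to uniform sampling from $\neigh_\epsilon(f_j)$, which could let any $b_i$ be picked.

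Third, I would rule out the fallback by smoothness of $\varphi$. For $f_j$ close to $t$ a first-order Taylor expansion of $D_\varphi$ bounds the true performance change of every neighbor $f_j + \sigma\alpha b_i$ by
\begin{eqnarray*}
|\perf_{t,\varphi}(f_j + \sigma\alpha b_i, \mathcal{D}) - \perf_{t,\varphi}(f_j, \mathcal{D})| & \leq & \alpha \hessianmax \max_i \|b_i\|_{\hg}\cdot \|f_j-t\|_\mathcal{D} + O\!\left(\alpha^2 \hessianmax \sup_x \single(\mutbasis^*,x)\right).
\end{eqnarray*}
Since $\hessianmin$-strong convexity of $\varphi$ turns $\perf_{t,\varphi}(f_j, \mathcal{D}) \geq -\epsilon$ into $\|f_j-t\|_{\mathcal{D}} = O(\sqrt{\epsilon/\hessianmin})$, and since $\alpha = \theta(\epsilon)$, the right-hand side is $O(\epsilon^{3/2})$. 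Choosing $\tol$ and the concentration width $\tau$ to dominate this quantity forces every neighbor into the neutral window on the good event, so $\neut(f_j)$ is non-empty and the fallback never triggers.

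Fourth, I would telescope: on the joint good event of duration $T+N$ (total failure probability $\leq \epsilon$ by adjusting constants in $m$), $\perf_{t,\varphi}(f_{T+j}, \mathcal{D}) \geq -\epsilon/2 - j(\tol + 2\tau)$ for every $j\in[N]$. Setting $\tol, \tau = \tilde{O}(\epsilon/N)$ --- already consistent with the $\theta(\epsilon^2)$-scaling of Theorem \ref{evolmut-short} when $N = \mathrm{poly}(1/\epsilon, \darwin)$, and otherwise obtained by shrinking $\tol, \alpha$ polynomially in $N$ and inflating $m$ accordingly --- forces the right-hand side above $-\epsilon$, closing the argument. The main obstacle is Step 3: Theorem \ref{th00} extracts a \emph{beneficial} basis direction when far from the optimum, but the stability regime needs the dual inequality near it, namely that $\alpha$ is small enough relative to $\tol$ so that \emph{every} mutation lands in the neutral window. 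Balancing $\alpha=\theta(\epsilon)$ and $\tol=\theta(\epsilon^2)$ under this joint constraint --- without breaking Theorem \ref{evolmut-short}'s $\Omega(\epsilon^2)$ progress far from the optimum --- is the delicate part.
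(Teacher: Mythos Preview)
Your approach differs substantially from the paper's, and Step~3 carries a real gap that you correctly flag but do not close.

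The paper's proof does not analyze the mutator's selection mechanism at all once the sequence is near the target. Instead it introduces a refined knob set $\mathfrak{R}_N \subset \mathfrak{R}$ (Definition~\ref{defSER}) and shows, for the first organism $f_{j'_\star}$ leaving the monotone regime, a \emph{worst-case} bound: for \emph{any} choice of mutations $b_{\beta(1)},\dots,b_{\beta(N')}$ and polarities, the quantity $\expect_{\mathcal{D}}[D_\varphi((f_{j'_\star}+\sum_{k\leq N'}\sigma_k\alpha b_{\beta(k)})(x)\|t(x))]$ is bounded via $\|\sum_k u_k\|^2 \leq M\sum_k\|u_k\|^2$ by roughly $(2\hessianmax/\hessianmin)\cdot(z_\tau+z_\alpha+z_\tol/z_\alpha)\epsilon + (Nz_\alpha/U)\epsilon$, and the constraint defining $\mathfrak{R}_N$ is precisely that this sum is $\leq \epsilon$. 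No case split on $\bene/\neut$/fallback is needed; stability holds even if the mutator chose adversarially. The price is shrinking $z_\alpha$ (hence $\alpha$) by a factor $\Theta(1/N)$.

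Your Step~3, by contrast, tries to force every neighbor into the neutral window. With $\alpha=\theta(\epsilon)$ and $\|f_j-t\|_{\mathcal{D}}=O(\sqrt{\epsilon})$, the linear return term is $\Theta(\epsilon^{3/2})$, while $\tol=\theta(\epsilon^2)$; thus neighbors are \emph{not} generically neutral, and you cannot ``choose $\tol$ to dominate'' without abandoning the scaling of Theorem~\ref{evolmut-short}. If you instead enlarge $\tol$ to $\Theta(\epsilon^{3/2})$, Step~4's telescoping $N(\tol+2\tau)\leq\epsilon/2$ forces $N=O(\epsilon^{-1/2})$, an unnecessary restriction. The exit you are looking for --- shrinking $\alpha$ with $N$ so that $N$ steps cannot accumulate more than $\epsilon/2$ of damage --- is exactly the paper's move, and once you take it, the per-step selection analysis in Steps~2--3 becomes redundant: the coarse displacement bound already delivers the conclusion uniformly over all branches, including the fallback.
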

(Proof in Appendix, Section \ref{proofthmtrap}) Note that stability
can be controlled by tuning $N$.

\section{Evolvability of vector spaces with target drift} \label{sec-drift}

Another important
question is what happens when the target organism drifts slowly
\citep{kvwEW}. 
In \citep{kvwEW}, there is a sequence of targets $t_0, t_1,
...$ and the objective is to replace the static requirement in
ineq. (\ref{condgen}) by one which takes into account the last target $t_T$, when
$t_i$ is allowed to slightly drift with respect to $t_{i-1}$ with
respect to its performances. In our
case, since we separate the encoding from computing performances, we
allow the encoding to drift, which is perhaps more natural ---
drift affects genotype before performances. Also, the evaluation of
beneficial and neutral mutations in $\bene$ and $\neut$ are done for $f_{j-1}$ with respect to $t_{j-1}$.
\begin{theorem}\label{thdrift-short}
Assume (\textbf{SF}) holds and the target organism sequence $t_0, t_1, ...$ drifts
according to $\|t_{i+1} - t_i\|_\hg \leq \tilde{O}\left(\epsilon^{4} /
  (\darwin^2 \max_{i'\in [\dF]_*}
   \|b_{i'}\|^2_{\hg})\right)$, $\forall i\geq 0$. 
Then 
Theorem \ref{evolmut-short} holds mutatis mutandis with the replacement of
ineq. (\ref{condgen}) by: $\perf_{t_T,\varphi}(f_{T},
{\mathcal{D}}) \geq - \epsilon$.
\end{theorem}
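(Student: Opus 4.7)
The plan is to treat target drift as a bounded perturbation of the static analysis of Theorem \ref{evolmut-short}. I would index the sets $\hg_t$, $\hg_{t,\mathcal{D}}$, $\bene$, $\neut$, $\hgmon$ by the current target $t_j$, and track the moving-target performance $V_j \defeq \perf_{t_j,\varphi}(f_j,\mathcal{D})$. The drift budget $\delta_j \defeq \|t_j - t_{j-1}\|_{\hg} \leq \tilde{O}(\epsilon^{4}/(\darwin^2 \max_{i'} \|b_{i'}\|^2_{\hg}))$ is precisely tuned so that the per-step performance shift caused by moving the target is $o(\epsilon^{2})$, hence dominated by the $\Omega(\epsilon^{2})$ per-step mutation improvement from Lemma \ref{lemmpack}.

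The first ingredient is a Lipschitz estimate for $t \mapsto \perf_{t,\varphi}(f,\mathcal{D})$. Differentiating $D_\varphi(u\|t)$ in $t$ yields $\partial_t D_\varphi(u\|t) = -\nabla^2\varphi(t)(u-t)$, so integrating along the segment from $t_{j-1}$ to $t_j$ and applying Cauchy--Schwarz together with the phenotype-to-genotype ratio of Definition \ref{defexen} gives
\[
|\perf_{t_j,\varphi}(f,\mathcal{D}) - \perf_{t_{j-1},\varphi}(f,\mathcal{D})| \;\leq\; \tilde{O}\bigl(\hessianmax \cdot \|f - t_{j-1}\|_{\hg} \cdot \delta_j\bigr).
\]
Monotonicity of $V_j$ combined with $D_\varphi \geq (\hessianmin/2)\|\cdot\|_2^2$ and Lemma \ref{lemphi} bound $\|f_j - t_j\|_{\hg} \leq \tilde{O}(\darwin \max_{i'} \|b_{i'}\|_{\hg})$ throughout the trajectory, so each per-step drift penalty is at most $\tilde{O}(\hessianmax \epsilon^{4}/(\darwin \max_{i'} \|b_{i'}\|_{\hg})) = o(\epsilon^{2})$.

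The second step is to replay the proof of Theorem \ref{evolmut-short} with these time-indexed objects. Since the mutator at step $j$ evaluates $\bene$ and $\neut$ against $t_{j-1}$, Theorem \ref{th00} still yields a non-empty $\bene(f_{j-1})$ with high probability at the same sample complexity $m$ (its concentration bounds are insensitive to which target is used), and Lemma \ref{lemmpack} still gives $\perf_{t_{j-1},\varphi}(f_j,\mathcal{D}) - \perf_{t_{j-1},\varphi}(f_{j-1},\mathcal{D}) \geq \Omega(\epsilon^{2})$ whenever $f_{j-1} \in \hg_{t_{j-1},\mathcal{D}}$. Combining with the Lipschitz estimate,
\[
V_j - V_{j-1} \;\geq\; \Omega(\epsilon^{2}) - \tilde{O}\bigl(\hessianmax \cdot \|f_j - t_{j-1}\|_{\hg} \cdot \delta_j\bigr) \;=\; \Omega(\epsilon^{2}),
\]
so $V_j$ grows monotonically by $\Omega(\epsilon^{2})$ per step in the time-indexed monotonic phase. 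The iteration count $T = \tilde{O}(\darwin^4/\epsilon^{2})$ then suffices to push $V_T \geq -\epsilon$, which is exactly the required $\perf_{t_T,\varphi}(f_T,\mathcal{D}) \geq -\epsilon$.

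The main obstacle is the regime after $f_j$ first exits $\hg_{t_j,\mathcal{D}}$, where Lemma \ref{lemmpack} no longer applies and drift could in principle push evolution back toward the far-from-optimum regime. A clean fix is to adapt the stability argument of Theorem \ref{thmtrap-short}: once $f_j \in \hg_{t_j}$, the small drift $\delta_j$ combined with the tolerance $\tol = \theta(\epsilon^{2})$ and mutation magnitude $\alpha = \theta(\epsilon)$ imply that any output of the mutator at step $j+1$ remains in $\hg_{t_{j+1}}$ with high probability; iterating shows $f_k \in \hg_{t_k}$ for all $k \geq j$, including $f_T$, so the moving-target guarantee is preserved through the end of evolution.
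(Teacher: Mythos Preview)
Your approach is essentially the same as the paper's: time-index the key sets by $t_j$, show via a Lipschitz-type bound that the target shift costs at most $\tilde{O}(\hessianmax \cdot \|f_j - t_j\|_\hg \cdot \delta_j) = o(\epsilon^2)$ per step, and conclude that the $\Omega(\epsilon^2)$ monotonic improvement of Lemma \ref{lemmpack} survives the drift. The paper derives the Lipschitz bound via the Bregman triangle equality (relating $\perf_{t_{i+1},\varphi}(f_i,\mathcal{D})$ to $\perf_{t_i,\varphi}(f_i,\mathcal{D})$ through Lemma \ref{lemmareturndec}) rather than by differentiating $D_\varphi$ in $t$; these are equivalent up to a mean-value step. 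Your bound on $\|f_j - t_j\|_\hg$ via monotonicity of $V_j$ is exactly the paper's Part~(ii).

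The one place you diverge is the exit regime, which you flag as ``the main obstacle'' and propose to handle by adapting the stability argument of Theorem \ref{thmtrap-short} to keep the trajectory inside $\hg_{t_k}$ for all $k \geq j$. The paper does not do this, and you do not need to either: it simply reruns Lemma \ref{lemconv1} with $t$ replaced by $t_{j_\star}$, showing that the first organism $f_{j_\star}$ to leave $\hg_{t_{j_\star},\mathcal{D}}$ already satisfies $\perf_{t_{j_\star},\varphi}(f_{j_\star},\mathcal{D}) \geq -\epsilon$, and this suffices under the standard evolvability convention that one must hit the $\epsilon$-ball within $T$ steps. Your proposed stability-under-drift patch would require nontrivial extra work, since Theorem \ref{thmtrap-short} is proved only for a fixed target and relies on the more restrictive constant set $\mathfrak{R}_N$; it is an unnecessary detour here.
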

(Proof in Appendix, Section \ref{proofthdrift}) Up to factors that
depend upon $\hessianmin, \hessianmax, \expressmin,
\expressmax$, our model of drift is equivalent to the performance
drift model of \citep{kvwEW} (see Lemma \ref{lemfund1} below), yet, as
a function of $\epsilon$, we
tolerate drifts that are larger by factor $\theta(1/\epsilon^2)$ than
theirs, taking as reference their result on the weakest distribution
assumptions (product Gaussians), assumptions that we also
alleviate. Dependence on $\darwin$ is necessary up to some extent,
as otherwise worst-case drifts would defeat evolution by
artificially increasing the actual horizon to the last target
($\darwin$ is computed for $t_0$). It is important to note that
\cite[Theorem 8, Corollary 9]{kvwEW} show that the
\textit{strict monotonicity} of evolution --- that is, the fact that
performance satisfies some minimal strictly positive increment from one generation on to
the next --- is sufficient to grant
\textit{some} resistance against drift. From this standpoint, the
indirect approach (Section \ref{sec-rel} above) of \cite{vEOa} is not a good candidate since it does
restart. The direct approach however, which operates with the square loss and on a
ball-supported distribution, displays strict monotonicity\footnote{We
  already proved that our algorithm is also strictly monotonic, see
  Theorem \ref{th00}.}, yet a direct application of \cite[Theorem 8]{kvwEW} shows drift resistance with poor dependence on the degree of the
evolved polynomial,
\textit{i.e.} of order
$1/(\dim(\mathcal{X}))^{\mbox{\tiny{degree}}}$ in the worst case
\citep[Proof of Theorem 3.3]{vEOa}.

\section{Toy experiments}\label{sec-toy-exp}

\begin{table}[t]
\begin{center}
\begin{tabular}{cc}\hline\hline
\includegraphics[width=0.36\columnwidth]{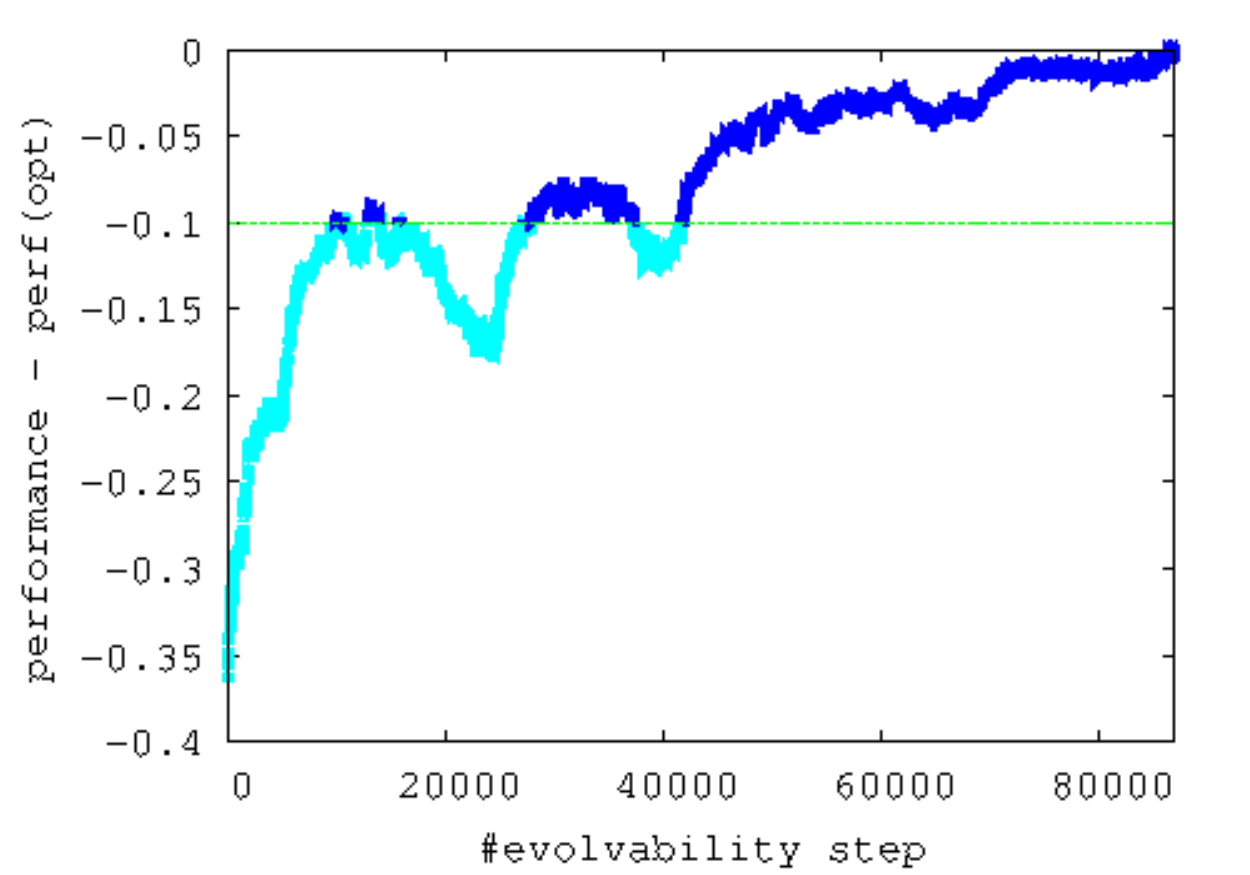} & \includegraphics[trim=0bp 130bp 0bp
150bp,clip,width=0.27\columnwidth]{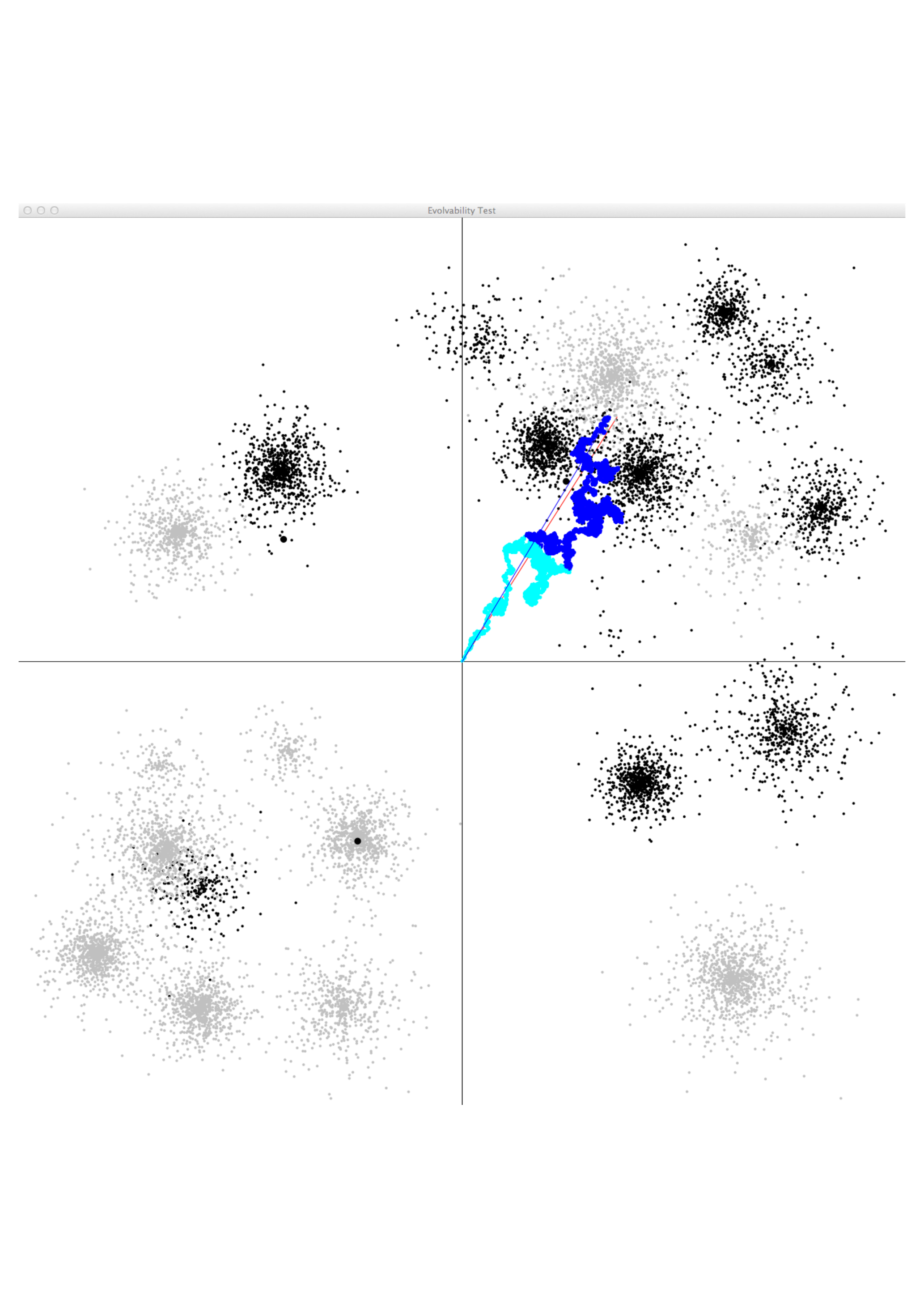}\\ \hline\hline
\end{tabular}
\end{center}
\caption{{\small Toy supervised learning experiment. Left: performances, indicating
  the $-\epsilon$ evolvability threshold in green. The data (shown on
  the right: black / grey = positive / negative class) are not linearly separable, so we compute the
  performance minus that of the optimal linear separator
  (eq. (\ref{condgenrob})). Organisms
  evolved are also displayed on the right (the cyan / dark blue path = path followed by the
  arrowhead of the vector learned, starting from the null vector). Dark blue
  means that the organism satisfies the condition of the evolvability model
  (eq. (\ref{condgenrob})). A red vector displays the target, almost
  confounded with the last vector evolved (dark blue line, best viewed in color).}}
\label{t-exp-short}
\end{table}
We complete our results with preliminary toy experiments in
supervised and unsupervised learning, displaying some promising
directions for Evolvability to spin out provable stochastic
gradient-free optimization algorithms, a field that has recently
started to spark new interest \citep{nsRG}. 
The algorithm we use is a
slightly more specific version of Algorithm \ref{algo:SNDS} above,
presented in Appendix (Section \ref{app-toy-exp-setting}). We sketch
here the supervised experiments and refer to Appendix, Section
\ref{app-toy-exp-setting} for a complete presentation of all experiments. Our supervised problem is the approximation of the best linear classifier on
\textit{non-separable} data. 
Table \ref{t-exp-short} presents the results obtained on a dataset
also displayed. While it has no
pretention whatsoever to bring significant experimental support for
the theory developed,
it displays interesting patterns of convergence: the
conditions for evolvability to be met are achieved quite early in the
process, and the
the presence of failures in the supervised case does not prevent
evolution to reach classifiers close to the optimal classifier.

\section{Discussion and conclusion}\label{sec-disc}

We split the discussion in several parts.\\

\noindent \textbf{Bregman divergences ---} We chose Bregman divergence
for several reasons. First, they generalize performance functions
previously used in seminal approaches \citep{kvwEW,vE} and they are in no way restrictive: the minimisation
of any differentiable convex function attaining its global minimum is trivially
equivalent to the minimisation of a Bregman divergence\footnote{The
  loss $\varphi(u)$ equals up to a constant the Bregman loss
  $D_\varphi(u\|\mathrm{opt}) = \varphi(u)- \varphi(\mathrm{opt})$,
  because $\nabla \varphi(\mathrm{opt}) = 0$.}. Moreover, they
\textit{exhaustively} define
fundamental classes of loss functions of both supervised and
unsupervised learning \citep{bmdgCWj,nnOT,rwCB} and their
properties are well understood: we use them to analyze our
mean-divergence model from Lemma \ref{lemmut} and devise a generalized
Pythagoras' Theorem trick for agnostic evolvability in Theorem
\ref{throbust}.\\

\noindent \textbf{Grid-convex minimization on a non-complete space
  ---} Evolvability can be analyzed from the standpoint of the actual
space in which evolution takes place from the mutator (not
\textit{every} representation may be built). For example, in the case of
\cite{vEOa}, the direct approach needs a complete vector space 
because the mutator samples in a topological closed ball. This is not
our case: any
permissible mutator (Definition \ref{defPERMUT}) evolves organisms on
a \textit{grid}\footnote{This also complies with the observation that
  biological evolution operates on a discrete code, DNA.} --- finer as $\epsilon$
decreases, but always a grid --- so in theory, we do not need a complete
vector space. For that reason, we do not need a performance convex everywhere: it just
needs to be convex on this grid. Our result is not the first to enjoy
such a property: a quick inspection shows that it is also the case for \cite[Section 6.1]{kvwEW}.
Our analysis is however significantly more general and would allow
the efficient minimization of highly non-convex functions whose
minima define a convex function supported on a grid, a
notorious example of which being Griewank function \citep{gGD}.\\

\noindent \textbf{Optimality of the mutator, frugality of evolution ---}
Evolution is fundamentally a resource-constrained process \citep{pklRA}, so it makes
a lot of sense to analyze evolvability in the light of time and
space requirements, in particular for the key algorithmic
device of evolution, the mutator. It is not hard
to check that our mutator is optimal essentially up to factor 2: suppose $\mutbasis$ defines a
basis for $\hg$. Any mutator running under the constraint
to be able to generate the whole space needs to work in time and space
$\Omega(\mathrm{Card}(\mutbasis))$, a lowerbound indeed matched by
ours. We also note the \textit{frugality} of our evolution process,
since our mutator never changes the set of mutations. All these
properties are enjoyed by \cite[Section 6.1]{kvwEW}; none of
them is enjoyed by the approaches of \cite{vEOa}. However, mention of the
possibility to
use of the \textit{canonical} basis to improve the mutator is given (footnote 2
in \cite{vEOa}), yet without proof but noting the difficulty of the
task. We note that we are significantly more general than this
possibility, since we consider not just orthonormal bases, but any
generating set.\\

\noindent \textbf{Random gradient-free optimization ---}
Derivative-free optimization is a big field intersecting mathematics, computer science and
engineering \citep{csvIT}, comprising a variety of methods like
genetic algorithms. Our approach to evolvability
certainly bears similarities with that class of algorithms, but it is
eventually closer to another popular set, Nelder-Mead simplex
methods \citep{nmSM,nsRG}. Briefly, such methods transform a
deformable non-degenerate simplex using a set of (initially
five) possible
moves, including reflection, expansion, etc. . When our generating set
is a basis, we can represent our algorithm with a rigid simplex modified by a set of
two operations only: "mirroring" the simplex to account for the polarity
of mutations, and "translating" it to follow mutations. While
our setting is not comparable to the deterministic setting of
\cite{nsRG}, we obtain dependences in $\epsilon$ similar to theirs on
non-strongly convex optimisation, without using the directional \textit{secant}
information that they use.
 \\

\noindent \textbf{Evolution on the efficient frontier ---} Our brief
analogy with Markowitz' portfolio theory \citep{mPS} following
Lemma \ref{lemmut} can be carried out much further to \textit{analytically} compute the
\textit{efficient frontier} of evolution. This appears to be an
important question because such a frontier has been
\textit{documented} in systems biology \citep{kVO}; interestingly, when
the current organism is ``far'' from the target (in a specific sense),
all superior beneficial mutations\footnote{Those mutations granting more than
just evolvability's minimal requirements, see Definition \ref{defexen} and following.} are close to the efficient frontier,
and therefore display an approximately optimal risk-return
tradeoff. This observation, formalized and proven in the Appendix
(Section \ref{app-add-res}), sheds interesting new light to the systems
biology's observations \citep{kVO} that go beyond the scope of our paper. 

To summarize, what we have shown is that, to evolve a vector space, one
merely needs a norm and a set that generates the space. The resulting
algorithm essentially has a single free parameter ($\alpha$,
Definition \ref{defPERMUT}) and is
straightforward to implement.
Our result generalizes in several directions another close result \citep{fDI} (Theorems 4.1, 4.4), \textit{i.e.}, outside the binary
classification framework, the realm of well-behaved
losses, single-dimensional outputs and non-agnostic evolvability. Our framework is also more
general. Feldman requires the target organism to have
minimal non-zero margin over \textit{all} conditions, which is
equivalent to replacing the non-zero probability by a unit probability in assumption (iii) of setting (\textbf{SF}), and therefore weakens
the general purpose of the result --- even when the minimal margin
assumption is reasonable in the restricted binary classification
setting. Finally, our analysis displays
better dependences on the key parameters $\alpha, T, m$: 
inspection of the bounds of Feldman shows that the guarantees on performance
increase may be very loose, namely as small as $\tilde{O}(\epsilon^a)$ for some potentially 
large constant $a$, which is significantly worse than our
$\tilde{\Omega}(\epsilon^2)$ guarantee in $\hgmon$. 

\bibliography{bibgen2+}

\clearpage

\section*{Appendix --- Table of contents}\label{sec:appendix}

\noindent \textbf{Overview of our results} \hrulefill Pg
\pageref{app-overview}\\
\noindent \textbf{Proof of Theorem \ref{evolmut-short}} \hrulefill Pg
\pageref{app-add-main}\\
\noindent \textbf{Proof of Theorem \ref{throbust}} \hrulefill Pg
\pageref{proofthrobust}\\
\noindent \textbf{Proof of Theorem \ref{thmtrap-short}} \hrulefill Pg
\pageref{proofthmtrap}\\
\noindent \textbf{Evolution on the efficient frontier} \hrulefill Pg
\pageref{app-add-res}\\
\noindent \textbf{Toy experiments (full)} \hrulefill Pg
\pageref{app-toy-exp-setting}

\newpage 

\section{Overview of our results}\label{app-overview}

Figure \ref{f-evolvability} presents a complete synthetic view of the main mechanisms
shown in our paper (including in the Appendix), with two properties never explicitly
documented before: the fact that the apparent weakness of the mutator
(which works regardless of the set that generates ${\hg}$) does not
prevent it to be able to ``compete'' with the best mutators when far from
the target, and the fact that evolution may just be trapped ``close''
to the target when it has succeeded. The agnostic evolution setting
relies on an analogue to Bregman orthogonal projection theorems \citep{anMO}
involving the performance function. 

\begin{figure}[t]
\begin{center}
\includegraphics[trim=20bp 50bp 0bp 20bp,clip,width=0.9\columnwidth]{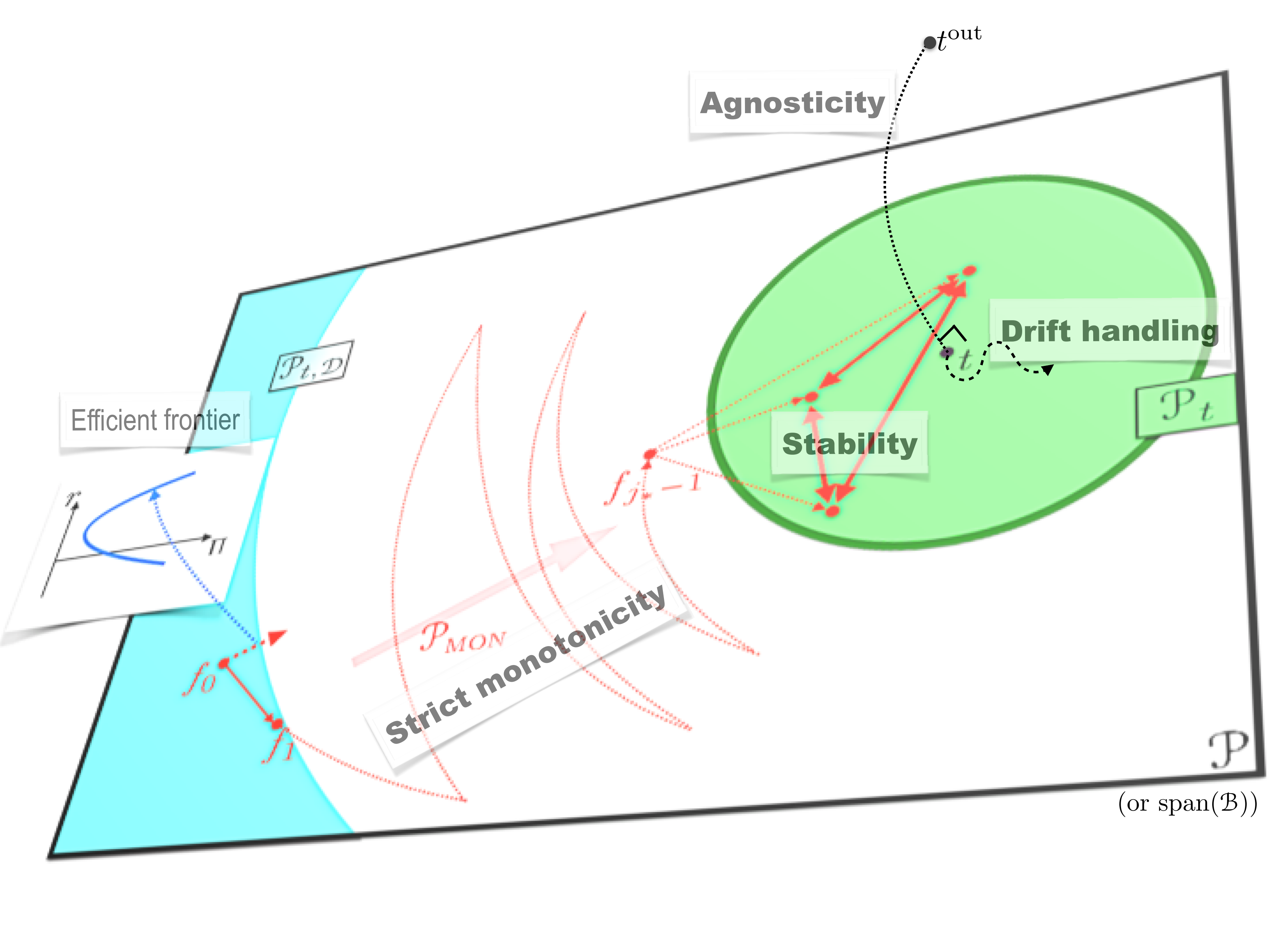}
\end{center}
\caption{Summary of our results --- Evolvability as carried out by our simple mutator (in red), with high
  probability: sequence $\hgmon$ monotonically converges to an organism
  that satisfies the conditions of evolvability (green area), after which
  monotonicity may not hold anymore, \textit{but} many subsequent organisms
  may satisfy the conditions of evolvability. When far from the target
  (blue), our mutator always has access to mutations close to the
  efficient frontier of evolution (diagram). If $t$ is not evolvable
  ($t^{\mathrm{out}}$), then evolution converges to the ``closest''
  encodable target. Stability also guarantees that when the evolved
  organism satisfies the conditions of evolvability, it will do so for
  at least a certain number of following generations. All notions
  developed in the main body of the paper are indicated in \textbf{bold faces}.
  The notion of efficient frontier is developed in the Appendix only (best viewed in color).}
\label{f-evolvability}
\end{figure}

\section{Proof of Theorem \ref{evolmut-short}}\label{app-add-main}

\subsection{Basic notations and helper Lemmata}\label{app-sub-not}

\begin{center}
\begin{tabular}{c|l}\hline
$t$ & target organism\\
$f_j$ & evolved organism\\
${\mathcal{B}}'$ & basis $\subseteq \mutbasis$\\
$\matrice{b}'$ & $\dG \times \dG$ transition matrix for basis
${\mathcal{B}}'$ in orthonormal ``gene'' basis $\{g_1, g_2, .., g_\dG\}$\\
$\delta(x)$ & $\defeq (t-f)(x) = \matrice{g}_x\matrice{b}^*(t-f)$, \\
 & difference of expressions between target $t$ and organism $f$\\
 & measured with respect to basis ${\mathcal{B}}^*$\\
$\delta^i$ & coordinate $i$ of vector $\delta$\\
$\|.\|_\hg$ & norm in canonical (``gene'') basis of $\hg$\\
$g_i$ & canonical basis vector of $\hg$\\
$g_i(x)$ & gene expression output (in ${\mathbb{R}}^d$) of $g_j$ on some $x\in {\mathcal{X}}$\\
$\matrice{g}_x $ & $\defeq [g_1(x)|g_2(x)|\cdots
|g_{\dG}(x)] \in {\mathbb{R}}^{d\times \dG}$\\
 & per-gene output matrix
on some $x\in {\mathcal{X}}$\\
$\gendistnone$ & $\defeq \expect_{x \sim {\mathcal{D}}} [\matrice{g}^\top_x 
\matrice{g}_x] \in {\mathbb{R}}^{\dG\times \dG}$\\
$\mathrm{ker}(.)$ & null space\\ \hline
\end{tabular}
\end{center}
Unless otherwise stated, all organisms are expressed in basis $\mutbasis^* \subseteq \mutbasis$, that is, 
\begin{eqnarray}
f(x) & = & \matrice{g}_x \matrice{b}^* f\:\:,
\end{eqnarray} 
and the norm of the encoding of $f$ expressed in basis $\mutbasis^*$ shall be
\begin{eqnarray}
\|f\|_{\hg} & \defeq & \sqrt{\langle \matrice{b}^*f,
  \matrice{b}^*f\rangle}\:\:.
\end{eqnarray}
We use in several places the following Lemmata.

\begin{lemma}\label{lemmareturndec}
Under setting (\textbf{SF}), there exists symmetric positive definite matrix
$\matrice{m}$ such that 
$\expect_{{\mathcal{D}}}[\mathrm{R}_{f, i} (x)] = \langle \matrice{b}^* b_i,
 \matrice{m}
 \matrice{b}^* \delta\rangle, \forall b_i \in {\mathcal{B}}$
 (coordinates of $\delta \defeq t-f, b_i$ expressed in basis ${\mathcal{B}}^*$).
\end{lemma}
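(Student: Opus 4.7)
The plan is to linearize the gradient difference via a mean-value-theorem integral, then read off $\matrice{m}$ by a change of basis and an application of Fubini. Concretely, since $\varphi$ is twice differentiable, the fundamental theorem of calculus applied coordinate-wise to the path $u \mapsto \nabla\varphi(f(x) + u(t(x)-f(x)))$ gives
\begin{eqnarray*}
(\nabla\varphi \circ t)(x) - (\nabla\varphi \circ f)(x) & = & \matrice{H}(x)\,\delta(x)\:\:,\quad \matrice{H}(x) \defeq \int_0^1 \nabla^2\varphi\bigl(f(x) + u\,\delta(x)\bigr)\, du\:\:.
\end{eqnarray*}
Each $\nabla^2\varphi$ is symmetric with eigenvalues in $[\hessianmin,\hessianmax]$, so $\matrice{H}(x)$ inherits symmetry and the same spectral bounds (in particular $\hessianmin > 0$ by Lemma \ref{lemphi}).

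Next I would substitute the coordinate expansions $b_i(x) = \matrice{g}_x\matrice{b}^* b_i$ and $\delta(x) = \matrice{g}_x \matrice{b}^*\delta$ into the definition of $\mathrm{R}_{f,i}(x)$ (absorbing the polarity $\sigma$ as per the notational convention of Definition \ref{defretprem}). Using the adjoint move $\langle \matrice{g}_x u, \matrice{H}(x)\matrice{g}_x v\rangle = \langle u, \matrice{g}_x^\top \matrice{H}(x)\matrice{g}_x v\rangle$ and then the linearity of the expectation, we get
\begin{eqnarray*}
\expect_{\mathcal{D}}[\mathrm{R}_{f,i}(x)] & = & \left\langle \matrice{b}^* b_i,\; \expect_{\mathcal{D}}\!\bigl[\matrice{g}_x^\top \matrice{H}(x)\,\matrice{g}_x\bigr]\,\matrice{b}^*\delta\right\rangle\:\:,
\end{eqnarray*}
which forces the definition $\matrice{m} \defeq \expect_{\mathcal{D}}[\matrice{g}_x^\top \matrice{H}(x)\matrice{g}_x]$. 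Fubini is justified because $\nabla^2\varphi$ is uniformly bounded and $\expect_{\mathcal{D}}[\|\matrice{g}_x\|^2] < \infty$ under our finiteness-a.e. assumption on $\|g_j(.)\|_2^2$.

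Finally I would verify the two properties of $\matrice{m}$. Symmetry is immediate from symmetry of $\matrice{H}(x)$ for each $x$. For positive definiteness, pick $v \neq 0$ and write
\begin{eqnarray*}
v^\top \matrice{m}\, v & = & \expect_{\mathcal{D}}\!\bigl[(\matrice{g}_x v)^\top \matrice{H}(x)(\matrice{g}_x v)\bigr] \;\;\geq\;\; \hessianmin\cdot \expect_{\mathcal{D}}\!\bigl[\|\matrice{g}_x v\|_2^2\bigr] \;\;=\;\; \hessianmin\cdot v^\top \gendistnone\, v \;\;\geq\;\; \hessianmin\,\expressmin\,\|v\|_2^2\:\:,
\end{eqnarray*}
and both $\hessianmin > 0$ and $\expressmin > 0$ by Lemma \ref{lemphi}, hence $v^\top\matrice{m} v > 0$. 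The only real subtlety is this last step: the positive definiteness is precisely what the (\textbf{SF}) setting buys us, through (ii) (ruling out degenerate directions of $\nabla^2\varphi$) and (iii) (ruling out genomes that get expressed only on a measure-zero set of conditions). The rest is bookkeeping of the coordinate change $\matrice{b}^*$.
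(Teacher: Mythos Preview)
Your proof is correct and follows essentially the same approach as the paper: linearize the gradient difference via the Hessian, define $\matrice{m}$ as the expected sandwiched Hessian $\expect_{\mathcal{D}}[\matrice{g}_x^\top \matrice{H}(x)\matrice{g}_x]$, and verify positive definiteness using (\textbf{SF}). Your version is in fact slightly more careful --- the integral mean-value form is the correct statement in $\mathbb{R}^d$, whereas the paper loosely writes ``for some value of the Hessian'' --- and your positive-definiteness argument routed through $\gendistnone$ and Lemma~\ref{lemphi} even yields the quantitative bound $\matrice{m} \succeq \hessianmin\expressmin\, I$, which the paper's direct appeal to assumption~(iii) does not.
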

\begin{proof}
Because $\varphi$ is twice differentiable and strictly convex, a Taylor expansion of
$\nabla \varphi$ around $t(x)$ yields
\begin{eqnarray}
(\nabla\varphi \circ t)(x) & = & (\nabla\varphi \circ f)(x) +
\textsc{h}(x) (t-f)(x)\nonumber\\
 & = & (\nabla\varphi \circ f)(x) + \textsc{h}(x)
 \matrice{g}_x\matrice{b}^* \delta\:\:, \label{taylor1}
\end{eqnarray}
for some value of the Hessian $\textsc{h}(x)$ of $\varphi$, therefore symmetric positive definite. We get
\begin{eqnarray}
\expect_{{\mathcal{D}}}[\langle b_i(x),
   (\nabla\varphi \circ t)(x) - (\nabla\varphi \circ f)(x)\rangle] & =
   & \expect_{{\mathcal{D}}}[\langle \matrice{g}_x\matrice{b}^*b_i, \textsc{h}(x)
 \matrice{g}_x\matrice{b}^* \delta\rangle] \nonumber\\
 & = & \expect_{{\mathcal{D}}}[\langle \matrice{b}^*b_i, \matrice{g}^\top_x\textsc{h}(x)
 \matrice{g}_x\matrice{b}^* \delta\rangle] \nonumber\\
 & = & \langle \matrice{b}^*b_i, \expect_{{\mathcal{D}}}[\matrice{g}^\top_x\textsc{h}(x)
 \matrice{g}_x]\matrice{b}^* \delta\rangle \nonumber\\
 & \defeq & \langle \matrice{b}^*b_i,
 \matrice{m}
 \matrice{b}^* \delta\rangle\:\:,\nonumber
\end{eqnarray}
with 
\begin{eqnarray}
\matrice{m} & \defeq &
\expect_{{\mathcal{D}}}[\matrice{g}^\top_x\textsc{h}(x)
 \matrice{g}_x]\:\:.\label{defMM}
\end{eqnarray} 
Because of (iii) in
(\textbf{SF}), for any $g\neq 0_\hg \in \hg$, for each $x\in
\mathcal{X}$ for which it is expressed, we have $g(x) = \matrice{g}_x
\matrice{b}^* g \neq 0_{{\mathbb{R}}^d}$ and so
\begin{eqnarray}
\langle \matrice{b}^* g,
\matrice{m} \matrice{b}^* g\rangle & = & \int_{\mathcal{X}} p(x) \langle g(x),
\textsc{h}(x)
 g(x)\rangle \mathrm{d}x \nonumber\\
& \geq& 
\pr_{x\sim \mathcal{D}}[g(x) \neq 0_{{\mathbb{R}}^d}] \cdot \min_{g(x)
\neq 0_{{\mathbb{R}}^d}} \langle g(x),
\textsc{h}(x)
 g(x)\rangle\nonumber\\
& > & 0\:\:,\nonumber
\end{eqnarray} 
showing $\matrice{m}$ is positive definite.
\end{proof}

\begin{lemma}\label{lemapproxbregdiv}
Under setting (\textbf{SF}), for any $f, t\in \hg$ (coordinates expressed in basis $\mutbasis^*$),
the following holds, for some $\matrice{m}'$ symmetric positive definite:
\begin{eqnarray}
\expect_{x \sim {\mathcal{D}}} [D_\varphi(f(x) \| t(x))] & = & \langle \matrice{b}^* (f-t), \matrice{m}'
 \matrice{b}^* (f-t)\rangle\:\:,\label{eqbregm1}\\
 \expect_{x \sim {\mathcal{D}}} [D_\varphi(f(x) \|
 t(x))] & \in & \left[\frac{\hessianmin}{2} \cdot \langle \matrice{b}^* (f-t), \gendistnone
 \matrice{b}^* (f-t)\rangle, \frac{\hessianmax}{2} \cdot \langle \matrice{b}^* (f-t), \gendistnone
 \matrice{b}^* (f-t)\rangle\right] \:\:.\label{eqbregm2}
\end{eqnarray}
\end{lemma}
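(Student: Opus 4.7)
The plan is to obtain both equalities and the inequality sandwich from a single identity for $D_\varphi$ coming from a second-order Taylor expansion, and then to reduce everything to properties of $\textsc{h}(x)$ already used for Lemma~\ref{lemmareturndec}.

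First I would write out the integral (mean-value) form of Taylor's theorem for the Bregman divergence of a twice-differentiable generator: for every $u,v\in\mathbb{R}^d$,
\begin{equation*}
D_\varphi(u\|v) \;=\; \tfrac{1}{2}\,\langle u-v,\, \textsc{h}_\varphi(\xi_{u,v})(u-v)\rangle,
\end{equation*}
for some $\xi_{u,v}$ on the segment $[u,v]$ (one can obtain this either by Taylor-expanding $\varphi(u)$ around $v$ with integral remainder, or by expanding $\nabla\varphi$ as in (\ref{taylor1}) and integrating). Applying this pointwise with $u=f(x)$, $v=t(x)$ and substituting $f(x)-t(x)=\matrice{g}_x\matrice{b}^\star(f-t)$ gives
\begin{equation*}
D_\varphi(f(x)\|t(x)) \;=\; \tfrac{1}{2}\,\langle \matrice{b}^\star(f-t),\,\matrice{g}_x^\top\textsc{h}(x)\matrice{g}_x\matrice{b}^\star(f-t)\rangle,
\end{equation*}
where $\textsc{h}(x)\defeq \textsc{h}_\varphi(\xi_{f(x),t(x)})$ is symmetric and, by strict convexity of $\varphi$, positive definite with eigenvalues in $[\hessianmin,\hessianmax]$. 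Taking expectation under $\mathcal{D}$ and defining $\matrice{m}'\defeq \tfrac{1}{2}\expect_{x\sim\mathcal{D}}[\matrice{g}_x^\top\textsc{h}(x)\matrice{g}_x]$ immediately yields the equality (\ref{eqbregm1}). Symmetry of $\matrice{m}'$ is inherited from that of $\textsc{h}(x)$, and its positive definiteness follows by repeating verbatim the argument used at the end of the proof of Lemma~\ref{lemmareturndec}: for any nonzero $g\in\hg$, assumption (iii) of (\textbf{SF}) together with $\hessianmin>0$ (Lemma~\ref{lemphi}) implies $\langle \matrice{b}^\star g, \matrice{m}'\matrice{b}^\star g\rangle>0$.

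For the sandwich (\ref{eqbregm2}), I would use $\hessianmin I_d \preceq \textsc{h}(x)\preceq \hessianmax I_d$ in the Loewner order. Conjugating by $\matrice{g}_x$ on the right and $\matrice{g}_x^\top$ on the left preserves the ordering, giving
\begin{equation*}
\hessianmin\,\matrice{g}_x^\top\matrice{g}_x\;\preceq\;\matrice{g}_x^\top\textsc{h}(x)\matrice{g}_x\;\preceq\;\hessianmax\,\matrice{g}_x^\top\matrice{g}_x,
\end{equation*}
pointwise in $x$. Taking expectations (which preserves the Loewner order) and then sandwiching the quadratic form $\langle\matrice{b}^\star(f-t),\cdot\,\matrice{b}^\star(f-t)\rangle$ between the two sides yields exactly the claimed upper and lower bounds involving $\gendistnone=\expect_{\mathcal{D}}[\matrice{g}_x^\top\matrice{g}_x]$ and the factor $1/2$ from the Taylor identity.

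There is no real obstacle; the only mild subtlety is that $\textsc{h}(x)$ depends on $x$ through the mean-value point $\xi_{f(x),t(x)}$, so one should be slightly careful to justify the exchange of expectation and Taylor remainder—this is handled cleanly by using the integral remainder $D_\varphi(f(x)\|t(x))=\int_0^1(1-s)\langle f(x)-t(x),\textsc{h}_\varphi(t(x)+s(f(x)-t(x)))(f(x)-t(x))\rangle\,\mathrm{d}s$, which is jointly measurable in $x$, and then applying Fubini to pull the expectation inside. With this in hand, the spectral bounds on $\textsc{h}_\varphi$ apply uniformly in $s$, and the argument concludes as above.
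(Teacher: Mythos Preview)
Your proposal is correct and follows essentially the same route as the paper: both use the second-order Taylor (mean-value) identity $D_\varphi(f(x)\|t(x))=\tfrac{1}{2}\langle \matrice{g}_x\matrice{b}^*(f-t),\textsc{h}(x)\matrice{g}_x\matrice{b}^*(f-t)\rangle$, define $\matrice{m}'=\tfrac{1}{2}\expect_{\mathcal{D}}[\matrice{g}_x^\top\textsc{h}(x)\matrice{g}_x]$, invoke the positive-definiteness argument from Lemma~\ref{lemmareturndec}, and then use the spectral bounds $\hessianmin I\preceq\textsc{h}(x)\preceq\hessianmax I$ for the sandwich. Your additional care with the integral remainder and Fubini is a nice touch that the paper leaves implicit.
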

\begin{proof}
Both results are a consequence of \citep{anMO}, that for
any twice differentiable $\varphi$, 
\begin{eqnarray}
D_\varphi(f(x) \| t(x)) & = & \frac{1}{2} \cdot \langle \matrice{g}_x \matrice{b}^* (f-t), \matrice{h}(x)
 \matrice{g}_x\matrice{b}^* (f-t)\rangle \label{eqbreghess} \:\:,
\end{eqnarray}
for some value of the Hessian $\matrice{h}(x)$ of $\varphi$, therefore symmetric
positive definite. Then, we define $\matrice{m}'$ as
\begin{eqnarray}
\matrice{m}' & \defeq &
\frac{1}{2} \cdot \matrice{m}\:\:,\label{defMM2}
\end{eqnarray} 
where $\matrice{m}$ is defined in eq. (\ref{defMM}), and so
$\matrice{m}' \succ 0$. We then use the definition of $\gamma$ to
obtain ineq. (\ref{eqbregm2}).
\end{proof}

\subsection{Complete statement of Theorem \ref{evolmut-short}}\label{proofevolmut-short}

We first provide a more complete statement of Theorem
\ref{evolmut-short}. 
We now define two key parameters:
\begin{eqnarray}
U & \defeq & \frac{\hessianmax^{\frac{3}{2}} \expressmax^{\frac{1}{2}}}{\hessianmin^{\frac{3}{2}} \expressmin^{\frac{1}{2}}} \cdot \frac{2\sqrt{e \dG}}{\overline{B}_{\hg}({\mutbasis}^*)}\cdot
\darwin\:\:,\label{defU}\\
V & \defeq & {\hessianmax} \cdot  \max_{i\in [\dF]_*} \{
\expect_{{\mathcal{D}}}
[\|b_i(x)\|^2_2]\} \:\:,\label{defV}
\end{eqnarray}
and triples of evolution ``knobs'', $(z_\tau, z_\alpha, z_\tol) \in {\mathbb{R}}_{+*}^3$, all absolute constants. 
\begin{definition}\label{defNER}
Set $\mathfrak{R} \subset {\mathbb{R}}^3$ is defined as the subset of
triples $(z_1,z_2,z_3)$ such that (i) $z_1,z_2,z_3 > 0$, (ii) $z_3 - z_1z_2 > 0$,
(iii) $z_2^2 - z_2(1-z_1) + z_3 \leq 0$.
\end{definition}
Remark that $\mathfrak{R} \neq \emptyset$, since for example $(1/9,
1/3, 2/27) \in \mathfrak{R}$. We now state our main result.
\begin{theorem}\label{evolmut-long}
(evolvability of vector spaces, complete statement) Assume (\textbf{SF}) holds, and fix any $(z_\tau, z_\alpha, z_\tol)\in
\mathfrak{R}$. 
${\hg}$ is distribution-free evolvable by
any permissible mutator $\mutator$, with tolerance:
\begin{eqnarray}
\tol & \defeq & \frac{z_{\tol}}{U^2\max\{1, V\}}\cdot \epsilon^{2}  = \theta(\epsilon^{2})\:\:,\label{defTol}
\end{eqnarray}
and magnitude of mutations:
\begin{eqnarray}
\alpha & \defeq & \frac{z_\alpha}{U\max\{1, V\}}\cdot
\epsilon = \theta(\epsilon)\:\:.\label{defAlpha}
\end{eqnarray}
The number of conditions sampled at each iteration satisfies:
\begin{eqnarray}
m & = & O\left(\frac{\hessianmax^6
        \expressmax^2 \sup_{x} \single^2(\mutbasis^*, x)}{\hessianmin^4 \expressmin^2} \cdot \frac{1}{\overline{B}^4_{\hg}(\mutbasis^*)} \cdot\frac{ \darwin^4 }{\epsilon^{2}}\log\left(\frac{\dF
      \darwin}{\epsilon}\right)\right)  = \tilde{O}\left(\frac{ \darwin^4 }{\epsilon^{2}}\log\left(\frac{\dF
      \darwin}{\epsilon}\right)\right)\label{defmm}\:\:.
\end{eqnarray}
Finally, the number of evolution steps $T$ sufficient to comply with
ineq. (\ref{condgen}) is $T = \tilde{O}\left(\darwin^4/\epsilon^{2}\right)$.
\end{theorem}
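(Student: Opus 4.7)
The plan is to assemble the three auxiliary results already proved in the sketch (Theorem \ref{th00}, Lemma \ref{lemconv1}, Lemma \ref{lemmpack}) into a single statement, and then verify that the explicit choices of $\tol$, $\alpha$, and $m$ in (\ref{defTol})--(\ref{defmm}) meet the polynomiality conditions of Definition \ref{defevolvability}. I would begin by partitioning the trajectory $f_0, f_1, \ldots, f_T$ into its longest prefix $\hgmon$ (inside ${\hg}_{t,\mathcal{D}}$) and its tail $\hgnotmon$. By Lemma \ref{lemconv1}, if $\hgnotmon \neq \emptyset$ then its first element already satisfies (\ref{condgen}), so we may condition on the event that the mutator remains in $\hgmon$ throughout. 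On that event, Theorem \ref{th00} guarantees that, as long as the sample size $m$ is at least (\ref{numevom22}), with probability $\geq 1-\epsilon$ we have $\bene(f_j)\neq\emptyset$ for every $f_j\in\hgmon$, so $\mutator$ never outputs $\bot$; and Lemma \ref{lemmpack} guarantees that each step inside $\hgmon$ raises the expected performance by $\Omega(\epsilon^2)$. Since performance is non-positive and bounded below by $-D_\varphi(f_0(x)\|t(x))$ in expectation, the length of $\hgmon$ is $O(\|t-f_0\|_\hg^2/\epsilon^2)$ up to the factors hidden in $U$, which when expanded gives $T = \tilde O(\darwin^4/\epsilon^2)$.

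The second part of the plan is to fix $(z_\tau, z_\alpha, z_\tol) \in \mathfrak{R}$ (for concreteness, $(1/9,1/3,2/27)$) and substitute $\tol = \theta(\epsilon^2)$, $\alpha = \theta(\epsilon)$, $\tau = \theta(\epsilon/U)$ into the radius appearing in (\ref{defftd}). The three defining constraints of $\mathfrak{R}$ are engineered so that the three additive terms $\tau$, $\alpha\hessianmax\|b_i\|_\hg\rho(b_i|\mathcal{D})$, and $\tol/\alpha$ in that radius each scale like $\epsilon/U$, keeping the radius at $\theta(\epsilon/U)$ as $\epsilon\to 0$: inequality (iii) ensures the quadratic relation between $z_\alpha$ and $z_\tol$ required for the per-step progress of Lemma \ref{lemmpack} to be strictly positive, and inequality (ii) guarantees that the return dominates the premium in the decomposition of Lemma \ref{lemmut}. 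Substituting these choices into (\ref{numevom22}) with $\tau^2$ in the denominator directly yields the $m$ of (\ref{defmm}), the $U^2 \darwin^2/\overline{B}^2_\hg$ factor absorbing the $\tilde O$-hidden constants.

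Next I would verify the polynomiality requirements of Definition \ref{defevolvability}: $m$ and $T$ must be polynomial in $d, \dim(\mathcal{X}), 1/\epsilon, \darwin$; $\tol$ must be polynomial in $\epsilon, 1/d, 1/\dim(\mathcal{X})$; the neighborhood size is $2|\mutbasis|$ which is polynomial by assumption; and the mutations magnitude is $\Omega(\epsilon)$. All four hold by direct inspection, using our standing assumption that $\sup_x \single(\mutbasis^*,x)$ is polynomial in genome parameters and that $\hessianmin,\hessianmax,\expressmin,\expressmax$ are bounded away from $0$ and $\infty$ under (\textbf{SF}) by Lemma \ref{lemphi}. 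A single union bound over the Theorem \ref{th00} and Lemma \ref{lemmpack} high-probability events (each with failure probability $\leq \epsilon/2$ after an inconsequential rescaling of constants) collapses to the final success probability $1-\epsilon$.

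The main obstacle is the self-referential character of $m$ and $T$: the sample complexity (\ref{numevom22}) contains $\log(\dF T/\epsilon)$, while $T$ itself depends on $m$ through the guarantee of per-step progress. I expect the fix to be the standard trick of first upper-bounding $T$ by a loose polynomial in $1/\epsilon$ and $\darwin$ coming from the trivial bound $|\hgmon|\leq$ (maximum admissible performance range)/(per-step increment), then substituting that bound into the logarithmic factor in $m$, which only inflates $m$ by a $\log\log$ factor absorbed in $\tilde O$. All remaining work is verification that the constants line up, which is routine given the $\mathfrak{R}$ formalism already set up for this purpose.
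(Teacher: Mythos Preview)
Your assembly of Theorem \ref{th00}, Lemma \ref{lemconv1}, and Lemma \ref{lemmpack} is exactly the paper's route, including the resolution of the $m$--$T$ self-reference: first fix $T$ from the per-step progress bound (the paper's ineq.~(\ref{numiterevol})), then feed that $T$ into the $\log(\dF T/\epsilon)$ factor of (\ref{numevom22}) to obtain (\ref{defmm}).

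Two small corrections. First, you have the roles of conditions (ii) and (iii) in $\mathfrak{R}$ swapped. Condition (ii), $z_\tol - z_\tau z_\alpha > 0$, is what makes the per-step increment $\tol - \alpha\tau$ strictly positive in Lemma \ref{lemmpack} (and in the monotonicity step of Theorem \ref{th00}); it has nothing to do with return-versus-premium. Condition (iii), rewritten as $z_\tau + z_\alpha + z_\tol/z_\alpha \leq 1$, is what is used in Lemma \ref{lemconv1} to conclude that the first organism leaving ${\hg}_{t,\mathcal{D}}$ already has performance $\geq -\epsilon$. Second, no additional union bound is needed between Theorem \ref{th00} and Lemma \ref{lemmpack}: both rest on the \emph{same} concentration event (empirical return and premium within $\tau/2$ of their expectations, for all $i,\sigma$, across all $T$ iterations), so the single $1-\epsilon$ guarantee from the IBDI in Lemma \ref{lemdev} covers both.
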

Remark that $z_\tau$ is defined but not used in the Theorem
statement. It shall be used in its proof below. 

\subsection{Proof of Lemma \ref{lemmut}}\label{prooflemmut}
By definition,
\begin{eqnarray}
\bene(f)  & = & \{g \in \neigh_\epsilon(f): \perf_{t,\varphi}(g,
{\mathcal{S}}) \geq \perf_{t,\varphi}(f,
{\mathcal{S}}) + \tol\} \nonumber\\
 & = & \{g \in \neigh_\epsilon(f): \expect_{{\mathcal{S}}}
[ \langle g(x) - f(x),
(\nabla\varphi \circ t)(x) - (\nabla\varphi \circ f)(x)\rangle - D_\varphi(g(x) \| f(x)) ] \geq \tol\} \nonumber\\
 & = & \left\{
\begin{array}{l}
f+\sigma \alpha b_i  \in \neigh_\epsilon(f): 
\expect_{{\mathcal{S}}}
\left[ \langle \sigma b_i(x),
(\nabla\varphi \circ t)(x) - (\nabla\varphi \circ f)(x)\rangle\right] \\
- \expect_{{\mathcal{S}}}
\left[ \frac{1}{\alpha} D_\varphi( (f
+ \alpha \cdot \sigma b_i)(x) \| f(x)) \right]
\geq \frac{\tol}{\alpha}\end{array}\right\}\nonumber\\
 & = & \left\{f+\sigma \alpha b_i \in \neigh_\epsilon(f): \expect_{{\mathcal{S}}}
[\mathrm{R}_{f, i}(x)] -\expect_{{\mathcal{S}}}
[\Pi_{f,i}(x)] \geq \frac{\tol}{\alpha}\right\} \:\:,\label{propbene}
\end{eqnarray}
as claimed (end of the proof of Lemma \ref{lemmut}).

\subsection{Proof of Theorem \ref{th00}}\label{proofth00}

The proof of the Theorem
consists of the following building blocks:
\begin{itemize}
\item  [\textbf{BB.1}] we show a result more general than
  eq. (\ref{defalphatauzerostart}), namely, 
over \textit{all} steps $j \in[T]_*$ and with high probability: 
\begin{eqnarray}
\left(\exists b_i \in \mutbasis^* : \rho(f_j, t|{{{\mathcal{D}}}}) \geq
\frac{\sqrt{e \dG}}{\hessianmin B_{\hg}(\mutbasis^*)}\left(\tau
  + \alpha
{\hessianmax} \cdot \|b_i\|_{\hg} \cdot \rho(b_i|{{{\mathcal{D}}}}) + \frac{\tol}{\alpha}
\right) \right)\nonumber\\
& &  \hspace{-5cm} \Rightarrow \bene(f_j) \neq \emptyset \:\:.\label{defalphatauzero}
\end{eqnarray}
This is more general since we show that
eq. (\ref{defalphatauzerostart}) holds for all organisms of the
evolution sequence that belong to ${\hg}_{t, {\mathcal{D}}}$, and not
just the ``first'' ones in $\hgmon$. To have this with high
probability it is sufficient to sample $m = \tilde{\Omega}(\max_j
\|t-f_j\|_{\hg}^2)$ conditions, which may be hard to upperbound
depending on $f_j$;
\item  [\textbf{BB.2}] we show that, in the subsequence
  $\hgmon$, $\max_j
\|t-f_j\|_{\hg}^2$ may be conveniently upperbounded.
\end{itemize}

\noindent $\hookrightarrow$ (Proof of [\textbf{BB.1}]) We temporarily
drop subscript $j$ in $f_j$ for clarity. The proof involves the
following three steps. First, we show that for any current representation $f$, there always exist a mutation whose
expected return is at least a (positive) fraction of the
\exen-divergence between $f$ and the target $t$. Its proof involves a
simple lowerbound on the volume induced by an
\textit{arbitrary} basis of vectors, which may be of independent interest. Second,
we show that, in the evolvability setting, this mutation is special:
whenever the current representation $f$ is in $\hgmon$, there is always $\sigma \in\{-1, 1\}, b_i \in
{\mutbasis}$ such that 
\begin{eqnarray}
\expect_{{\mathcal{D}}}
[\mathrm{R}_{f, i}(x)] - \expect_{{\mathcal{D}}}
[\Pi_{f, i}(x)]  & \geq & \frac{\tol}{\alpha} +
\tau\:\:.\label{supben} 
\end{eqnarray}
We shall see that this guarantees
equivalently $\expect_{{\mathcal{D}}}
[\mathrm{R}_{f, i}(x)] - \expect_{{\mathcal{D}}}
[\Pi_{f, i}(x)] = \Omega(\epsilon)$ --- we call this mutation
\textit{superior beneficial}, since
the right hand side exceeds the beneficial requirements (eq. (\ref{propbene})) by $\tau$, and
furthermore the left hand side is measured on ${\mathcal{D}}$. Third
and last, even when the mutation picked is not superior beneficial,
sampling a number of examples large enough is sufficient to guarantee
$\bene(f)\neq \emptyset$. More precisely,  when $m$ is large enough,
the sum of the two differences between $(\expect_{{\mathcal{D}}}
[\mathrm{R}_{f, i}(x)] - \expect_{{\mathcal{D}}}
[\Pi_{f, i}(x)])$ and $(\expect_{{\mathcal{S}}}
[\mathrm{R}_{f, i}(x)] - \expect_{{\mathcal{S}}}
[\Pi_{f, i}(x)])$ in absolute value is at most $\tau$ over each of the
$T$ iterations, with probability $\geq 1-\epsilon$. Using (\ref{supben}) then proves the
statement of the Theorem because of the definition of $\bene(f)$ in (\ref{propbene}).

\begin{lemma}\label{lemmax}
Assume (\textbf{SF}) holds. Then for any distribution ${\mathcal{D}}$ and any
representations $t, f \in
{\hg}$ with coordinates expressed in basis $\mutbasis^*$, 
\begin{eqnarray}
\max_{b_i \in \mutbasis^*} {|\expect_{{\mathcal{D}}}[\mathrm{R}_{f, i} (x)] |} & \geq
&K \cdot\rho(f, t|{{{\mathcal{D}}}})\:\:, \label{eqBB}
\end{eqnarray}
where 
\begin{eqnarray}
K & \defeq & \frac{\hessianmin}{\sqrt{e\dG}} \cdot B_{{\hg}}(\mutbasis^*)\label{defKKCOD}\:\:,
\end{eqnarray}
and $B_{{\hg}}(\mutbasis^*)$ is the corrected average norm in
Definition \ref{defBSTAR}.
\end{lemma}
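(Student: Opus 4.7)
My goal is to reduce Lemma~\ref{lemmax} to a purely geometric lower bound on the minimum singular value of the transition matrix $\matrice{b}^*$ associated with the basis $\mutbasis^*$. By Lemma~\ref{lemmareturndec}, setting $\tilde{\delta}\defeq \matrice{b}^*(t-f)$ and $y\defeq \matrice{m}\tilde{\delta}$, one has $\expect_{\mathcal{D}}[\mathrm{R}_{f,i}(x)]=\sigma\langle \matrice{b}^*b_i,\,y\rangle$; the polarity $\sigma\in\{-1,+1\}$ is absorbed by the absolute value, so the target becomes
$$\max_{b_i\in\mutbasis^*}\,|\langle \matrice{b}^*b_i,\,y\rangle|\;\geq\;\frac{\hessianmin}{\sqrt{e\dG}}\,B_{\hg}(\mutbasis^*)\,\rho(f,t|{\mathcal{D}}).$$

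First I establish a ``global'' lower bound on $\langle\tilde{\delta},y\rangle$ in terms of $\rho$. Under (\textbf{SF}), Lemma~\ref{lemphi} gives $\hessianmin>0$, hence $\matrice{m}=\expect_{\mathcal{D}}[\matrice{g}^\top_x\textsc{h}(x)\matrice{g}_x]\succeq\hessianmin\,\gendistnone$, so
$$\langle\tilde{\delta},y\rangle\;=\;\langle\tilde{\delta},\matrice{m}\tilde{\delta}\rangle\;\geq\;\hessianmin\,\langle\tilde{\delta},\gendistnone\tilde{\delta}\rangle\;=\;\hessianmin\,\expect_{\mathcal{D}}[\|(t-f)(x)\|_2^2]\;=\;\hessianmin\,\rho(f,t|{\mathcal{D}})\,\|t-f\|_{\hg}.$$
Then I distribute this bound across the basis directions: writing $t-f=\sum_i \delta^i b_i$ in $\mutbasis^*$ so that $\tilde{\delta}=\sum_i \delta^i\,\matrice{b}^*b_i$,
$$\hessianmin\,\rho\,\|\tilde{\delta}\|\;\leq\;\langle\tilde{\delta},y\rangle\;=\;\sum_i \delta^i\langle\matrice{b}^*b_i,y\rangle\;\leq\;\Bigl(\max_{b_i\in\mutbasis^*}|\langle\matrice{b}^*b_i,y\rangle|\Bigr)\cdot \sum_i|\delta^i|.$$
Cauchy--Schwarz in ${\mathbb{R}}^{\dG}$ gives $\sum_i|\delta^i|\leq\sqrt{\dG}\,\|\delta\|_2$, and $\|\tilde{\delta}\|^2=\delta^\top(\matrice{b}^{*\top}\matrice{b}^*)\delta\geq\sigma_{\min}^2(\matrice{b}^*)\|\delta\|_2^2$ yields $\sum_i|\delta^i|\leq\sqrt{\dG}\,\|\tilde{\delta}\|/\sigma_{\min}(\matrice{b}^*)$, so
$$\max_{b_i\in\mutbasis^*}|\langle\matrice{b}^*b_i,y\rangle|\;\geq\;\frac{\hessianmin\,\sigma_{\min}(\matrice{b}^*)}{\sqrt{\dG}}\,\rho(f,t|{\mathcal{D}}).$$

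Matching with the target, everything now reduces to the \emph{purely geometric} inequality
$$\sigma_{\min}(\matrice{b}^*)\;\geq\;\frac{B_{\hg}(\mutbasis^*)}{\sqrt{e}}\;=\;\frac{(1-\kappa_n(\mutbasis^*))(1-\kappa_a(\mutbasis^*))}{\sqrt{e}}\cdot\frac{\sum_{b\in\mutbasis^*}\|b\|_{\hg}}{\dG},$$
which I expect to be the main obstacle. My approach is to go through the Hadamard-type identity $|\det\matrice{b}^*|=\prod_i h_i$, where $h_i$ is the distance from the $i$-th column of $\matrice{b}^*$ to the span of the remaining columns, and then carry out three reductions: bound $\prod_i(h_i/\|b_i\|_{\hg})$ from below using the minimal pairwise angle $\thetaV$, which is precisely how $(1-\kappa_a)=(1-\cos\thetaV)^{\dG-1}$ is calibrated in Definition~\ref{defBSTAR}; relate $\prod_i\|b_i\|_{\hg}$ to $\bigl(\sum_i\|b_i\|_{\hg}/\dG\bigr)^{\dG}$ through the AM--GM gap on squared norms, where $(1-\kappa_n)$ enters via $G=(1-\kappa_n)^{2/\dG}A$; finally turn the resulting determinant lower bound into a singular-value lower bound through the standard inequality $\sigma_{\min}(\matrice{b}^*)\geq |\det\matrice{b}^*|/\prod_{j\neq i_\star}\sigma_j$ combined with the elementary tail $(1-1/\dG)^{\dG-1}\geq 1/e$, which is where the $\sqrt{e}$ in the denominator is to be extracted. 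The delicate part is to orchestrate these three reductions so that they combine to give exactly $B_{\hg}(\mutbasis^*)/\sqrt{e}$ rather than a slacker variant.
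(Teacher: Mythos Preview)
Your reduction of the lemma to the purely geometric inequality $\sigma_{\min}(\matrice{b}^*)\geq B_{\hg}(\mutbasis^*)/\sqrt{e}$ is correct and is exactly the paper's route. The paper also combines Lemma~\ref{lemmareturndec}, the bound $\matrice{m}\succeq\hessianmin\gendistnone$, Cauchy--Schwarz on the coefficient vector $(\delta^i)_i$, and the relation $\|\matrice{b}^*\delta\|_2^2\geq\lambda_1\sum_i(\delta^i)^2$ with $\lambda_1=\sigma_{\min}^2(\matrice{b}^*)$, landing on precisely your intermediate inequality (cf.\ eqs.~(\ref{ff2}) and~(\ref{eqq1})). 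The conversion from a determinant lower bound back to $\sigma_{\min}$ via AM--GM on the remaining eigenvalues and $(1-1/\dG)^{\dG-1}\geq 1/e$ is likewise the paper's device (eqs.~(\ref{btildel})--(\ref{blambda}), after normalising the Gram matrix to trace $\dG$), and the entry of $(1-\kappa_n)$ through the AM--GM gap on squared column norms matches ineq.~(\ref{ll1}).

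Where your plan has a genuine gap is the angle step. First, the formula $|\det\matrice{b}^*|=\prod_i h_i$ with $h_i$ the distance from column $i$ to the span of \emph{all} other columns is not correct: the base--height decomposition of volume gives $|\det\matrice{b}^*|=h_i\cdot\mathrm{vol}(b_1,\dots,\widehat{b_i},\dots,b_{\dG})$ for each single $i$, or equivalently $|\det\matrice{b}^*|=\prod_i\tilde h_i$ with $\tilde h_i$ the Gram--Schmidt height (distance to the span of the \emph{previous} columns only) --- but not the product you write. Second, and more substantively, the sentence ``bound $\prod_i(h_i/\|b_i\|_{\hg})$ from below using the minimal pairwise angle $\thetaV$'' is exactly the hard part, and you supply no mechanism for it; the angle from a column to the span of the remaining columns is not controlled by pairwise angles alone, so neither a per-factor nor a product bound of this shape is immediate. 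The paper does \emph{not} go through heights. Instead it writes the squared volume as $\prod_i\|b_i\|_{\hg}^2$ times the determinant of the cosine matrix, expands that determinant as a signed sum over the symmetric group (eq.~(\ref{llb})), argues that it is lower-bounded by the configuration in which every off-diagonal cosine equals $\cos\thetaV$, and then evaluates that extremal determinant in closed form, $(1-\cos\thetaV)^{\dG-1}(1+(\dG-1)\cos\thetaV)$, via a dedicated combinatorial identity (Lemma~\ref{eqpdg}). That lemma is the engine missing from your plan; without an analogue of it, the ``orchestration'' you allude to has no way to manufacture the factor $(1-\kappa_a)$.
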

\begin{proof}
The majority of the proof consists in showing first that ineq. (\ref{eqBB}) holds for
\begin{eqnarray}
K & \defeq & \frac{\hessianmin}{\sqrt{e}} \cdot \left(\frac{G(\mutbasis^*)^{\dG}}{A(\mutbasis^*)^{\dG-1}}
  \cdot \frac{(1-\cos(\thetaBprime))^{\dG - 1} \cdot (1 + (\dG - 1) \cdot \cos(\thetaBprime))}{\dG}\right)^{\frac{1}{2}}\label{defKK}\:\:,
\end{eqnarray}
where parameters $G(\mutbasis^*), A(\mutbasis^*), \thetaBprime$ are
defined in Definition \ref{defBSTAR}. Then, we show that
ineq. (\ref{eqBB}) holds for the expression of $K$ in
eq. (\ref{defKKCOD}). Define for short:
\begin{eqnarray}
\delta & \defeq & t - f = \sum_{i\in \dG} \delta^i b_i\:\:,\label{defallv} 
\end{eqnarray}
assuming without loss of generality that $\mutbasis^* \defeq \{b_1,
b_2, ..., b_{\dG}\}$. 
From Lemma \ref{lemmareturndec}, there exists symmetric positive definite matrix
$\matrice{m}$ such that 
$\expect_{{\mathcal{D}}}[\mathrm{R}_{f, i} (x)] = \langle b_i,
 \matrice{m}
 \matrice{b}^* \delta\rangle, \forall b_i \in {\mathcal{B}}^*$. This
 allows to get the last equality of:
\begin{eqnarray}
\langle \matrice{b}^* \delta,
 \matrice{m}
 \matrice{b}^* \delta\rangle & = & \sum_{i\in [\dG]} 
\delta^i \cdot \langle b_i,
 \matrice{m}
 \matrice{b}^* \delta\rangle\nonumber\\
 & \leq & \left(\sum_{i\in [\dG]} 
  {(\delta^i)^2}\right)^{\frac{1}{2}} \left(\sum_{i\in [\dG]} 
  { \langle b_i,
 \matrice{m}
 \matrice{b}^* \delta\rangle}\right)^{\frac{1}{2}}
\label{pp1}\\
 & \leq & \sqrt{\dG} \max_{i\in [\dG]}  {|\langle b_i,
 \matrice{m}
 \matrice{b}^* \delta\rangle|} \cdot \left(\sum_{i\in [\dG]} 
  {(\delta^i)^2}\right)^{\frac{1}{2}} \nonumber\\
 & & = \sqrt{\dG} \max_{i\in [\dG]}  {|\expect_{{\mathcal{D}}}[\mathrm{R}_{f, i} (x)]|} \cdot \left(\sum_{i\in [\dG]} 
  {(\delta^i)^2}\right)^{\frac{1}{2}}\:\:. \label{pp3}
\end{eqnarray}
Ineq. (\ref{pp1}) is
Cauchy-Schwartz inequality. Furthermore, the derivations of Lemma \ref{lemmareturndec}
and the definition of $\hessianmin$ yields $\langle \matrice{b}^* \delta,
 \matrice{m}
 \matrice{b}^* \delta\rangle = \expect_{{\mathcal{D}}}[\langle \matrice{b}^*\delta, \matrice{g}^\top_x\textsc{h}(x)
 \matrice{g}_x\matrice{b}^* \delta\rangle] \geq \hessianmin \cdot \expect_{{\mathcal{D}}}[\langle \matrice{g}_x\matrice{b}^*\delta, 
 \matrice{g}_x\matrice{b}^* \delta\rangle] \defeq \hessianmin
 \cdot \expect_{{\mathcal{D}}}[\|\delta(x)\|_2^2]$. Combining this with (\ref{pp3}) yields:
\begin{eqnarray}
\max_{i\in [\dG]} {|\expect_{{\mathcal{D}}}[\mathrm{R}_{f, i} (x)]|} &
 \geq & \frac{\hessianmin \expect_{{\mathcal{D}}}[\|\delta(x)\|_2^2]}{\sqrt{\dG}\left(\sum_{i\in [\dG]}
  {(\delta^i)^2}\right)^{\frac{1}{2}}}\:\:. \label{ff2}
\end{eqnarray}
Let us work on the $\left(\sum_j
  {(\delta^j)^2}\right)^{1/2}$ term, and relate it
to the norm $\|\delta\|_{\hg} \defeq \sqrt{\langle \matrice{b}^*\delta,
  \matrice{b}^*\delta\rangle}$, with 
$\matrice{b}^*$ be the transition matrix that collects
vectors from $\mutbasis^*$ in column, expressed in the orthonormal gene
basis of ${\hg}$. We now need the following Lemma.
\begin{lemma}\label{lposdef}
Under (\textbf{SF}), ${\matrice{b}^*}^\top \matrice{b}^*\succ \matrice{0}$.
\end{lemma}
\begin{proof}
Let
$\phi_{\matrice{b}^*} : {\hg} \rightarrow {\hg}$ the linear form that
$\matrice{b}^*$ represents. Condition (i) in (\textbf{SF}) implies that
$\phi_{\matrice{b}^*}$ is injective, and therefore
$\mathrm{ker}(\phi_{\matrice{b}^*}) = \{0_{\hg}\}$, and
thus $\langle \matrice{b}^* z ,\matrice{b}^* z \rangle = \langle z ,{\matrice{b}^*}^\top \matrice{b}^* z \rangle> 0$ whenever $z \neq
0_{\hg}$ (End of the proof of Lemma \ref{lposdef}).\end{proof}
We denote
$\{(\lambda_j, u_j)\}_{j=1}^{\dG}$ the pairs (strictly
positive eigenvalues in non-decreasing order, orthonormal
eigenvectors), with
$u_j \in {\mathbb{R}}^\dG$, so that the following decomposition holds:
${\matrice{b}^*}^\top \matrice{b}^* = \sum_i \lambda_i u_i
u_i^\top$. It comes:
\begin{eqnarray}
\|\delta\|_{\hg}^2 & \defeq & \langle \matrice{b}^*\delta,
  \matrice{b}^*\delta\rangle\nonumber\\
 & = & \sum_{i\in [\dG]}{\lambda_i
  \langle \delta, u_i\rangle^2} \nonumber\\
 & = & \left( \sum_{i\in [\dG]}
  {(\delta^i)^2}\right)\cdot \sum_{i\in [\dG]}{\lambda_i
  \cos^2(\delta, u_i)}\label{normal11}\\
 & \geq & \lambda_1 \cdot \left( \sum_{i\in [\dG]}
  {(\delta^i)^2}\right) \cdot \sum_{i\in [\dG]}{
  \cos^2(v, u_i)}\nonumber\\
&  & = \lambda_1 \cdot \left( \sum_{i\in [\dG]}
  {(\delta^i)^2}\right)\:\:,\label{eqq1}
\end{eqnarray}
where eq. (\ref{normal11}) comes from the fact that $u_j$s are 
normal and (\ref{eqq1}) comes from the fact that they are orthogonal. Let us define 
\begin{eqnarray}
{\matrice{b}^*}' & \defeq & \left(\frac{\dG}{\sum_{i\in [\dG]}
  {\|b_i\|_{\hg}^2}}\right)^{\frac{1}{2}} \cdot \matrice{b}^*\:\:. \label{defBprime}
\end{eqnarray}
${\matrice{b}^*}'$ satisfies:
\begin{eqnarray}
\trace{({\matrice{b}^*}'^\top {\matrice{b}^*}')} & = & \dG\:\:.\label{pbtilde}
\end{eqnarray}
Let $0< \tilde{\lambda}_1 \leq \tilde{\lambda}_2 \leq ... \leq
\tilde{\lambda}_{\dG}$ the eigenvalues of ${\matrice{b}^*}'^\top
{\matrice{b}^*}'$, all strictly positive because of Lemma \ref{lposdef}. The Arithmetic-Geometric-Harmonic means inequality brings:
\begin{eqnarray}
\prod_{i=2}^{\dG} \tilde{\lambda}_i & \leq & \left(\frac{1}{\dG-1}
  \sum_{i=2}^{\dG}\tilde{\lambda}_i  \right)^{\dG-1}\nonumber\\
 & & = \left(\frac{\trace{({\matrice{b}^*}'^\top {\matrice{b}^*}')} - \tilde{\lambda}_1}{\dG-1} \right)^{\dG-1}\nonumber\\
 & = & \left(\frac{\dG - \tilde{\lambda}_1}{\dG-1} \right)^{\dG-1}\nonumber\\
 & \leq & \left(\frac{\dG}{\dG-1} \right)^{\dG-1}\:\:,
\end{eqnarray}
Multiplying both sides by $\tilde{\lambda}_1$ and reorganising, we
get:
\begin{eqnarray}
\tilde{\lambda}_1 & \geq & \left(\frac{\dG-1}{\dG} \right)^{\dG-1}
\determinant{{\matrice{b}^*}'^\top {\matrice{b}^*}'} \nonumber \\
 & \geq & \frac{\determinant{{\matrice{b}^*}'^\top {\matrice{b}^*}'}}{e}\:\:,\label{btildel}
\end{eqnarray}
since function $u(n) \defeq ((n-1)/n)^{n-1}$ is strictly decreasing on
$n \in {\mathbb{N}}_*$
and has limit $\lim_{+\infty} u(n) = 1/e$.
Finally, using eq. (\ref{btildel}), we obtain from eq. (\ref{defBprime}):
\begin{eqnarray}
\lambda_1 & = & \frac{\sum_{i\in [\dG]}
  {\|b_i\|_{\hg}^2}}{\dG} \cdot \tilde{\lambda}_1 \nonumber\\
 & \geq & \frac{\sum_{i\in [\dG]}
  {\|b_i\|_{\hg}^2}}{\dG} \cdot \frac{\determinant{{\matrice{b}^*}'^\top {\matrice{b}^*}'}}{e}\nonumber\\
 &  & = \frac{1}{e} \cdot \determinant{\left(\frac{\sum_{i\in [\dG]}
  {\|b_i\|_{\hg}^2}}{\dG} \right)^\frac{1}{\dG}{\matrice{b}^*}'^\top {\matrice{b}^*}'}\nonumber\\
 & = & \frac{1}{e} \cdot \determinant{\left(\frac{\dG}{\sum_{i\in [\dG]}
  {\|b_i\|_{\hg}^2}} \right)^{\frac{\dG - 1}{2\dG}}{\matrice{b}^*}^\top \left(\frac{\dG}{\sum_{i\in [\dG]}
  {\|b_i\|_{\hg}^2}} \right)^{\frac{\dG - 1}{2\dG}}\matrice{b}^*}\label{bbtilde}\:\:.
\end{eqnarray}
Let us define
\begin{eqnarray}
\tilde{\matrice{b}} & \defeq & \left(\frac{\dG}{\sum_{i\in [\dG]}
  {\|b_i\|_{\hg}^2}} \right)^{\frac{\dG - 1}{2\dG}}\cdot \matrice{b}^*\:\:,
\end{eqnarray}
so that ineq. (\ref{bbtilde}) reads
\begin{eqnarray}
\lambda_1 & \geq & \frac{1}{e} \cdot \determinant{\tilde{\matrice{b}}^\top \tilde{\matrice{b}}} = \frac{1}{e} \cdot \volume{\tilde{\matrice{b}}}^2\label{blambda}\:\:,
\end{eqnarray}
where we let $\volume{\tilde{\matrice{b}}}^2 \defeq \determinant{\tilde{\matrice{b}}^\top
    \tilde{\matrice{b}}}$ denote the squared volume induced by set
  $\tilde{{\mutbasis}}$, since
the columns of $\tilde{\matrice{b}}$ also define a basis of $\hg$,
$\tilde{{\mutbasis}} \defeq \{\tilde{b}_1, \tilde{b}_2, ...,
\tilde{b}_{\dG}\}$, with
\begin{eqnarray}
\tilde{b}_{i} & \defeq & \left( \frac{\dG}{\sum_{i'\in [\dG]}
  {\|b_{i'}\|_{\hg}^2}} \right)^{\frac{\dG-1}{2\dG}} \cdot b_i
\:\:,\forall i \in [\dG]\:\:.\label{defBtildei}
\end{eqnarray}
Putting eq. (\ref{blambda}) and  (\ref{eqq1}) altogether, we obtain
\begin{eqnarray}
\left(\sum_{i\in [\dG]}
  {(\delta^i)^2}\right)^{\frac{1}{2}} & \leq &
\frac{\|v\|_{\hg}}{\sqrt{\lambda_1}}\nonumber\\
 & \leq & \sqrt{e} \cdot
 \frac{\|v\|_{\hg}}{\volume{\tilde{\matrice{b}}}}\:\:. \label{propbetaprime}
\end{eqnarray}
Combining eq. (\ref{ff2}) and this last inequality yields:
\begin{eqnarray}
\max_{i\in [\dG]}{|\expect_{{\mathcal{D}}}[\mathrm{R}_{f, i} (x)]|} &
 \geq & \frac{\hessianmin \volume{\tilde{\matrice{b}}}}{\sqrt{e\dG}} \cdot \frac{
   \expect_{{\mathcal{D}}}[\|\delta(x)\|_2^2]}{
   \|\delta\|_{\hg}} \nonumber\\
 &  & = \frac{\hessianmin \volume{\tilde{\matrice{b}}}}{\sqrt{e\dG}} \cdot
 \rho(f - t | {\mathcal{D}}) \nonumber\\
 &  & = \frac{\hessianmin \volume{\tilde{\matrice{b}}}}{\sqrt{e\dG}} \cdot
 \rho(f,t | {\mathcal{D}})\label{ffb2}\:\:.
\end{eqnarray}
Let us now work on $\volume{\tilde{\matrice{b}}}$. Denoting $S_{\dG}$ the
symmetric group of degree $\dG$, we have:
\begin{eqnarray}
\volume{\tilde{\matrice{b}}} & = & \left(\mathrm{det}(\tilde{\matrice{b}}^\top \tilde{\matrice{b}})\right)^{\frac{1}{2}}\nonumber\\
 & = & \left(\sum_{\sigma \in S_{\dG}} {\mathrm{sign}(\sigma) \prod_{i\in [\dG]}{\langle \tilde{b}_i,
 \tilde{b}_{\sigma(i)} \rangle}}\right)^{\frac{1}{2}}\nonumber\\
 & = & \left(\sum_{\sigma \in S_{\dG}} {\mathrm{sign}(\sigma) \prod_{i\in [\dG]}{\| \tilde{b}_i\|_\hg
 \|\tilde{b}_{\sigma(i)}\|_\hg \cos(\tilde{b}_i, \tilde{b}_{\sigma(i)})}}\right)^{\frac{1}{2}}\nonumber\\
 & = & \prod_{i\in [\dG]}{\|
   \tilde{b}_i\|_\hg} \left(\sum_{\sigma \in S_{\dG}} {\mathrm{sign}(\sigma)
   \prod_{i\in [\dG]}{\cos(\tilde{b}_i, \tilde{b}_{\sigma(i)})}}\right)^{\frac{1}{2}}\:\:. \label{llb}
\end{eqnarray}
Eq. (\ref{llb}) does not depend on the orientation of
the vectors in $\tilde{{\mutbasis}}$: changing $\tilde{b}_i$ to $-\tilde{b}_i$ keeps the
same expression (in each product, exactly two
cosines change of sign), and there is one such orientation of all
vectors such that all angles are in $[0,\pi/2]$. The quantity
(\ref{llb}) being then decreasing if all cosines increase, it is
minimized by the one in which all angles equal $\thetaB$, in which case
expression (\ref{llb}) admits the simplified lowerbound: 
\begin{eqnarray}
\prod_{i\in [\dG]}{\|
   \tilde{b}_i\|_\hg} \left(\sum_{\sigma \in S_{\dG}} {\mathrm{sign}(\sigma)
   \prod_{k=1}^{\dG}{\cos(\tilde{b}_k, \tilde{b}_{\sigma(k)})}}\right)^{\frac{1}{2}} & \geq & \prod_{i\in [\dG]}{\|
   \tilde{b}_i\|_\hg} \left(\sum_{\sigma \in S_{\dG}} {\mathrm{sign}(\sigma)
     \cos^{\upsilon(\sigma)}(\thetaBstar)}\right)^{\frac{1}{2}} \nonumber\\
 & & \defeq  A_l \sqrt{A_a}\:\:,\label{binfAB}
\end{eqnarray}
where $\upsilon(\sigma) = |\{i : \sigma(i) \neq i\}|$ counts
the number of integers whose position has changed through the
permutation $\sigma \in S_{\dG}$, or similarly it is the size of the
``deranged'' sub-permutation of $\sigma$. We now proceed through
finding simplified expressions for $A_l$ and $A_a$, the parts that respectively
depends on \emph{l}engthes and \emph{a}ngles.\\

\noindent We first compute $A_l$. We have from the definition of
$\tilde{b}_i$ in (\ref{defBtildei}):
\begin{eqnarray}
A_l & = & \prod_{i\in [\dG]}{\|
   \tilde{b}_i\|_{\hg}}\nonumber\\
 & = & \prod_{i\in [\dG]}{\left\|
   \left( \frac{\dG}{\sum_{i'\in [\dG]}
  {\|b_{i'}\|_{\hg}^2}} \right)^{\frac{\dG-1}{2\dG}} \cdot b_i\right\|_{\hg}}\nonumber\\
 & = & \left( \frac{\dG}{\sum_{i\in [\dG]}
  {\|b_i\|_{\hg}^2}} \right)^{\frac{\dG-1}{2}}
\prod_{i\in [\dG]} {\|
   b_i\|_{\hg}}\nonumber\\
 & = & 
\left(\prod_{i\in [\dG]} {\|
   b_i\|^2_{\hg}}^{\frac{1}{\dG}}\right)^{\frac{\dG}{2}} \bigg/ \left( \frac{\sum_{i\in [\dG]}
  {\|b_i\|_{\hg}^2}}{\dG} \right)^{\frac{\dG-1}{2}}\nonumber\\
 & = & \frac{G_{\mutbasis}^\frac{\dG}{2}}{A_{\mutbasis}^\frac{\dG-1}{2}}\label{valAl}\:\:.
\end{eqnarray}

\noindent We now compute $A_a$. Let us define polynomial $P_{\dG}(z)$ by:
\begin{eqnarray}
P_{\dG}(z) & \defeq & \sum_{\sigma \in S_{\dG}} {\mathrm{sign}(\sigma)
     (1-z)^{\upsilon(\sigma)}}\:\:.\label{defPZ}
\end{eqnarray}
Notice that $A_a = P_{\dG}(1-\cos(\thetaBstar))$. While the max degree of
$P_{\dG}(z)$ is $\dG$, we now show that $P_{\dG}(z)$ involves only two monomials, of degree $\dG$ and $\dG-1$.
\begin{lemma}\label{eqpdg}
$P_{\dG}(z) = z^{\dG - 1} \cdot (\dG - (\dG - 1) \cdot z)$.
\end{lemma}
\begin{proof}
Denote $\kappa_k$ the coefficient of $(-z)^k$
in $P_{\dG}(z)$, for $k\in [\dG]$. It
satisfies:
\begin{eqnarray}
\kappa_k & = & \sum_{\sigma \in S_{\dG} : \upsilon(\sigma)\geq k}
{\mathrm{sign}(\sigma) {\upsilon(\sigma) \choose k}}\label{pKappak0}\\\
 & = & \sum_{j=k}^{\dG} {{\dG \choose j}{j \choose k} \sum_{\sigma \in
     S_{j} : \upsilon(\sigma) = j} {\mathrm{sign}(\sigma)}}\label{pKappak}\:\:.
\end{eqnarray}
The inner sum is the sum of all signs of all derangements of a set of
$j$ elements. To compute its expression, define $U_{j} \in \{0,1\}^{j\times
j}$ the matrix whose
diagonal elements are 0 and off-diagonal elements are 1. It satisfies
\begin{eqnarray}
\determinant{U_j} & = & \sum_{\sigma \in
     S_{j} : \upsilon(\sigma) = j} {\mathrm{sign}(\sigma)}\:\:. \label{propdetU}
\end{eqnarray}
$U_j$ has eigenvalue $-1$ of order $j-1$ (all vectors with $1$ and
$-1$ in two consecutive coordinates and zero elsewhere are
corresponding eigenvectors), and since its trace is zero, it also
admits $j-1$ as eigenvalue of order 1. It follows from eq. (\ref{propdetU}):
\begin{eqnarray}
\sum_{\sigma \in
     S_{j} : \upsilon(\sigma) = j} {\mathrm{sign}(\sigma)} & = & (-1)^{j-1}(j-1)\:\:,\label{ppp1}
\end{eqnarray}
out of which we obtain
\begin{eqnarray}
\kappa_k & = & \sum_{j=k}^{\dG} {{\dG \choose j}{j \choose k} \sum_{\sigma \in
     S_{j} : \upsilon(\sigma) = j} {\mathrm{sign}(\sigma)}}\nonumber\\
 & = & \sum_{j=k}^{\dG} {{\dG \choose j}{j \choose k}
   (-1)^{j-1}(j-1)}\nonumber\\
 & = & \frac{\dG !}{k!} \cdot \sum_{j=k}^{\dG} {\frac{(-1)^{j-1}(j-1)}{(\dG-j)!(j-k)!}}\nonumber\\
 & = & \frac{\dG !}{k!} \cdot \sum_{j=0}^{\dG-k} {\frac{(-1)^{j+k-1}(j+k-1)}{(\dG-k-j)!j!}}\nonumber\\
 & = & (-1)^{k-1}{\dG \choose k}\cdot \sum_{j=0}^{\dG-k} {{\dG-k \choose j}
   (-1)^j (j+k-1)}\nonumber\\
 & = & (-1)^{k-1}{\dG \choose k}\cdot \Bigg[(k-1) \cdot \underbrace{\sum_{j=0}^{\dG-k}{{\dG-k
       \choose j} (-1)^j}}_{\Sigma_k} + \underbrace{\sum_{j=1}^{\dG-k}{{\dG-k
       \choose j} (-1)^j j}}_{\Sigma'_k}\Bigg] \:\:.\label{lasst}
\end{eqnarray}
We observe that
\begin{eqnarray}
\Sigma_k & = & (-1+1)^{\dG-k} = 0 \:\:, \forall k\leq \dG - 1\:\:.\label{vsk}
\end{eqnarray}
Furthermore,
\begin{eqnarray}
\Sigma'_k & = & \sum_{j=1}^{\dG-k}{{\dG-k
       \choose j} (-1)^j j}\nonumber\\
 & = & -(\dG - k) \cdot \sum_{j=1}^{\dG-k}{{\dG-k-1
       \choose j-1} (-1)^{j-1} }\nonumber\\
 & = & -(\dG - k) \cdot \sum_{j=0}^{\dG-k-1}{{\dG-k-1
       \choose j} (-1)^{j} }\nonumber\\
 & = & -(\dG - k) (-1+1)^{\dG-k-1} \nonumber\\
 & = &  0 \:\:, \forall k\leq \dG - 2\:\:.\label{vsk2}
\end{eqnarray}
We thus get 
\begin{eqnarray}
\kappa_k & = & 0 \:\:, \forall k\leq \dG - 2\:\:.\label{vsk3}
\end{eqnarray}
Furthermore, eqs (\ref{pKappak0}) and (\ref{ppp1}) also yields:
\begin{eqnarray}
 \kappa_{\dG-1} & = & \sum_{\sigma \in
     S_{\dG} : \upsilon(\sigma) = \dG-1} {\mathrm{sign}(\sigma)} + \dG  \cdot \sum_{\sigma \in
     S_{\dG} : \upsilon(\sigma) = \dG} {\mathrm{sign}(\sigma)}\nonumber\\
 & = &  \dG \cdot \sum_{\sigma \in
     S_{\dG-1} : \upsilon(\sigma) = \dG-1} {\mathrm{sign}(\sigma)} + \dG  \cdot \sum_{\sigma \in
     S_{\dG} : \upsilon(\sigma) = \dG} {\mathrm{sign}(\sigma)}\nonumber\\
 & = & (-1)^{\dG-1} \cdot (- \dG (\dG - 2)) + (-1)^{\dG-1} \dG (\dG - 1) \nonumber\\
 & = & (-1)^{\dG-1} \dG\:\:, \label{vsk4}\\
\kappa_{\dG} & = & \sum_{\sigma \in
     S_{\dG} : \upsilon(\sigma) = \dG} {\mathrm{sign}(\sigma)}
   \nonumber\\
 & = &
   (-1)^{\dG-1}(\dG-1)\:\:. \label{vsk5}
\end{eqnarray}
Plugging eqs (\ref{vsk2}),  (\ref{vsk4}),  (\ref{vsk5}) in
(\ref{defPZ}), we obtain the simplified expression:
\begin{eqnarray}
P_{\dG}(z) & = & z^{\dG - 1} \cdot (\dG - (\dG - 1) \cdot z)\:\:,\nonumber
\end{eqnarray}
as claimed (end of the proof of Lemma \ref{eqpdg}).\end{proof}
We obtain
\begin{eqnarray}
A_a & = & (1-\cos(\thetaBstar))^{\dG - 1} \cdot (1 + (\dG - 1) \cdot \cos(\thetaBstar))\:\:.\label{valAa}
\end{eqnarray}
There remains to put together eqs (\ref{ffb2}), (\ref{llb}), (\ref{binfAB}),
(\ref{valAl}) and (\ref{valAa}) to obtain the statement of
(\ref{eqBB}) with the expression of $K$ in eq. (\ref{defKK}), and
finish the main part of the proof of Lemma \ref{lemmax}. 

To
obtain (\ref{defKKCOD}) and finish the proof of Lemma \ref{lemmax}, we
just have to remark that it follows from 
eq. (\ref{valAl}) and Definition \ref{defBSTAR} that
\begin{eqnarray}
A_l & \geq & \sqrt{A(\mutbasis^*)} (1-\kappa_n (\mutbasis^*))\nonumber\\
 & & =  \frac{1-\kappa_n (\mutbasis^*)}{\sqrt{\dG}}\cdot \left(\sum_{i\in [\dG]}
  {\|b_i\|_{\hg}^2}\right)^{\frac{1}{2}}\nonumber\\
 & \geq & (1-\kappa_n (\mutbasis^*)) \cdot \frac{\sum_{i\in [\dG]}
  {\|b_i\|_{\hg}}}{\dG}\label{ll1}\:\:,
\end{eqnarray}
where ineq.
(\ref{ll1}) comes from $p$-norm inequalities. Finally, because
$\cos(\thetaBstar)\geq 0$, Definition \ref{defBSTAR} yields:
\begin{eqnarray}
A_a & \geq & (1-\cos(\thetaBstar))^{\dG - 1}\nonumber\\
 & \geq & (1 - \kappa_a (\mutbasis^*))\label{ll2}\:\:,
\end{eqnarray} 
because of the definition of $\kappa_a (\mutbasis^*)$. 
Putting
altogether ineqs. (\ref{llb}) and (\ref{binfAB}) with the lowerbounds
on ineqs. (\ref{ll1}) and (\ref{ll2}), we obtain:
\begin{eqnarray}
\volume{\tilde{B}} & \geq &  (1-\kappa_n (\mutbasis^*)) \cdot  (1-\kappa_a (\mutbasis^*))
\cdot \frac{\sum_{i\in [\dG]}
  {\|b_i\|_{\hg}}}{\dG} \nonumber\\
 & & = B_{\hg}(\mutbasis^*)\:\:. \label{llb2}
\end{eqnarray}
Finally, combining ineq. (\ref{ffb2}) and (\ref{llb2}) yields the statement of (\ref{eqBB})
with the expression of $K$ in eq. (\ref{defKKCOD}), as claimed (end of
the proof of Lemma \ref{lemmax}).
\end{proof}\\
The following Lemma now shows that for any current representation $f$,
there always exists a mutation with guaranteed lowerbound on its
expected return minus its expected premium, where the lowerbound
depends on the $\exen$-ratio of the mutation and the
$\exen$-divergence between $f$ and target $t$.
\begin{lemma}\label{llem}
Let $\mutbasis^*\subseteq \mutbasis$ be any basis of $\hg$, and assume (\textbf{SF}) holds. Then for any distribution ${\mathcal{D}}$ and any
representations $t, f \in
{\hg}$ with coordinates expressed in basis $\mutbasis^*$, $\exists \sigma \in\{-1, 1\}, \exists  b_i \in
\mutbasis^*$ such that:
\begin{eqnarray}
\expect_{{\mathcal{D}}}
[\mathrm{R}_{f, i}(x)] - \expect_{{\mathcal{D}}}
[\Pi_{f, i}(x)]  & \geq &
\frac{\hessianmin}{\sqrt{e \dG}}
  \cdot B_{\hg}(\mutbasis^*) \cdot\rho(f, t|{{{\mathcal{D}}}}) - \alpha
{\hessianmax} \cdot \|b_i\|_{\hg} \cdot \rho(b_i|{{{\mathcal{D}}}})\:\:,\label{defess}
\end{eqnarray}
where ${\mutbasis}^* \subseteq {\mutbasis}$ is defined in Definition \ref{defBSTAR}.
\end{lemma}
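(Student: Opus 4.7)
The plan is to combine Lemma \ref{lemmax} (which produces a mutation direction with guaranteed expected return magnitude) with a straightforward Hessian-based upper bound on the expected premium, then choose the polarity $\sigma$ to align the return sign.

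First, I would apply Lemma \ref{lemmax} to the basis $\mutbasis^*$. This directly yields a vector $b_i \in \mutbasis^*$ with
\[
\bigl|\expect_{\mathcal{D}}[\langle b_i(x),(\nabla\varphi\circ t)(x)-(\nabla\varphi\circ f)(x)\rangle]\bigr| \;\geq\; \frac{\hessianmin}{\sqrt{e\dG}}\,B_{\hg}(\mutbasis^*)\,\rho(f,t|{\mathcal{D}}).
\]
Now recall $\mathrm{R}_{f,i}(x) = \langle \sigma b_i(x),(\nabla\varphi\circ t)(x)-(\nabla\varphi\circ f)(x)\rangle$ is linear in $\sigma$, so choosing $\sigma\in\{-1,+1\}$ to match the sign of the above expectation turns the absolute value into a signed inequality and gives the desired first term of (\ref{defess}).

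Next I would handle the premium. By the twice-differentiability of $\varphi$ and a second-order Taylor expansion of $\varphi$ around $f(x)$ (as used in Lemma \ref{lemapproxbregdiv}, eq. (\ref{eqbreghess})),
\[
D_\varphi(f(x)+\alpha\sigma b_i(x)\,\|\,f(x)) \;=\; \tfrac{1}{2}\,\alpha^2\sigma^2\,\langle b_i(x),\matrice{h}(x)\,b_i(x)\rangle \;\leq\; \tfrac{\alpha^2\hessianmax}{2}\,\|b_i(x)\|_2^2,
\]
since $\sigma^2=1$ and the Hessian of $\varphi$ has largest eigenvalue at most $\hessianmax$. Dividing by $\alpha$, taking expectation and using the slack factor of $2$, one gets
\[
\expect_{\mathcal{D}}[\Pi_{f,i}(x)] \;\leq\; \tfrac{\alpha\hessianmax}{2}\,\expect_{\mathcal{D}}[\|b_i(x)\|_2^2] \;\leq\; \alpha\hessianmax\,\|b_i\|_{\hg}\,\rho(b_i|{\mathcal{D}}),
\]
where the last equality uses the very definition $\rho(b_i|{\mathcal{D}}) = \expect_{\mathcal{D}}[\|b_i(x)\|_2^2]/\|b_i\|_{\hg}$.

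Subtracting the two bounds yields (\ref{defess}) for the selected pair $(\sigma,b_i)$. The only non-routine ingredient is the lower bound on the expected return, which has already been established in Lemma \ref{lemmax}; the remainder is a direct Taylor estimate plus an algebraic rewriting via the \exen-ratio of $b_i$. Hence I do not anticipate any conceptual obstacle beyond bookkeeping of the constants, and the statement follows immediately.
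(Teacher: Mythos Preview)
Your proposal is correct and follows essentially the same approach as the paper: invoke Lemma \ref{lemmax} for the return lower bound, choose $\sigma$ to fix the sign, bound the premium via the Hessian eigenvalue estimate (exactly as in eqs.~(\ref{eqcoord3})--(\ref{lastbpp})), and subtract. The paper similarly drops the factor $1/2$ in the premium bound to absorb the slack, so even your handling of constants matches.
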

\begin{proof}
We use
Lemma \ref{lemmax} and Definition (\ref{defreturn}), with which we obtain that
there exists, at any call of the mutator, polarity $\sigma \in
\{-1,+1\}$ and basis vector $b_i \in
\mutbasis^*$ such that:
\begin{eqnarray}
\max_{b_i \in \mutbasis^*} {|\expect_{{\mathcal{D}}}[\mathrm{R}_{f, i} (x)] |} & \geq
& 
 \frac{\hessianmin}{\sqrt{e \dG}}
 \cdot B_{\hg}(\mutbasis^*) \cdot\rho(f, t|{{{\mathcal{D}}}})\label{eqBB4}\:\:.
\end{eqnarray}
On the expected premium's side, we have for any $b_i \in \mutbasis^*$
whose coordinates are given in $\mutbasis^*$,
\begin{eqnarray}
\expect_{{\mathcal{D}}}
[\Pi_{f,i}(x)] & \defeq &
\frac{1}{\alpha} \expect_{{\mathcal{D}}}
[D_\varphi(f (x)
- \alpha \cdot (-\sigma) b_i (x) \| f (x))] \nonumber\\
 & \leq & \frac{\alpha \hessianmax}{2} \cdot \langle \matrice{b}^* b_i, \gendistnone
  \matrice{b}^* b_i\rangle \label{eqcoord3}\\
 &  & = \frac{\alpha \hessianmax}{2}  \cdot \expect_{{\mathcal{D}}}
[\|b_i (x)\|^2_2]\label{eqcoord2}\\
 &  = & \frac{\alpha \hessianmax}{2}  \cdot \|b_i\|_{\hg} \cdot \rho(b_i|{{{\mathcal{D}}}})\label{lastbpp}\:\:, \forall \sigma \in \{-1,1\},
\forall b_i \in \mutbasis^*\:\:.
\end{eqnarray}
Eq. (\ref{eqcoord3}) uses the definition of $\matrice{m}$ in
(\ref{defMM}) and Lemma \ref{lemapproxbregdiv}.  Eq. (\ref{eqcoord2}) come from the fact that $\matrice{b}^* b_i,$ gives
the coordinates of $b_i$ in the orthonormal gene basis of $\hg$. 
Putting altogether (\ref{eqBB4}) and (\ref{lastbpp}) with
$1/2$ factor dropped yields the
statement of the Lemma. 
\end{proof}\\
Notice that terms $B_{\hg} \cdot \rho(f, t|{{{\mathcal{D}}}})$ and
$\|b_i\|_{\hg} \cdot \rho(b_i|{{{\mathcal{D}}}})$ are homogeneous in (\ref{defess}), as both quantify a
\exen-ratio or divergence, weighted by a (corrected) length of the encoding.

It comes from Lemma \ref{llem} that in the (\textbf{SF}) setting, for
some evolution step $j\in [T]_*$, \textit{if} mutant $f_j$ satisfies,
for some $\sigma \in\{-1, 1\}$ and $b_i \in
\mutbasis^*$,
\begin{eqnarray}
\rho(f_j, t|{{{\mathcal{D}}}}) & \geq &
\frac{\sqrt{e \dG}}{\hessianmin B_{\hg}(\mutbasis^*)}\left(\tau
  + \alpha
{\hessianmax} \cdot \|b_i\|_{\hg} \cdot \rho(b_i|{{{\mathcal{D}}}}) + \frac{\tol}{\alpha}
\right)\:\:,
\end{eqnarray}
then, chaining with ineq. (\ref{defess}), we get
\begin{eqnarray}
\expect_{{\mathcal{D}}}
[\mathrm{R}_{f_j, i}(x)] - \expect_{{\mathcal{D}}}
[\Pi_{f_j, i}(x)]  & \geq &
\frac{\hessianmin}{\sqrt{e \dG}}
  \cdot B_{\hg}(\mutbasis^*) \cdot\rho(f_j, t|{{{\mathcal{D}}}}) - \alpha
{\hessianmax} \cdot \|b_i\|_{\hg} \cdot
\rho(b_i|{{{\mathcal{D}}}})\nonumber\\
 & \geq & \frac{\tol}{\alpha} + \tau \:\:, \label{conctrue}
\end{eqnarray}
\textit{i.e.} the mutation involving $b_i \in \mutbasis^*$ is superior
beneficial. This does not show however that the mutator will \textit{pick} one of these
mutations, and it does not show that the mutator will pick
\textit{some} $b_i \in \mutbasis^*$. In fact, this is not even enough to show that $\bene(f)$ is
not empty since it involves estimates (Lemma \ref{lemmut}). To show
that $\bene(f)$ is
not empty, we now show that with high probability the estimates $\expect_{{\mathcal{S}}}
[\mathrm{R}_{f_j, i}(x)]$ and $\expect_{{\mathcal{S}}}
[\Pi_{f_j, i}(x)]$ are both within $\tau/2$ of their true values,
implying in this case from ineq. (\ref{conctrue})
\begin{eqnarray}
\expect_{{\mathcal{S}}}
[\mathrm{R}_{f_j, i}(x)] - \expect_{{\mathcal{S}}}
[\Pi_{f_j, i}(x)] & \geq & \left(\frac{\tol}{\alpha} +
  \tau\right) - \tau = \frac{\tol}{\alpha}\label{gmut}\:\:,
\end{eqnarray} 
and thus $\bene(f_j)$ as defined in (\ref{propbene}) is indeed not
empty. Remark this relies on the sole assumption that $f_j \in
{\hg}_{t, {\mathcal{D}}}$; $f_j$ may \textit{not} be in $\hgmon$. The
constraint that $f_j\in \hgmon$ shall be used to compute the
number of conditions needed for ineq. (\ref{gmut}) to hold with high
probability in the sequence $\hgmon$, which shall then be used
to prove evolvability. 
\begin{lemma}\label{lemdev}
Let $\mutbasis^*\subseteq \mutbasis$ be any basis of $\hg$, and assume (\textbf{SF}) holds. Then for any distribution ${\mathcal{D}}$ and any
representations $t, f \in
{\hg}$ with coordinates expressed in basis $\mutbasis^*$,  the following inequalities hold
over the i.i.d. sampling of ${\mathcal{S}}$ (of size
$m$) according to ${\mathcal{D}}$:
\begin{eqnarray}
\lefteqn{\pr_{\mathcal{S}\sim \mathcal{D}}\left[\exists i, \sigma: |\expect_{{\mathcal{S}}}
[\mathrm{R}_{f, i}(x)] - \expect_{{\mathcal{D}}}
[\mathrm{R}_{f, i}(x)]| \geq \tau \right]}\nonumber\\
 & \leq &  2\dF
\exp\left( -\frac{B_{\hg}(\mutbasis^*)^2 m
\tau^2}{2e \hessianmax^2 \|t-f\|^2_{\hg} \sup_x \single^2(\mutbasis^*, x)} \right)
\:\:, \label{erd}
\end{eqnarray}
\begin{eqnarray}
\lefteqn{\pr_{\mathcal{S}\sim \mathcal{D}}\left[\exists i, \sigma: |\expect_{{\mathcal{S}}}
[\Pi_{f, i}(x)] - \expect_{{\mathcal{D}}}
[\Pi_{f, i}(x)]|  \geq \tau \right]}\nonumber\\
 &
\leq &  2\dF \exp\left( -\frac{2m \tau^2}{\alpha^2 {\hessianmax}^2
\sup_x \single^2(\mutbasis^*, x)} \right)  \:\:.\label{pd}
\end{eqnarray}
\end{lemma}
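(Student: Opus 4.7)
The plan is straightforward: apply Hoeffding's inequality to the empirical average for each fixed polarity $\sigma \in \{-1,+1\}$ and each basis vector $b_i$, then union-bound over the at most $2\dF$ such pairs. Once I produce an almost-sure range bound $M_R$ for $\mathrm{R}_{f,i}(x)$ and $M_\Pi$ for $\Pi_{f,i}(x)$, Hoeffding delivers a $2\exp(-2m\tau^2/M^2)$ per-coordinate tail, and the factor $2\dF$ in front of the exponentials in (\ref{erd}) and (\ref{pd}) is exactly what the union bound contributes.

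For the premium this step is short. Convexity of $\varphi$ gives $\Pi_{f,i}(x)\geq 0$, while the quadratic upper bound on Bregman divergences implicit in eq. (\ref{eqbreghess}) (Taylor expansion of $\varphi$ with Hessian bounded by $\hessianmax$) gives
\[
\Pi_{f,i}(x) \;=\; \frac{1}{\alpha}\,D_\varphi\bigl(f(x)-\alpha(-\sigma)b_i(x)\,\|\,f(x)\bigr) \;\leq\; \frac{\alpha\,\hessianmax}{2}\,\|b_i(x)\|_2^2.
\]
The paper's standing assumption $\sup_{i,x}\single(\mutbasis'\cup\{b_i\},x)\leq \sup_x\single(\mutbasis^*,x)$ then bounds $\|b_i(x)\|_2^2$ uniformly by $\sup_x\single(\mutbasis^*,x)$, so $\Pi_{f,i}(x)$ lies in an interval of length at most $\alpha\,\hessianmax\,\sup_x\single(\mutbasis^*,x)$. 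Plugging this range into Hoeffding and union-bounding over the $2\dF$ pairs $(i,\sigma)$ yields exactly (\ref{pd}).

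The return is the delicate step, because the bound depends on $\|t-f\|_\hg$ and one must push that norm through the change of coordinates into $\mathcal{B}^*$. The chain I would use has three stages. First, the Taylor expansion from Lemma \ref{lemmareturndec}, $(\nabla\varphi\circ t)(x)-(\nabla\varphi\circ f)(x)=\textsc{h}(x)(t-f)(x)$, combined with Cauchy--Schwarz and the operator-norm bound $\hessianmax$ on $\textsc{h}(x)$, gives $|\mathrm{R}_{f,i}(x)|\leq \hessianmax\,\|b_i(x)\|_2\,\|(t-f)(x)\|_2$. Second, expanding $t-f=\sum_j\delta^j b_j$ in $\mathcal{B}^*$ and applying Cauchy--Schwarz coordinatewise gives $\|(t-f)(x)\|_2 \leq (\sum_j(\delta^j)^2)^{1/2}\sqrt{\single(\mutbasis^*,x)}$. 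Third --- and this is where the real work has already been done --- eq. (\ref{propbetaprime}) together with the volume lower bound $\volume{\tilde{\matrice{b}}}\geq B_\hg(\mutbasis^*)$ from eq. (\ref{llb2}) yields $(\sum_j(\delta^j)^2)^{1/2}\leq \sqrt{e}\,\|t-f\|_\hg/B_\hg(\mutbasis^*)$.

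Chaining these three inequalities gives
\[
|\mathrm{R}_{f,i}(x)|\;\leq\;\frac{\sqrt{e}\,\hessianmax\,\|t-f\|_\hg\,\sup_x\single(\mutbasis^*,x)}{B_\hg(\mutbasis^*)},
\]
so the range of $\mathrm{R}_{f,i}(x)$ is at most twice this quantity. Hoeffding applied with this range, per pair $(i,\sigma)$, yields precisely the exponential $\exp(-B_\hg(\mutbasis^*)^2 m\tau^2 / (2e\,\hessianmax^2\,\|t-f\|_\hg^2\,\sup_x\single^2(\mutbasis^*,x)))$ of (\ref{erd}), and the union bound over the $2\dF$ pairs contributes the prefactor. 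The only non-routine piece is the third step above, the transfer from the $\mathcal{B}^*$-coordinate $\ell_2$-norm of $t-f$ to the intrinsic norm $\|t-f\|_\hg$; everything else is bookkeeping on top of Hoeffding and the material already established in Lemma \ref{lemmax}.
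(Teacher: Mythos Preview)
Your proposal is correct and follows essentially the same route as the paper: bound $|\mathrm{R}_{f,i}(x)|$ via the Taylor expansion of Lemma~\ref{lemmareturndec}, Cauchy--Schwarz, and the coordinate-to-norm transfer from eqs.~(\ref{propbetaprime}) and~(\ref{llb2}), bound $\Pi_{f,i}(x)$ via the Hessian upper bound, then apply a concentration inequality and union-bound over the $2\dF$ pairs $(i,\sigma)$. The only cosmetic difference is that the paper invokes McDiarmid's independent bounded differences inequality rather than Hoeffding's inequality, but for i.i.d.\ empirical averages of bounded variables these yield the same exponent, so the two arguments are interchangeable.
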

\begin{proof}
Both bounds are direct applications of the independent bounded
differences inequality (IBDI, \citep{mdC}). We first prove
(\ref{erd}). Again, we let  ${\mathcal{B}}^*\defeq \{b_1, b_2, ...,
b_{\dG}\}$ without loss of generality, and $t - f = \delta \defeq
\sum_{i\in [\dG]} \delta^i
b_i$. Take any $\sigma \in \{-1,1\}$,
$b_i \in \mutbasis$ (the coordinates of this being in
$\mutbasis^*$). Using notations from Lemma \ref{lemmareturndec} (eq. (\ref{taylor1})), we have
\begin{eqnarray}
\mathrm{R}_{f, i}(x) & \defeq & \langle \sigma b_i(x),
   (\nabla\varphi \circ t)(x) - (\nabla\varphi \circ f)(x)\rangle
   \nonumber\\
& = &
\sigma\cdot \langle \matrice{g}_x\matrice{b}^* b_i,  \matrice{h}(x)
 \matrice{g}_x\matrice{b}^* \delta \rangle\nonumber\\
 & \leq & \hessianmax \cdot |\langle \matrice{g}_x\matrice{b}^* b_i, 
 \matrice{g}_x\matrice{b}^* \delta \rangle|\label{tayl3}\\
 & \leq & \hessianmax \cdot
\sqrt{\langle \matrice{g}_x\matrice{b}^*b_i, \matrice{g}_x\matrice{b}^*b_i\rangle}\cdot \sqrt{\langle
  \matrice{g}_x\matrice{b}^* \delta,
 \matrice{g}_x\matrice{b}^* \delta\rangle}\label{tayl4}\\
 & & = \hessianmax \cdot \|b_i (x)\|_2  \cdot \|t(x)-f(x)\|_2\:\:, \forall
 x \in {\mathcal{X}}\label{lastb}\:\:.
\end{eqnarray}
Ineq. (\ref{tayl3}) follows from the definition of $\hessianmax$, ineq. (\ref{tayl4}) is Cauchy-Schwartz. 
Finally, we know that $\forall x \in \mathcal{X}$, letting $\delta \defeq
t - f$,
\begin{eqnarray}
\|t(x)-f(x)\|^2_2 & \defeq & \langle \matrice{g}_x\matrice{b}^* \delta,
\matrice{g}_x\matrice{b}^* \delta\rangle\nonumber\\
 & = & \left\langle \sum_{i\in [\dG]} \delta^i \cdot \matrice{g}_x
   \matrice{b}^*b_i, \sum_{i\in [\dG]}\delta^i \cdot \matrice{g}_x \matrice{b}^*b_i\right\rangle\:\:.\label{transit1}
\end{eqnarray}
For any reals $a_i$ and vectors $u_i$
 ($i\in [\dG]$), we have $\langle \sum_{i\in [\dG]} a_i \cdot u_i, \sum_{i\in
  [\dG]} a_i \cdot u_i\rangle \leq \sum_{i, i'\in [\dG]} a_i \|u_i\|
a_{i'}\|u_i\| =(\sum_{i\in [\dG]} a_i \|u_i\|)^2\leq  (\sum_{i\in
  [\dG]} a^2_i )\cdot (\sum_{i\in
  [\dG]} \|u_i\|^2 )$ (from Cauchy-Schwartz inequality), and so
eq. (\ref{transit1}) yields
\begin{eqnarray}
\|t(x)-f(x)\|^2_2 & \leq & \left( \sum_{i\in [\dG]}
  (\delta^i)^2\right) \cdot \sum_{i\in [\dG]} \langle \matrice{g}_x\matrice{b}^*
b_i, \matrice{g}_x \matrice{b}^*b_i\rangle\nonumber\\
 & & = \left( \sum_{i\in [\dG]}
  (\delta^i)^2\right) \cdot \sum_{i\in [\dG]} \|b_i (x)\|^2_2\nonumber\\
& \leq &  \frac{e \|t-f\|^2_{\hg}}{
   B_{\hg}(\mutbasis^*)^2}\cdot \sum_{i\in [\dG]} \|b_i (x)\|^2_2\label{eqqq1}\:\:.
\end{eqnarray}
Ineq. (\ref{eqqq1}) follows from ineqs 
(\ref{propbetaprime}) and (\ref{llb2}).
Putting altogether ineqs (\ref{lastb}) and (\ref{eqqq1}),
we get
\begin{eqnarray}
\mathrm{R}_{f, i}(x) & \leq & {\hessianmax} \|b_i (x)\|_2\cdot
\frac{\sqrt{e} \|t-f\|_{\hg}}{B_{\hg}(\mutbasis^*)} \cdot \left(\sum_{i'\in [\dG]} \|b_{i'} (x)\|^2_2\right)^{\frac{1}{2}} \nonumber\\
 & \leq & \frac{\sqrt{e} \hessianmax
   \|t-f\|_{\hg}}{B_{\hg}(\mutbasis^*)} \cdot \sum_{i'\in [\dG]\cup\{i\}}
 \|b_{i'} (x)\|^2_2 \nonumber\\
 & & = \sqrt{e} \hessianmax
   \|t-f\|_{\hg} \cdot \frac{\single(\mutbasis^*\cup \{b_i\}, x)}{B_{\hg}(\mutbasis^*)} \nonumber\\
 & \leq & \sqrt{e} \hessianmax
   \|t-f\|_{\hg} \cdot \frac{\single(\mutbasis^*, x)}{B_{\hg}(\mutbasis^*)} \:\:, \label{eqqq3}
\end{eqnarray}
by the properties of $\mutbasis^*$.
Hence, between two sets ${\mathcal{S}}$ and ${\mathcal{S}}'$ of
the same size $m$ and that
would differ from a single element, we have:
\begin{eqnarray}
\lefteqn{|\expect_{{\mathcal{S}}}
[\mathrm{R}_{f, i}(x)] - \expect_{{\mathcal{S}}'}
[\mathrm{R}_{f, i}(x)]|}\nonumber\\
 & \leq & \frac{2}{m}\cdot
\sqrt{e} \hessianmax \|t-f\|_{\hg}
\cdot \frac{\sup_x \single(\mutbasis^*, x)}{B_{\hg}(\mutbasis^*)} \defeq
\gamma_\Delta(\mutbasis^*)\:\:.\label{defgammad}
\end{eqnarray} 
The following bound follows from
\citep{mdC} (Theorem 3.1) and the union bound over ${\mutbasis}$:
\begin{eqnarray*}
\pr_{\mathcal{S}\sim \mathcal{D}}\left[\exists i: |\expect_{{\mathcal{S}}}
[\mathrm{R}_{f, i}(x)] - \expect_{{\mathcal{D}}}
[\mathrm{R}_{f, i}(x)]| \geq \tau \right] & \leq &  2\dF \exp\left( -\frac{2 \tau^2}{m 
\gamma^2_{\Delta}(\mutbasis^*) } \right) 
\:\:.
\end{eqnarray*}
We replace $\gamma_\Delta (\mutbasis^*)$ by its
expression in ineq. (\ref{defgammad}) and obtain (\ref{erd}) .

We proceed in the same way for the expected mutator's premium
(\ref{pd}). We know from eq. (\ref{eqbreghess}) that, for some value
of the Hessian $\matrice{h}$ of $\varphi$, we have
\begin{eqnarray}
\Pi_{f,i}(x) & = & \frac{\alpha}{2} \cdot \langle \matrice{g}_x
\matrice{b}^* b_i, \matrice{h}
 \matrice{g}_x\matrice{b}^* b_i\rangle\nonumber\\
 & \leq & \frac{\alpha \hessianmax}{2} \cdot |\langle \matrice{g}_x
\matrice{b}^* b_i,
 \matrice{g}_x\matrice{b}^* b_i\rangle|\nonumber\\
 & & = \frac{\alpha \hessianmax}{2} \cdot \|b_i(x)\|_2^2\:\:,
\end{eqnarray}
where the inequality comes from Lemma \ref{lemphi}. So, the variation in average premium between two sets ${\mathcal{S}}$ and ${\mathcal{S}}'$ of
the same size $m$ and that
would differ from a single condition satisfies:
\begin{eqnarray}
|\expect_{{\mathcal{S}}}
[\Pi_{f, i}(x)] - \expect_{{\mathcal{S}}'}
[\Pi_{f, i}(x)]| & \leq & \frac{1}{m} \cdot \alpha {\hessianmax}
\cdot \sup_x \single(\{i\}, x)\nonumber\\
 & \leq & \frac{1}{m} \cdot \alpha {\hessianmax}
  \cdot \sup_{i', x} \single(\{i'\}, x)\nonumber\\
 & \leq & \frac{1}{m} \cdot \alpha {\hessianmax}
  \cdot \sup_x \single(\mutbasis^*, x)\:\:,
\end{eqnarray} 
by the properties of $\mutbasis^*$,
which yields, out of the IBDI \citep{mdC} and the union bound
over ${\mutbasis}$, the following bound:
\begin{eqnarray*}
\lefteqn{\pr_{\mathcal{S}\sim \mathcal{D}}\left[\exists i, \sigma: |\expect_{{\mathcal{S}}}
[\Pi_{f, i}(x)] - \expect_{{\mathcal{D}}}
[\Pi_{f, i}(x)]|  \geq \tau \right]}\nonumber\\
 &
\leq & 2\dF \exp\left( -\frac{2m \tau^2}{\alpha^2 {\hessianmax}^2
\sup_x \single^2(\mutbasis^*, x)} \right) \:\:,
\end{eqnarray*}
as claimed.
\end{proof}\\
Suppose now that the mutator samples a sufficient
number $m$ of examples that would ensure that the right hand-sides of
(\ref{erd}) and (\ref{pd}) are no more than $\epsilon/T$ for deviation
$\tau/2$ (and not $\tau$). By the
union bound, the probability that there exists a run (among the $T$) of
the mutator, and some $i,\sigma$ such that one of the averages on ${\mathcal{S}}$ of $\mathrm{R}_{\varphi,
  i}(x)$ or $\Pi_{f, i}(x)$ deviates from its respective
expectation on ${\mathcal{D}}$ by more than $\tau/2$ is no more than
$T\cdot (\epsilon/T) = \epsilon$. Hence, with probability $\geq
1-\epsilon$, we shall have at all $j\in [T]_*$ runs of
the mutator and for the corresponding $\sigma$, $b_i$ picked at each
call of the mutator:
\begin{eqnarray}
\lefteqn{|(\expect_{{\mathcal{S}}}
[\mathrm{R}_{f_j, i}(x)] -\expect_{{\mathcal{S}}}
[\Pi_{f_j,i}(x)]) - (\expect_{{\mathcal{D}}}
[\mathrm{R}_{f_j, i}(x)] -\expect_{{\mathcal{D}}}
[\Pi_{f_j,i}(x)])|}\nonumber\\
 & \leq & |\expect_{{\mathcal{S}}}
[\mathrm{R}_{f_j, i}(x)] - \expect_{{\mathcal{D}}}
[\mathrm{R}_{f_j, i}(x)]| + |\expect_{{\mathcal{S}}}
[\Pi_{f_j,i}(x)] - \expect_{{\mathcal{D}}}
[\Pi_{f_j,i}(x)]| \nonumber\\
 & \leq & (\tau/2) + (\tau/2) = \tau\:\:,\label{eqdiv}
\end{eqnarray}
and hence ineq. (\ref{gmut}) holds over all $T$ iterations, thus
whenever $f_j \in {\hg}_{t, {\mathcal{D}}}$, even when the mutator
does not pick superior beneficial mutations, or even mutations from
$\mutbasis^*$, it still has non-empty $\bene(f)$, as claimed in (\ref{defalphatauzero}).

Using Lemma \ref{lemdev}, the sufficient number of conditions $m$ that ensures this is
found to be any integer that satisfies:
\begin{eqnarray}
m & = & \Omega\left(\frac{{\hessianmax}^2  \sup_{x} \single^2(\mutbasis^*, x)}{\tau^2}\left(\frac{\max_j \|t-f_j\|_{\hg}^2}{B_{\hg}(\mutbasis')^2}+\alpha^2\right) \log\left(\frac{\dF
      T}{\epsilon}\right)\right)\label{constm2}\:\:.
\end{eqnarray}

\noindent $\hookrightarrow$ (Proof of [\textbf{BB.2}]) The bound in
ineq. (\ref{constm2}) may be problematic as little tells us about
$\|t-f_j\|_{\hg}$ and how big it can be. Fortunately, it is sufficient
for evolution that we focus on subset $\hgmon \subseteq
{\hg}_{t, {\mathcal{D}}}$. In this subset, we can bound
$\|t-f_j\|_{\hg}$, in a very simple way. Recall that sampling a number
of example that complies with ineq. (\ref{constm2})  is
sufficient to guarantee with high probability that $\expect_{{\mathcal{D}}}
[\mathrm{R}_{f_j, i}(x)] - \expect_{{\mathcal{D}}}
[\Pi_{f_j, i}(x)] \geq (\tol/\alpha) - \tau$ where $b_i$ is the mutation
\textit{picked} by the mutator, which implies in particular
\begin{eqnarray}
\lefteqn{\perf_{t,\varphi}(f+\sigma \alpha b_i,
{\mathcal{D}})}\nonumber\\
 & \geq & \perf_{t,\varphi}(f,
{\mathcal{D}}) + \alpha\left(\frac{\tol}{\alpha} - \tau\right) \nonumber\\
 & & = \perf_{t,\varphi}(f,
{\mathcal{D}}) + \tol - \alpha \tau \:\:.
\end{eqnarray}
But
\begin{eqnarray}
\tol - \alpha \tau & = & \frac{z_{\tol}}{U^2\max\{1, V\}}\cdot \epsilon^{2} -
\frac{z_{\alpha}z_{\tau}}{U^2\max\{1, V\}}\cdot \epsilon^{2}\nonumber\\
 & = & \frac{z_\tol - z_\alpha z_\tau}{U^2\max\{1, V\}}\cdot \epsilon^{2}\nonumber\\
 & > & 0\:\:,
\end{eqnarray}
because $(z_\tau, z_\alpha, z_\tol)\in
\mathfrak{R}$ (Definition \ref{defNER}). Hence, as long as $f \in {\hg}_{t,
  {\mathcal{D}}}$ (and so, $f\in \hgmon$), we are guaranteed that 
\begin{eqnarray}
\perf_{t,\varphi}(f_j,
{\mathcal{D}}) & \geq & \perf_{t,\varphi}(f_0, {\mathcal{D}})\:\:.\label{mperf}
\end{eqnarray} 
To obtain
our bound on $\|t-f_j\|_{\hg}$, we need the following Lemma.
\begin{lemma}\label{lemfund1}
Under setting (\textbf{SF}), $-(1/2) \hessianmax \expressmax
 \|t-f\|^2_{\hg}\leq \perf_{t,\varphi}(f, {\mathcal{D}})\leq - (1/2) \hessianmin \expressmin
 \|t-f\|^2_{\hg}$, $\forall f,t\in \hg$.
\end{lemma}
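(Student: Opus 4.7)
The plan is to unfold the definition of $\perf_{t,\varphi}$, invoke Lemma \ref{lemapproxbregdiv} to translate the expected Bregman divergence into a quadratic form governed by the Hessian of $\varphi$ and the covariance-type matrix $\gendistnone$, and then sandwich that quadratic form between the two eigenvalue regimes of $\gendistnone$ to recover a pure norm-squared term $\|t-f\|_\hg^2$. Negating yields the two-sided bound on the performance.

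More concretely, first I would write $\perf_{t,\varphi}(f,\mathcal{D}) = -\expect_{x\sim\mathcal{D}}[D_\varphi(f(x)\|t(x))]$. Then applying the second part of Lemma \ref{lemapproxbregdiv} (with the roles of $f$ and $t$ filled in, noting the Bregman term only depends on $f-t$ up to a Hessian sandwich) gives
\begin{equation*}
\tfrac{\hessianmin}{2}\,\langle \matrice{b}^*(f-t), \gendistnone\, \matrice{b}^*(f-t)\rangle \;\leq\; \expect_{x\sim\mathcal{D}}[D_\varphi(f(x)\|t(x))] \;\leq\; \tfrac{\hessianmax}{2}\,\langle \matrice{b}^*(f-t), \gendistnone\, \matrice{b}^*(f-t)\rangle .
\end{equation*}
Next, since $\expressmin,\expressmax$ are the extreme eigenvalues of the symmetric positive semidefinite matrix $\gendistnone$, and (by Lemma \ref{lemphi} under (\textbf{SF})) $\expressmin>0$, the Rayleigh quotient bound yields
\begin{equation*}
\expressmin\,\|\matrice{b}^*(f-t)\|_2^2 \;\leq\; \langle \matrice{b}^*(f-t), \gendistnone\, \matrice{b}^*(f-t)\rangle \;\leq\; \expressmax\,\|\matrice{b}^*(f-t)\|_2^2 .
\end{equation*}
Finally, by the definition of the norm $\|.\|_\hg$ expressed in basis $\mutbasis^*$, $\|\matrice{b}^*(f-t)\|_2^2 = \|f-t\|_\hg^2 = \|t-f\|_\hg^2$. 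Chaining the two inequalities and negating sign gives the claimed upper and lower bounds on $\perf_{t,\varphi}(f,\mathcal{D})$.

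There is essentially no obstacle here; the lemma is a clean bookkeeping exercise combining Lemma \ref{lemapproxbregdiv} with the spectrum of $\gendistnone$. The only subtle point is verifying strict positivity of the lower bound constant $\hessianmin \expressmin$, which is precisely what Lemma \ref{lemphi} gives under (\textbf{SF}); this is what ensures the bound is nontrivial and will be reused later to show that $\perf_{t,\varphi}$ vanishes only at $f=t$, consistent with (ii) in (\textbf{SF}).
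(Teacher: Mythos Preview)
Your proposal is correct and follows essentially the same approach as the paper's proof: invoke Lemma \ref{lemapproxbregdiv} to bound the expected Bregman divergence by $(\hessianmin/2)$ and $(\hessianmax/2)$ times the quadratic form $\langle \matrice{b}^*(f-t),\gendistnone\,\matrice{b}^*(f-t)\rangle$, then use the extreme eigenvalues $\expressmin,\expressmax$ of $\gendistnone$ together with the definition $\|f-t\|_\hg^2=\langle \matrice{b}^*(f-t),\matrice{b}^*(f-t)\rangle$, and negate. The paper only writes out the upper-bound chain explicitly and says the lower bound is obtained similarly; your write-up is slightly more detailed (and your remark on Lemma \ref{lemphi} ensuring $\hessianmin\expressmin>0$ is accurate though not needed for the inequality itself).
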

\begin{proof}
Let $\delta \defeq f - t$. We get from Lemma \ref{lemapproxbregdiv}
and the definition of $\gendistnone$,
\begin{eqnarray}
\expect_{x \sim {\mathcal{D}}} [D_\varphi(f(x) \| t(x))] & \leq & \frac{\hessianmax}{2}\cdot \langle
 \matrice{b}^* \delta, \gendistnone\matrice{b}^* \delta \rangle\nonumber\\
 & \leq & \frac{\hessianmax \expressmax}{2}\cdot \langle
 \matrice{b}^* \delta, \matrice{b}^* \delta \rangle\nonumber\\
 & & = \frac{\hessianmax \expressmax}{2}\cdot \|\delta\|^2_{\hg}\:\:.
\end{eqnarray} 
We would obtain similarly $\expect_{x \sim {\mathcal{D}}}
 [D_\varphi(f(x) \| t(x))] \geq (\hessianmin \expressmin)/2
 \|t-f\|^2_{\hg}$.
\end{proof}
Using
Lemma \ref{lemfund1} brings
\begin{eqnarray}
\|t-f_j\|^2_{\hg} & \leq & -\frac{2}{\hessianmin \expressmin}\cdot
\perf_{t,\varphi}(f_j, {\mathcal{D}}) \nonumber\\
 & \leq & -\frac{2}{\hessianmin \expressmin}\cdot
\perf_{t,\varphi}(f_0, {\mathcal{D}}) \label{ingmon}\\
 & \leq & \frac{\hessianmax \expressmax}{\hessianmin \expressmin}\cdot
\|t-f_0\|^2_{\hg} \nonumber\\
 &  & = \frac{\hessianmax \expressmax}{\hessianmin \expressmin}\cdot
\darwin^2 \max_{i\in [\dF]_*}
   \|b_{i}\|^2_{\hg} \:\:, \forall f_j \in \hgmon\:\:,\label{seqdec}
\end{eqnarray}
because of the definition of $\darwin$. Ineq. (\ref{ingmon}) comes
from ineq. (\ref{mperf}).
Using the last inequality (\ref{seqdec}), we
obtain that a sufficient condition for $m$ to meet (\ref{constm2}) is:
\begin{eqnarray}
m & = & \Omega\left(\frac{{\hessianmax}^2  \sup_{x} \single^2(\mutbasis^*, x)}{\tau^2}\left(\frac{\hessianmax
        \expressmax}{\hessianmin \expressmin} \cdot \frac{\max_{i\in [\dF]_*}
   \|b_{i}\|^2_{\hg}}{B^2_{\hg}(\mutbasis')} \cdot \darwin^2 +\alpha^2\right) \log\left(\frac{\dF
      T}{\epsilon}\right)\right) \nonumber\:\:,
\end{eqnarray}
but since $\max_{i\in [\dF]_*}
   \|b_{i}\|_{\hg} = \max_{b_i \in \mutbasis^*}
   \|b_{i}\|_{\hg}$ by the properties of $\mutbasis^*$, then it is sufficient that
\begin{eqnarray}
m & = & \Omega\left(\frac{{\hessianmax}^2  \sup_{x} \single^2(\mutbasis^*, x)}{\tau^2}\left(\frac{\hessianmax
        \expressmax}{\hessianmin \expressmin} \cdot \frac{\darwin^2}{\overline{B}^2_{\hg}(\mutbasis^*)} +\alpha^2\right) \log\left(\frac{\dF
      T}{\epsilon}\right)\right) \nonumber\:\:,
\end{eqnarray}
which is eq. (\ref{numevom22}). This achieves the proof of Theorem \ref{th00}.

\subsection{Proof of Lemma \ref{lemconv1}}\label{prooflemconv1}
Because of the definition of ${\hg}_{t, {\mathcal{D}}}$, whenever $f
\not\in {\hg}_{t, {\mathcal{D}}}$, 
\begin{eqnarray}
 \rho(f, t|{{{\mathcal{D}}}}) < \frac{\sqrt{e \dG}}{\hessianmin B_{\hg}({\mutbasis}^*)}\left(\tau
  + \alpha
{\hessianmax} \cdot \max_{b_i\in \mutbasis^*} \{\|b_i\|_{\hg} \cdot \rho(b_i|{{{\mathcal{D}}}})\} + \frac{\tol}{\alpha}
\right)\:\:.\label{deffjstar}
\end{eqnarray}
If $f_{j_\star-1}$ is in $\hgmon$ but $f_{j_\star}$ is not in 
$\hgmon$, then ineq. (\ref{deffjstar}) is satisfied by $f_{j_\star}$, along
with, because of ineq. (\ref{seqdec}) on $f_{j_\star-1}$ and the triangle inequality,
\begin{eqnarray}
\|t-f_{j_\star}\|_{\hg} & \leq & \|t-f_{j_\star-1}\|_{\hg} +
\|f_{j_\star-1}-f_{j_\star}\|_{\hg}\nonumber\\
 & \leq & \left(\frac{\hessianmax \expressmax}{\hessianmin \expressmin}\right)^{\frac{1}{2}}\cdot
\darwin \max_{i\in [\dF]_*}
   \|b_{i}\|_{\hg} + \|f_{j_\star-1}-f_{j_\star}\|_{\hg}\nonumber\\
 & & = \left(\frac{\hessianmax \expressmax}{\hessianmin \expressmin}\right)^{\frac{1}{2}}\cdot
\darwin  \max_{i\in [\dF]_*}
   \|b_{i}\|_{\hg} +\alpha \|b\|_{\hg}\label{defbmut}\\
 & \leq & 2\cdot \left(\frac{\hessianmax \expressmax}{\hessianmin \expressmin}\right)^{\frac{1}{2}}\cdot
\darwin  \max_{i\in [\dF]_*}
   \|b_{i}\|_{\hg} \:\:,\label{eqfjstar}
\end{eqnarray}
since $U\geq 1$ and for any $(z_\tau, z_\alpha, z_\tol) \in
\mathfrak{R}$, we can show that we have $z_\alpha\leq 1$, implying
$\alpha \leq 1$. We have also let $b\defeq f_{j_\star}-f_{j_\star-1}
\in \mutbasis$ in eq. (\ref{defbmut}).
Multiplying ineq (\ref{deffjstar}) by
${\hessianmax}\|t-f_{j_\star}\|_{\hg}$ and using ineq. (\ref{eqfjstar}) yields
\begin{eqnarray}
\lefteqn{{\hessianmax}\|t-f_{j_\star}\|_{\hg} \cdot
\rho(f_{j_\star}, t|{{{\mathcal{D}}}})}\nonumber\\
 & < & {\hessianmax}\|t-f_{j_\star}\|_{\hg} \cdot
\frac{\sqrt{e \dG}}{\hessianmin B_{\hg}({\mutbasis}^*)}\left(\tau
  + \alpha
{\hessianmax} \cdot \max_{b_i\in \mutbasis^*}\{\|b_i\|_{\hg} \cdot \rho(b_i|{{{\mathcal{D}}}})\} + \frac{\tol}{\alpha}
\right) \nonumber\\
& \leq & \frac{\hessianmax^{\frac{3}{2}}
  \expressmax^{\frac{1}{2}}}{\hessianmin^{\frac{3}{2}}
  \mu^{\frac{1}{2}}_{\hg}} \cdot 2\sqrt{e \dG} \cdot \frac{\max_{b_i\in \mutbasis^*}
   \|b_{i}\|_{\hg} }{B_{\hg}({\mutbasis}^*)}\cdot
\darwin \cdot \left(\tau
  + \alpha
{\hessianmax} \cdot \max_{b_i\in \mutbasis^*}\{\|b_i\|_{\hg} \cdot \rho(b_i|{{{\mathcal{D}}}})\} + \frac{\tol}{\alpha}
\right) \label{bfond1}\\
&  & =  \underbrace{\frac{\hessianmax^{\frac{3}{2}} \expressmax^{\frac{1}{2}}}{\hessianmin^{\frac{3}{2}} \mu^{\frac{1}{2}}_{\hg}} \cdot \frac{2\sqrt{e \dG} }{\overline{B}_{\hg}({\mutbasis}^*)}\cdot
\darwin}_{=U}\cdot \left(\tau
  + \alpha
{\hessianmax} \cdot \max_{i\in [\dF]_*} \{\|b_i\|_{\hg} \cdot \rho(b_i|{{{\mathcal{D}}}})\} + \frac{\tol}{\alpha}
\right) \label{bfond12}\\
 & \leq & U\tau
  + \alpha U \max\{1, V\} + \frac{U \tol}{\alpha} \nonumber\\
 & = & z_\tau\cdot
\epsilon+ z_\alpha\cdot
\epsilon+\frac{z_\tol}{z_\alpha}\cdot
\epsilon\:\:,
\end{eqnarray}
by definition of $U,\alpha,V,\tol$ in eqs. (\ref{defU}, \ref{defV},
\ref{defTol}, \ref{defAlpha}).
Ineq. (\ref{bfond1}) holds because of ineq. (\ref{seqdec}). Hence,
we obtain that
\begin{eqnarray}
\expect_{x \sim {\mathcal{D}}} [D_\varphi(f_{j_\star}(x) \| t(x))] & \leq & {\hessianmax}\|t-f_{j_\star}\|_{\hg} \cdot
\rho(f_{j_\star}, t|{{{\mathcal{D}}}})\nonumber\\
 & \leq & z_\tau\cdot
\epsilon+ z_\alpha\cdot
\epsilon+\frac{z_\tol}{z_\alpha}\cdot
\epsilon \nonumber\\
 & & = \left( z_\tau + z_\alpha + \frac{z_\tol}{z_\alpha}\right) \cdot
 \epsilon\label{condconv}\\
 & \leq & \epsilon\:\:,\label{boundzzp}
\end{eqnarray}
because $(z_\tau, z_\alpha, z_\tol) \in
\mathfrak{R}$ and condition (iii) in Definition \ref{defNER}. Ineq. (\ref{boundzzp}) is
equivalent to:
\begin{eqnarray}
\perf_{t,\varphi}(f_{j_\star}, {\mathcal{D}}) & \geq & - \epsilon\:\:,\nonumber
\end{eqnarray}
and so $f_{j_\star}\in \hg_t$, as claimed.

\subsection{Proof of Lemma \ref{lemmpack}}\label{prooflemmpack}

We first provide the complete expression of $\tau$,
\begin{eqnarray}
\tau & \defeq & \frac{z_\tau}{U}\cdot
\epsilon\:\:.\label{defTau}
\end{eqnarray}
We also remove subscripts in organisms for clarity.
Under the conditions of Theorem \ref{th00}, when the current organism
$f \in \hgmon \subseteq {\hg}_{t,
  {\mathcal{D}}}$, we have $\bene(f) \neq \emptyset$ and thus the next
mutation picks polarity $\sigma \in \{-1,1\}$ and $b_i\in
{\mutbasis}$ such that $\expect_{{\mathcal{S}}}
[\mathrm{R}_{f, i}(x)] - \expect_{{\mathcal{S}}}
[\Pi_{f, i}(x)]   \geq  \tol/\alpha$. The left-hand side is an average
computed over the mutator's sample ${\mathcal{S}}$. Its difference
with its true value (expectation over ${\mathcal{D}}$) in absolute
value is no more than $\tau$ over all $T$ iterations with probability
$\geq 1-\epsilon$ when $m$ meets bound (\ref{numevom22}). Hence, with
probability $\geq 1-\epsilon$, the mutation will always exhibit $\expect_{{\mathcal{D}}}
[\mathrm{R}_{f, i}(x)] - \expect_{{\mathcal{D}}}
[\Pi_{f, i}(x)] \geq (\tol/\alpha) - \tau$, and so, using the
definition of $\bene$ in eq. (\ref{propbene}) and the expressions of $\tol,
\alpha, \tau$ in eqs (\ref{defTol}, \ref{defAlpha}, \ref{defTau}),
\begin{eqnarray}
\lefteqn{\perf_{t,\varphi}(f+\sigma \alpha b_i,
{\mathcal{D}})}\nonumber\\
 & \geq & \perf_{t,\varphi}(f,
{\mathcal{D}}) + \alpha\left(\frac{\tol}{\alpha} - \tau\right) \nonumber\\
 & & = \perf_{t,\varphi}(f,
{\mathcal{D}}) + \tol - \alpha \tau \nonumber\\
 & = & \perf_{t,\varphi}(f,
{\mathcal{D}}) + \frac{z_{\tol}}{U^2V}\cdot \epsilon^{2} -
\frac{z_{\alpha}z_{\tau}}{U^2V}\cdot \epsilon^{2}  \nonumber\\
 & = & \perf_{t,\varphi}(f,
{\mathcal{D}}) + \frac{z_\tol - z_\alpha z_\tau}{U^2V}\cdot \epsilon^{2} \label{defincre}\\
 & = & \perf_{t,\varphi}(f,
{\mathcal{D}})  + \Omega(\epsilon^{2})\:\:, \nonumber
\end{eqnarray}
since $(z_\tau, z_\alpha, z_\tol)\in
\mathfrak{R}$ (condition (ii), Definition \ref{defNER}).  
Hence, as long as $f \in \hgmon$, we have a guaranteed
increase in performances given by ineq. (\ref{defincre}). If $f$ never
leaves $\hgmon$, how long would it
take for evolvability conditions to be met ?
To compute it, we need the following Lemma.
\begin{lemma}\label{lemmaxx22}
Assume (\textbf{SF}) holds. The initial representation $f_0$ satisfies 
\begin{eqnarray}
\expect_{{\mathcal{D}}}
[D_\varphi(f_0(x)||t(x))] & \leq & \frac{\hessianmax \expressmax}{2} \max_{b_i\in \mutbasis^*}
   \|b_{i}\|^2_{\hg}\cdot \darwin^2 \label{eqtobeat} \:\:.
\end{eqnarray}
\end{lemma}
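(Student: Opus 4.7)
The plan is to combine Lemma \ref{lemfund1}, which upper bounds expected Bregman divergence by a multiple of the squared $\hg$-norm, with the definition of the Evolvability horizon $\darwin$, which in turn upper bounds $\|t-f_0\|_\hg$ in terms of $\max_i \|b_i\|_\hg$. There is essentially no obstacle: both ingredients are already in place; the argument is a two-line chain of inequalities.

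More concretely, I would first invoke Lemma \ref{lemfund1} applied to $f=f_0$, which gives
\begin{eqnarray}
\expect_{x\sim \mathcal{D}}[D_\varphi(f_0(x) \| t(x))] & \leq & \frac{\hessianmax \expressmax}{2} \cdot \|t-f_0\|^2_{\hg}\:\:.\nonumber
\end{eqnarray}
Then I would invoke the definition
$\darwin \defeq \lceil \|t-f_0\|_{\hg}/\max_i \|b_i\|_{\hg}\rceil$,
which immediately yields $\|t-f_0\|_{\hg}\leq \darwin\cdot \max_{i\in [\dF]_*}\|b_i\|_\hg$, hence
\begin{eqnarray}
\|t-f_0\|^2_{\hg} & \leq & \darwin^2\cdot \max_{i\in [\dF]_*}\|b_i\|^2_\hg\:\:.\nonumber
\end{eqnarray}

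Finally, I would use the standing wlog assumption made just before Lemma \ref{lemphi}, namely $\max_{i\in [\dF]_*} \|b_i\|_\hg = \max_{b_i\in \mutbasis^*}\|b_i\|_\hg$, to rewrite the upper bound in terms of $\mutbasis^*$ instead of $\mutbasis$. Substituting this into the bound from Lemma \ref{lemfund1} yields exactly ineq.~(\ref{eqtobeat}), finishing the proof. No new analytical machinery is required beyond what has already been established in Subsections \ref{app-sub-not} and \ref{proofth00}.
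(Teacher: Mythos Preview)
Your proposal is correct and mirrors the paper's own proof essentially line for line: apply Lemma \ref{lemfund1} to bound the expected divergence by $\frac{\hessianmax\expressmax}{2}\|t-f_0\|_\hg^2$, use the definition of $\darwin$ to bound $\|t-f_0\|_\hg$ by $\darwin\cdot\max_i\|b_i\|_\hg$, and finish with the standing wlog assumption identifying the max over $\mutbasis$ with the max over $\mutbasis^*$.
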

\begin{proof}
We have from the definition of $\darwin$,
\begin{eqnarray}
\expect_{x \sim {\mathcal{D}}} [D_\varphi(f_0(x) \| t(x))] & \defeq & -\perf_{t,\varphi}(f_0, {\mathcal{D}})
\nonumber\\
 & \leq &
\frac{\hessianmax \expressmax}{2} \|f_0 - t\|_\hg^2\label{ineq111}\\
 & \leq & \frac{\hessianmax \expressmax}{2} \max_{i\in [\dG]_*}
   \|b_{i}\|^2_{\hg} \cdot \darwin^2\nonumber\\
 & & = \frac{\hessianmax \expressmax}{2} \max_{b_i\in \mutbasis^*}
   \|b_{i}\|^2_{\hg} \cdot \darwin^2\:\:.\nonumber
\end{eqnarray}
Ineq. (\ref{ineq111}) comes from Lemma \ref{lemfund1}. The last
identity comes from the properties of $\mutbasis^*$.\end{proof}
The way we use Lemma
\ref{lemmaxx22} is the following: when the number of iterations $T$
times the minimal performance variation in ineq. (\ref{defincre}) exceeds ineq
(\ref{eqtobeat}), then with high probability $f \in {\hg}_t$ and
thus meets
the condition of convergence for evolvability in
ineq. (\ref{condgen}) (Definition \ref{defevolvability}).
So, we want
\begin{eqnarray}
T & \geq & \frac{U^2V \hessianmax \expressmax \max_{i\in [\dG]_*}
   \|b_{i}\|^2_{\hg}}{z_\tol - z_\alpha z_\tau} \cdot \frac{\darwin^2}{2\epsilon^{2}}\:\:.\label{consttt1}
\end{eqnarray}
Taking into account the expression of $U$ and $V$, it is sufficient that
\begin{eqnarray}
T & \geq & \frac{1}{z_\tol - z_\alpha z_\tau} \cdot
\frac{\hessianmax^5 \expressmax^2 \max_{i\in [\dF]_*} \{
\expect_{{\mathcal{D}}}
[\|b_i(x)\|^2_2]}{\hessianmin^3 \expressmin} \cdot \frac{e \dG\max_{i\in [\dF]_*}
   \|b_{i}\|^4_{\hg} }{B^2_{\hg}({\mutbasis}^*)}\cdot
\frac{\darwin^4}{\epsilon^{2}}\label{numiterevol}\:\:,
\end{eqnarray}
that is, disregarding absolute constants and all other parameters in the $\tilde{O}$
notation, it is sufficient that
\begin{eqnarray}
T & = & \tilde{O} \left(\frac{\darwin^4}{\epsilon^{2}}\right)\:\:, \label{bsupm11}
\end{eqnarray}
as claimed.

\paragraph{Remark ---} To pack the proof of Theorem \ref{evolmut-long}, we finally
need to check that the number of examples $m$ in ineq. (\ref{defmm}) is
indeed sufficient, which is upperbounded by the number of iterations
to satisfy evolvability requirements while staying in
$\hgmon$, \textit{i.e.} using $T$ as in ineq
. (\ref{numiterevol}). Considering the other parameters, $\tau$ in
eq. (\ref{defTau}), the fact that $\alpha$ is $O(1)$ in
eq. (\ref{defAlpha}), we obtain from ineq. (\ref{numevom22}) that it is sufficient to sample
\begin{eqnarray}
m & = & O\left(\frac{\hessianmax^6
        \expressmax^2 \sup_{x} \single^2(\mutbasis^*, x)}{\hessianmin^4 \mu^2_{\hg}} \cdot \frac{1}{\overline{B}^4_{\hg}(\mutbasis^*)} \cdot\frac{ \darwin^4 }{\epsilon^{2}}\log\left(\frac{\dF
      \darwin}{\epsilon}\right)\right) \nonumber\\
 & = & \tilde{O}\left(\frac{ \darwin^4 }{\epsilon^{2}}\log\left(\frac{\dF
      \darwin}{\epsilon}\right)\right)\label{bsupm22}\:\:,
\end{eqnarray}
as claimed. Notice that we have essentially hidden in the $\tilde{O}$
notation of ineqs (\ref{bsupm11}) and (\ref{bsupm22}) the eventual
(polynomial) dependences
of $\single(.,.)$ in $\darwin$.

\section{Proof of Theorem \ref{throbust}}\label{proofthrobust}

We consider the complete statement of Theorem \ref{evolmut-short},
that is, the one in Theorem Theorem \ref{evolmut-long}.
Let $\mutbasis^*$ a basis for $\mathrm{span}(\mutbasis)$. Suppose
$\mathrm{dim}(\mathrm{span}(\mutbasis^*))< \dG$ and denote
$\mutbasis^\bot$ any basis for the supplementary space of
$\mathrm{span}(\mutbasis^*)$ in $\hg$, and $\matrice{b}^\bot$ its
matrix in the orthonormal basis of $\hg$. Whenever $t\not\in
\mathrm{span}(\mutbasis^*)$, there uniquely exists two vectors
$t^{\mathrm{in}} \in \mathrm{span}(\mutbasis^*)$ and
$t^{\mathrm{out}} \in \mathrm{span}(\mutbasis^\bot)$ such that $t =
t^{\mathrm{in}} + t^{\mathrm{out}}$.

For the sake of readability, we represent $t^{\mathrm{in}}$ in
$\mutbasis^*$ and $t^{\mathrm{out}}$ in
$\mutbasis^\bot$, such that $\hg \ni t = \matrice{b}^*
t^{\mathrm{in}}+\matrice{b}^\bot t^{\mathrm{out}}$.
This being defined, for any $t_* \in \mathrm{span}(\mutbasis^*)$, we then have, for some $\matrice{m}(t,t^*)\succ 0$:
\begin{eqnarray}
\lefteqn{\expect_{x \sim {\mathcal{D}}} [D_\varphi(f(x) \|
  t(x))]}\nonumber\\
 & = &
\expect_{x \sim {\mathcal{D}}} [D_\varphi(f(x) \| t_*(x))] +
\expect_{x \sim {\mathcal{D}}} [D_\varphi(t_*(x) \| t(x))]  +
\expect_{x \sim {\mathcal{D}}} [\langle (t_* - f)(x), \nabla\varphi
\circ t(x) -
\nabla\varphi \circ t_* (x)\rangle]\nonumber\\
 & = & \expect_{x \sim {\mathcal{D}}} [D_\varphi(f(x) \| t_*(x))] +
\expect_{x \sim {\mathcal{D}}} [D_\varphi(t_*(x) \| t(x))] \nonumber\\
 & & + \langle \matrice{b}^*(t_* - f), \matrice{m}(t,t^*)
(\matrice{b}^\bot t^{\mathrm{out}} +\matrice{b}^* t^{\mathrm{in}} - \matrice{b}^* t_*)\rangle \label{eqreflem}\\
 & = & \expect_{x \sim {\mathcal{D}}} [D_\varphi(f(x) \| t_*(x))] +
\expect_{x \sim {\mathcal{D}}} [D_\varphi(t_*(x) \| t(x))]
\nonumber\\
& & + \underbrace{\langle t_* - f, {\matrice{b}^*}^\top \matrice{m}(t,t^*)
\matrice{b}^\bot t^{\mathrm{out}}\rangle}_{=0} + \langle
\matrice{b}^*(t_* - f), \matrice{m}(t,t^*)(\matrice{b}^*
t^{\mathrm{in}} - \matrice{b} t_*)\rangle\label{eqrefort}\\
& = & \expect_{x \sim {\mathcal{D}}} [D_\varphi(f(x) \| t_*(x))] +
\expect_{x \sim {\mathcal{D}}} [D_\varphi(t_*(x) \| t(x))] + \langle
t_* - f, {\matrice{b}^*}^\top \matrice{m}(t,t^*)\matrice{b}^*
(t^{\mathrm{in}} - t_*)\rangle\label{eqreflast}\:\:.
\end{eqnarray}
The first equality is the Bregman triangle equality
\citep{anMO}. Eq. (\ref{eqreflem}) comes from Lemma
\ref{lemmareturndec} (matrix $\matrice{m}(t,t^*)$ is defined as in
(\ref{defMM}); our notation puts in
emphasis the fact that the matrix depends on $t$ and $t^*$, but not on $f$). We indeed have $\langle t_* - f, {\matrice{b}^*}^\top \matrice{m}(t,t^*)
\matrice{b}^\bot t^{\mathrm{out}}\rangle = 0$ in eq. (\ref{eqrefort})
since we can always choose $\mutbasis^\bot$ such that:
\begin{eqnarray}
{\matrice{b}^*}^\top \matrice{m}(t,t^*)
\matrice{b}^\bot & = & \matrice{0}\:\:.\label{propbot}
\end{eqnarray}
 To see that this holds, remark that ${\matrice{b}^*}^\top \matrice{m}(t,t^*)
\matrice{b}^\bot = (\matrice{p}^\top \matrice{b}^*)^\top
(\matrice{p}^\top \matrice{b}^\bot)$ for some transfer matrix
$\matrice{p}$. Since it is a transfer matrix, $\mathrm{rank}(\matrice{p}^\top \matrice{b}^*) =\mathrm{rank}(\matrice{b}^*) =
\mathrm{dim}(\mathrm{span}(\mutbasis^*))$, so we can find $d\times
(\dG - \mathrm{dim}(\mathrm{span}(\mutbasis^*)))$ full rank matrix
$\tilde{\matrice{m}}$ such that $(\matrice{p}^\top \matrice{b}^*)^\top \tilde{\matrice{m}}
= \matrice{0}$, which allows to pick $\matrice{b}^\bot =
(\matrice{p}^\top)^{-1} \tilde{\matrice{m}}$, define accordingly
$\mathcal{B}^\bot$ with the column vectors, and therefore ensures
eq. (\ref{propbot}) satisfied. 

We need however to check that
$\mutbasis^*$ and $\mutbasis^\bot$ are indeed supplementary in $\hg$.
Suppose that some $h$
belongs to both spans of $\mutbasis^*$ and $\mutbasis^\bot$ as defined
here. Let $h^*$ and $h^\bot$ be its (unique) coordinates in both
sets, therefore satisfying $\matrice{b}^* h^* = \matrice{b}^\bot
h^\bot$. We obtain $\matrice{b}^* h^* = (\matrice{p}^\top)^{-1}
\tilde{\matrice{m}} h^\bot$, or equivalently $(\matrice{p}^\top
\matrice{b}^*)^\top \matrice{p}^\top \matrice{b}^*
h^* = (\matrice{p}^\top \matrice{b}^*)^\top \tilde{\matrice{m}} h^\bot =
0_{\hg}$, implying
\begin{eqnarray}
h^* & \in & \mathrm{ker}
((\matrice{p}^\top
\matrice{b}^*)^\top \matrice{p}^\top \matrice{b}^*)\nonumber\\
 & & = \mathrm{ker}
({\matrice{b}^*}^\top\matrice{m}(t,t^*){\matrice{b}^*})\:\:,\nonumber
\end{eqnarray} 
and
therefore $h^* = 0_{\mathcal{B}^*}$ (since
$\matrice{m}(t,t^*) \succ 0$ and $\matrice{b}^*$ has full rank), and so $h^\bot =
0_{\mathcal{B}^\bot}$, and finally $h = 0_{\hg}$, implying
$\mathrm{span}(\mutbasis^*) \cap \mathrm{span}(\mutbasis^\top) =
\{0_{\hg}\}$ and since
$\mathrm{dim}(\mathrm{span}(\mutbasis^*))+\mathrm{dim}(\mathrm{span}(\mutbasis^\bot))
= \dG$, $\mutbasis^*$ and $\mutbasis^\bot$ are supplementary in $\hg$, as claimed.

In eq. (\ref{eqreflast}), $\langle
t_* - f, {\matrice{b}^*}^\top \matrice{m}(t,t^*)\matrice{b}^*
(t^{\mathrm{in}} - t_*)\rangle$ zeroes over all $f$ iff
$t^{\mathrm{in}} = t_*$ since again $\mathrm{ker}
({\matrice{b}^*}^\top\matrice{m}(t,t^*){\matrice{b}^*}) =
\{0_{{\mathcal{B}}^*}\}$. So 
\begin{eqnarray}
\expect_{x \sim {\mathcal{D}}} [D_\varphi(f(x) \|
  t(x))] & = & \expect_{x \sim {\mathcal{D}}} [D_\varphi(f(x) \| t^{\mathrm{in}}(x))] +
\expect_{x \sim {\mathcal{D}}} [D_\varphi(t^{\mathrm{in}}(x) \|
t(x))], \forall f \in \hg\:\:,\label{deffright}
\end{eqnarray}
from which we get, from the non-negativity of Bregman divergences and (ii) in setting (\textbf{SF}),
\begin{eqnarray}
  t^{\mathrm{in}} & = & \arg  \sup_{f
  \in \mathrm{span}(\mutbasis)} \perf_{t,\varphi}(f,
{\mathcal{D}})\:\:.\nonumber
\end{eqnarray}
Then, since since the rightmost
expectation in eq. (\ref{deffright}) does not depend on $f$,
we can equivalently reformulate the definitions of $\bene$ and
$\neut$ in eqs (\ref{defbene}) and (\ref{defneut}) by:
\begin{eqnarray}
\bene(f) & = & \{g \in \neigh_\epsilon(f): \perf_{t^{\mathrm{in}},\varphi}(g,
{\mathcal{S}}) \geq \perf_{t^{\mathrm{in}},\varphi}(f,
{\mathcal{S}}) + \tol\}\:\:,\label{defbenerob}\\
\neut(f) & = & \{g \in \neigh_\epsilon(f): |\perf_{t^{\mathrm{in}},\varphi}(g,
{\mathcal{S}}) - \perf_{t^{\mathrm{in}},\varphi}(f,
{\mathcal{S}}) |\leq \tol\}\:\:.\label{defneutrob}
\end{eqnarray}
Replacing $t$ by $t^{\mathrm{in}}$ in $\darwin$, we get that Theorem
\ref{evolmut-long} can now be applied with all three conditions in
(\textbf{SF}) and guarantees this time from ineq. (\ref{condgen}):
\begin{eqnarray}
\perf_{t^{\mathrm{in}},\varphi}(f_{T},
{\mathcal{D}}) & \geq & - \epsilon\:\:,\label{condgenrobust}
\end{eqnarray}
and so, from eq. (\ref{deffright}),
\begin{eqnarray}
\perf_{t,\varphi}(f_{T},
{\mathcal{D}}) & = & \perf_{t^{\mathrm{in}},\varphi}(f_{T},
{\mathcal{D}}) + \sup_{f
  \in \mathrm{span}(\mutbasis)} \perf_{t,\varphi}(f,
{\mathcal{D}})\nonumber\\
 & \geq & \sup_{f
  \in \mathrm{span}(\mutbasis)} \perf_{t,\varphi}(f,
{\mathcal{D}}) - \epsilon \:\:,\nonumber
\end{eqnarray}
as claimed.

\section{Proof of Theorem
  \ref{thdrift-short}}\label{proofthdrift}

\paragraph{Complete statement of Theorem \ref{thdrift-short} ---}  We provide below the complete statement of Theorem
\ref{thdrift-short}, using notations from Section \ref{app-add-main}.
\begin{theorem}\label{thdrift-long}
Assume (\textbf{SF}) holds and the target organism sequence $t_0, t_1, ...$ drifts
according to:
\begin{eqnarray}
\|t_{i+1} - t_i\|_\hg & \leq & \frac{  z_\tol -
   z_\alpha z_\tau}{2U^2V(2 + \darwin^2 \max_{i'\in [\dF]_*}
   \|b_{i'}\|^2_{\hg})}\cdot \epsilon^{4}\:\:, \forall i\geq 0 \:\:.\label{constdriftXX-long}
\end{eqnarray}
Then 
Theorem \ref{evolmut-long} holds mutatis mutandis with the replacement of
ineq. (\ref{condgen}) by:
\begin{eqnarray}
\perf_{t_T,\varphi}(f_{T},
{\mathcal{D}}) & \geq & - \epsilon\:\:.\label{condgendrift-long}
\end{eqnarray}
\end{theorem}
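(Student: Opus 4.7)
The plan is to re-run the argument behind Theorem \ref{evolmut-long} almost verbatim, redefining the ``good'' region $\hgmon$ at each step to be indexed by the current target, and absorbing the drift as a second-order loss that is dominated by the per-step mutation gain. Concretely, for each $j\in[T]_*$ I would redefine $\hg_{t_j,\mathcal{D}}$ by replacing $t$ with $t_j$ in (\ref{defftd}), and similarly replace $\bene,\neut$ by the analogous sets in (\ref{defbenerob})--(\ref{defneutrob}) with $t_{j-1}$ playing the role of the target against which the mutator evaluates mutations at step $j$. Since the mutator at step $j$ only ever uses $t_{j-1}$, Theorem \ref{th00} and Lemma \ref{lemmpack} apply unchanged at that step and yield, with probability $\geq 1-\epsilon$ over the whole run, the mutation gain
\begin{eqnarray}
\perf_{t_{j-1},\varphi}(f_{j},\mathcal{D})-\perf_{t_{j-1},\varphi}(f_{j-1},\mathcal{D}) & \geq & \frac{z_\tol-z_\alpha z_\tau}{U^2V}\,\epsilon^{2}\nonumber
\end{eqnarray}
whenever $f_{j-1}$ lies in the $t_{j-1}$-indexed analogue of $\hgmon$.

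Next I would quantify the drift-induced change of performance when the target jumps from $t_{j-1}$ to $t_j$ for a fixed $f_j$. Lemma \ref{lemfund1} (or its proof, eq.\ (\ref{eqbregm2})) gives an upper bound of the form
\begin{eqnarray}
\bigl|\perf_{t_j,\varphi}(f_j,\mathcal{D})-\perf_{t_{j-1},\varphi}(f_j,\mathcal{D})\bigr|
& \leq & \tfrac{\hessianmax\expressmax}{2}\bigl|\|t_j-f_j\|_\hg^2-\|t_{j-1}-f_j\|_\hg^2\bigr|\nonumber\\
& \leq & \tfrac{\hessianmax\expressmax}{2}\,\|t_j-t_{j-1}\|_\hg\bigl(2\|t_{j-1}-f_j\|_\hg+\|t_j-t_{j-1}\|_\hg\bigr),\nonumber
\end{eqnarray}
via the triangle inequality. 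The inductive invariant I want to carry is that $f_j$ remains in the $t_j$-indexed analogue of $\hgmon$ as long as performance has not entered $\hg_{t_j}$. Under this invariant, ineq.\ (\ref{seqdec}) of the main proof gives $\|t_{j-1}-f_j\|_\hg\leq \sqrt{\hessianmax\expressmax/(\hessianmin\expressmin)}\,\darwin\,\max_{i'}\|b_{i'}\|_\hg$, and a short telescoping on $\|t_j-t_0\|_\hg$ (bounded by $T$ times the allowed drift, itself $O(\epsilon^4/\darwin^2)$) shows the $\darwin$-based bound on $\|t_{j-1}-f_j\|_\hg$ inflates by at most a multiplicative constant over the run.

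Plugging the bound on $\|t_j-t_{j-1}\|_\hg$ from (\ref{constdriftXX-long}) into the drift loss and combining with the mutation gain then yields, for the admissible triples $(z_\tau,z_\alpha,z_\tol)\in\mathfrak{R}$,
\begin{eqnarray}
\perf_{t_j,\varphi}(f_j,\mathcal{D})-\perf_{t_{j-1},\varphi}(f_{j-1},\mathcal{D}) & \geq & \tfrac{z_\tol-z_\alpha z_\tau}{2U^2V}\,\epsilon^{2},\nonumber
\end{eqnarray}
i.e.\ net $\Omega(\epsilon^2)$ per step, so after $T=\tilde{O}(\darwin^4/\epsilon^2)$ iterations the analogue of Lemma \ref{lemmaxx22} forces $\perf_{t_T,\varphi}(f_T,\mathcal{D})\geq -\epsilon$ as in (\ref{condgendrift-long}). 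The sample-complexity bound (\ref{defmm}) and the tolerance/magnitude choices (\ref{defTol})--(\ref{defAlpha}) are inherited unchanged from Theorem \ref{evolmut-long} because the mutator's estimates at step $j$ are against the fixed target $t_{j-1}$.

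The main obstacle is the book-keeping around the moving region $\hg_{t_j,\mathcal{D}}$: the monotonicity now mixes a stochastic gain (measured against $t_{j-1}$) with an adversarial drift (that changes the reference target), so I need to verify that the invariant $f_j\in\hg_{t_j,\mathcal{D}}$ is preserved inductively, and that the terminal step of Lemma \ref{lemconv1} (the organism that first exits $\hgmon$ lies in $\hg_t$) still applies after swapping $t$ for $t_T$; this is where the quartic $\epsilon^4/\darwin^2$ scaling in (\ref{constdriftXX-long}) is forced, matching exactly the slack between the $\Omega(\epsilon^2)$ mutation gain and the $O(\darwin\cdot\|t_j-t_{j-1}\|_\hg\cdot\max_{i}\|b_i\|_\hg)$ drift loss.
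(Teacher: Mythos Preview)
Your overall architecture mirrors the paper's: redefine $\hg_{t_j,\mathcal{D}}$ step by step, obtain the mutation gain against the current target, absorb the drift loss, and finish with the analogue of Lemma~\ref{lemconv1}. However, there is a genuine gap in how you bound the drift loss.

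The displayed inequality
\[
\bigl|\perf_{t_j,\varphi}(f_j,\mathcal{D})-\perf_{t_{j-1},\varphi}(f_j,\mathcal{D})\bigr|
\;\leq\; \tfrac{\hessianmax\expressmax}{2}\,\bigl|\|t_j-f_j\|_\hg^2-\|t_{j-1}-f_j\|_\hg^2\bigr|
\]
does \emph{not} follow from Lemma~\ref{lemfund1} or eq.~(\ref{eqbregm2}). Those results only sandwich each performance separately between $-\tfrac{\hessianmax\expressmax}{2}\|\cdot\|_\hg^2$ and $-\tfrac{\hessianmin\expressmin}{2}\|\cdot\|_\hg^2$; subtracting two such sandwiches gives at best
\[
\perf_{t_j,\varphi}(f_j,\mathcal{D})-\perf_{t_{j-1},\varphi}(f_j,\mathcal{D})
\;\geq\; -\tfrac{\hessianmax\expressmax}{2}\|t_j-f_j\|_\hg^2+\tfrac{\hessianmin\expressmin}{2}\|t_{j-1}-f_j\|_\hg^2,
\]
which, since both norms are $\Theta(\darwin\max_i\|b_i\|_\hg)$ under your invariant, is a \emph{constant} negative quantity independent of $\|t_j-t_{j-1}\|_\hg$ and swamps the $\Omega(\epsilon^2)$ gain. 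A concrete counterexample to your inequality: whenever the Hessian of $\varphi$ is not a scalar multiple of the identity, one can arrange $\|t_j-f_j\|_\hg=\|t_{j-1}-f_j\|_\hg$ (so your right-hand side vanishes) while $\expect[D_\varphi(f_j\|t_j)]\neq\expect[D_\varphi(f_j\|t_{j-1})]$. Equation~(\ref{eqbregm1}) does not save this either, because the matrix $\matrice{m}'$ there depends on the pair $(f,t)$ through the mean-value Hessian and is different for $t_j$ and $t_{j-1}$.

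The paper avoids this by using the Bregman triangle \emph{equality} instead of a sandwich:
\[
\perf_{t_{i+1},\varphi}(f_i,\mathcal{D})
=\perf_{t_i,\varphi}(f_i,\mathcal{D})+\expect_{\mathcal{D}}[D_\varphi(t_{i+1}\|t_i)]
+\expect_{\mathcal{D}}\bigl[\langle (t_{i+1}-f_i)(x),\,\nabla\varphi\circ t_i(x)-\nabla\varphi\circ t_{i+1}(x)\rangle\bigr],
\]
and then bounds the cross term via Lemma~\ref{lemmareturndec} by $-\hessianmax\expressmax\,\|t_{i+1}-f_i\|_\hg\,\|t_i-t_{i+1}\|_\hg$. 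This is where the crucial \emph{linearity in the drift} comes from, and it is exactly what lets the $\epsilon^4/\darwin^2$ budget in (\ref{constdriftXX-long}) be absorbed against the $\Omega(\epsilon^2)$ mutation gain. Once you replace your sandwich step by this triangle-equality decomposition, the remainder of your outline (the invariant on $\|t_j-f_j\|_\hg$ via monotonicity of $\perf_{t_i,\varphi}(f_i,\mathcal{D})$, the unchanged sample complexity, and the terminal application of Lemma~\ref{lemconv1} with $t_{j_\star}$ in place of $t$) proceeds essentially as in the paper.
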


\paragraph{Proof of Theorem \ref{thdrift-long}}

We provide here the proof of Theorem \ref{thdrift-long}.
First, we reformulate the definitions of $\bene$ and
$\neut$ in eqs (\ref{defbene}) and (\ref{defneut}) to fit to the model
\citep{kvwEW}:
\begin{eqnarray}
\bene(f_j) & = & \{g \in \neigh_\epsilon(f_j): \perf_{t_j,\varphi}(g,
{\mathcal{S}}) \geq \perf_{t_j,\varphi}(f_j,
{\mathcal{S}}) + \tol\}\:\:,\label{defbenedrift}\\
\neut(f_j) & = & \{g \in \neigh_\epsilon(f_j): |\perf_{t_j,\varphi}(g,
{\mathcal{S}}) - \perf_{t_j,\varphi}(f_j,
{\mathcal{S}}) |\leq \tol\}\:\:.\label{defneutdrift}
\end{eqnarray}
We also replace ${\hg}_{t, {\mathcal{D}}}$ by a sequence ${\hg}_{t_j,
  {\mathcal{D}}}$, so $\hgmon$ is now the prefix sequence of $f_0,
f_1, ...$ such that $f_j \in {\hg}_{t_j,
  {\mathcal{D}}}$. The definition of ${\hg}_{t_j,
  {\mathcal{D}}}$ is the same as in (\ref{defftd}).

The proof consists of three steps: first (part (i)), we show that Lemma
\ref{lemmpack} still holds when $t$ is allowed to drift following
ineq. (\ref{constdriftXX-long}). Then (part (ii)), we show that the bound
in $m$ is the same as in (\ref{numevom22}). Finally (part (iii)), we show that Lemma
\ref{lemconv1} also holds, completing the proof.

\noindent\textbf{Part (i) ---}To prove the first part, let $\beta(i)$ denote the index of
the $b_.\in \mutbasis$ chosen by mutator at step $i$. 
We reuse ineq. (\ref{defincre}), and get this time
\begin{eqnarray}
 \perf_{t_{i+1},\varphi}(f_{i+1},
{\mathcal{D}}) & \defeq & \perf_{t_{i+1},\varphi}(f_i+\sigma \alpha b_{\beta(i)},
{\mathcal{D}})\nonumber\\
 & \geq & \perf_{t_{i+1},\varphi}(f_i,
{\mathcal{D}}) + \alpha\left(\frac{\tol}{\alpha} - \tau\right) \nonumber\\
 &  & = \perf_{t_{i+1},\varphi}(f_i,
{\mathcal{D}}) + \frac{z_\tol - z_\alpha z_\tau}{U^2V}\cdot \epsilon^{2} \:\:.\label{defincredrift}
\end{eqnarray}
Now, we use the Bregman triangle equality \citep{anMO}, which yields
\begin{eqnarray}
\lefteqn{\expect_{x \sim {\mathcal{D}}} [D_\varphi(f_i (x) \|
  t_{i}(x))]}\nonumber\\
  & = & \expect_{x \sim {\mathcal{D}}} [D_\varphi(f_i (x) \|
  t_{i+1}(x))] + \expect_{x \sim {\mathcal{D}}} [D_\varphi(t_{i+1} (x) \|
  t_{i}(x))] \nonumber\\
 & &+ \expect_{x \sim {\mathcal{D}}} [\langle (t_{i+1} - f_i)(x), \nabla\varphi
\circ t_i(x) -
\nabla\varphi \circ t_{i+1} (x)\rangle]\:\:,\nonumber
\end{eqnarray}
and so, reorganizing,
\begin{eqnarray}
\perf_{t_{i+1},\varphi}(f_i,
{\mathcal{D}}) & = & \perf_{t_{i},\varphi}(f_i,
{\mathcal{D}}) + \expect_{x \sim {\mathcal{D}}} [D_\varphi(t_{i+1} (x) \|
  t_{i}(x))] \nonumber\\
 & & + \expect_{x \sim {\mathcal{D}}} [\langle (t_{i+1} - f_i)(x), \nabla\varphi
\circ t_i(x) -
\nabla\varphi \circ t_{i+1} (x)\rangle]\:\:.\label{tog1}
\end{eqnarray}
Lemma \ref{lemmareturndec} yields, for some symmetric positive
definite $\matrice{m}$ defined in the same way as in (\ref{defMM}),
\begin{eqnarray}
\lefteqn{\expect_{x \sim {\mathcal{D}}} [\langle (t_{i+1} - f_i)(x), \nabla\varphi
\circ t_i(x) -
\nabla\varphi \circ t_{i+1} (x)\rangle]}\nonumber\\
 & = & \langle
\matrice{b}^*(t_{i+1} - f_i), \matrice{m}
\matrice{b}^* (t_i - t_{i+1})\rangle\nonumber\\
 & = & -\hessianmax|\langle
\matrice{b}^*(t_{i+1} - f_i), \gendistnone
\matrice{b}^* (t_i - t_{i+1})\rangle|\label{bsupii}\\
 & \geq & -\hessianmax \expressmax \|t_{i+1} - f_i\|_\hg \|t_i -
 t_{i+1}\|_\hg\label{bsupscal}\\
 & \geq & -\hessianmax \expressmax \left(\|t_0 - f_i\|_\hg + \sum_{k=1}^{i+1} \|t_{k+1} -
 t_{k}\|_\hg \right)  \cdot \|t_{i+1} -
 t_{i}\|_\hg\label{bsupscal2}\\
 &  & \defeq  -\hessianmax \expressmax \left(\|t_0 - f_i\|_\hg +
   \sum_{k=1}^{i+1} \nu_k \right)  \cdot \nu_{i+1}\:\:,\label{bsupscal4}
\end{eqnarray}
where we denote $\nu_k\defeq \|t_{k+1} -
 t_{k}\|_\hg $.
Ineqs (\ref{bsupii}) and (\ref{bsupscal}) hold because of the
definition and properties of $\gendistnone$ and Cauchy-Schwartz inequality. Inequality
(\ref{bsupscal2}) holds because of the triangle inequality. 
Now, we also have $\expect_{x \sim {\mathcal{D}}} [D_\varphi(t_{i+1} (x) \|
  t_{i}(x))]  = -\perf_{t_i,\varphi}(t_{i+1}, {\mathcal{D}}) \geq  (1/2) \hessianmin \expressmin
 \cdot \nu_{i+1}$ because of Lemma \ref{lemfund1}, so if we fold
 ineq (\ref{bsupscal4}) and
eq. (\ref{tog1}) into ineq. (\ref{defincredrift}), then we get,
$\forall f_i \in \hgmon$,
\begin{eqnarray}
\lefteqn{\perf_{t_{i+1},\varphi}(f_{i+1},
{\mathcal{D}})}\nonumber\\
 & \geq & \perf_{t_{i},\varphi}(f_i,
{\mathcal{D}}) + \frac{\hessianmin \expressmin}{2}\cdot \nu_{i+1} -\hessianmax \expressmax \left(\|t_0 - f_i\|_\hg +
   \sum_{k=1}^{i+1} \nu_k \right)  \cdot \nu_{i+1} + \frac{z_\tol -
   z_\alpha z_\tau}{U^2V}\cdot \epsilon^{2}\nonumber\\
 & = & \perf_{t_{i},\varphi}(f_i,
{\mathcal{D}}) - \frac{\nu_{i+1}}{2}\cdot \left(\hessianmax \expressmax \left(\|t_0 - f_i\|_\hg +
   \sum_{k=1}^{i+1} \nu_k \right) - \hessianmin
  \expressmin \right) + \frac{z_\tol -
   z_\alpha z_\tau}{U^2V}\cdot \epsilon^{2}\nonumber\\
 & \geq & \perf_{t_{i},\varphi}(f_i,
{\mathcal{D}}) - \frac{\nu_{i+1}}{2}\cdot \left(\hessianmax \expressmax \left(\sqrt{\frac{\hessianmax \expressmax}{\hessianmin \expressmin}}\cdot
\darwin \max_{i\in [\dF]_*}
   \|b_{i}\|_{\hg}+
   \sum_{k=1}^{i+1} \nu_k \right) - \hessianmin
  \expressmin \right) \nonumber\\
 & &  + \frac{z_\tol -
   z_\alpha z_\tau}{U^2V}\cdot \epsilon^{2}\label{ineqdiff}\\
 & \geq &  \perf_{t_{i},\varphi}(f_i,
{\mathcal{D}}) - \frac{\nu_{i+1}}{2}\cdot \left(\hessianmax
  \expressmax \left(\underbrace{1 + \max \left\{1, \frac{2\hessianmax \expressmax}{\hessianmin \expressmin}\cdot
\darwin^2 \max_{i\in [\dF]_*}
   \|b_{i}\|^2_{\hg}\right\}}_{\defeq W} +
   \sum_{k=1}^{i+1} \nu_k \right) - \hessianmin
  \expressmin \right) \nonumber\\
 & &  + \frac{z_\tol -
   z_\alpha z_\tau}{U^2V}\cdot \epsilon^{2}\label{nottight}\:\:,
\end{eqnarray}
where ineq. (\ref{ineqdiff}) follows from ineq. (\ref{seqdec}).
Now, remark that $\hessianmax \expressmax W - \hessianmin
\expressmin > 0$, and suppose we 
ensure that, if  $f_i \in \hgmon$, $\forall i\in
[T]_*$, then
\begin{eqnarray}
\nu_{i+1} & \leq & \eta \cdot \frac{z_\tol -
   z_\alpha z_\tau}{U^2V(\hessianmax \expressmax W - \hessianmin \expressmin)\sqrt{T}}\cdot \epsilon^{2} \:\:,\label{constnu}
\end{eqnarray}
for some $\eta > 0$. In this case, assuming that the outermost
parenthesis is non negative, and letting $X \defeq (z_\tol -
   z_\alpha z_\tau)/(U^2V)$, we can assert
\begin{eqnarray}
\perf_{t_{i+1},\varphi}(f_{i+1},
{\mathcal{D}}) & \geq & \perf_{t_{i},\varphi}(f_i,
{\mathcal{D}}) - \left(\eta\cdot \frac{X}{2 \sqrt{T}}\cdot \epsilon^{2} +
\eta^2 \cdot \frac{\hessianmax \expressmax X^2}{2 (\hessianmax \expressmax W - \hessianmin
  \expressmin)}\cdot \epsilon^{4}\right)  + X\cdot \epsilon^{2}\label{assert1}\:\:.
\end{eqnarray}
Now, we want to fing $\eta$ such that:
\begin{eqnarray}
\eta\cdot \frac{X}{2 \sqrt{T}}\cdot \epsilon^{2} +
\eta^2 \cdot \frac{\hessianmax \expressmax X^2}{2 (\hessianmax \expressmax W - \hessianmin
   \expressmin)}\cdot \epsilon^{4} & \leq & \frac{X}{2} \cdot \epsilon^{2}\nonumber\:\:.
\end{eqnarray}
Reorganizing, we find that it is sufficient that
\begin{eqnarray}
\eta^2 \cdot \frac{\hessianmax \expressmax X}{\hessianmax \expressmax W - \hessianmin
   \expressmin}\cdot \epsilon^{2} + \frac{\eta}{\sqrt{T}} - 1 & \leq & 0 \nonumber\:\:,
\end{eqnarray}
and so we need
\begin{eqnarray}
\eta & \leq & \frac{(\hessianmax \expressmax W - \hessianmin
   \expressmin)}{2 \hessianmax \expressmax X\sqrt{T}}\cdot \left( \sqrt{1 + \frac{4T\hessianmax \expressmax X}{\hessianmax \expressmax W - \hessianmin
   \expressmin}\cdot \epsilon^{2}} - 1\right)\nonumber\:\:.
\end{eqnarray}
Since $\sqrt{1+x} \geq 1 + (x/2) - x^2/8\geq 0$ for $x\in [0, 2(1+\sqrt{3})]$, it is sufficient that
\begin{eqnarray}
\eta & \leq & \sqrt{T} \cdot \epsilon^{2} \cdot \left( 1 - \frac{T \hessianmax \expressmax X}{\hessianmax \expressmax W - \hessianmin
   \expressmin}\cdot \epsilon^{2}\right)\:\:,
\end{eqnarray}
and we want
\begin{eqnarray}
T & < & \frac{\hessianmax \expressmax W - \hessianmin
   \expressmin}{\hessianmax \expressmax X} \cdot \frac{1}{2\epsilon^2}\:\:.
\end{eqnarray}
In this case, we can check that $4T\hessianmax \expressmax X/(\hessianmax \expressmax W - \hessianmin
   \expressmin) < 2 < 2(1+\sqrt{3})$. Replacing $X$ and $W$ by their expressions, we want equivalently
\begin{eqnarray}
T & \leq & \frac{U^2V \hessianmax \expressmax}{z_\tol -
   z_\alpha z_\tau} \cdot \left(1 + \max \left\{1, \frac{2\hessianmax \expressmax}{\hessianmin \expressmin}\cdot
\darwin^2 \max_{i\in [\dF]_*}
   \|b_{i}\|^2_{\hg}\right\}- \frac{\hessianmin
   \expressmin}{\hessianmax \expressmax}\right) \cdot \frac{1}{2\epsilon^2}\:\:.\label{lastconst}
\end{eqnarray}
Now, we have 
\begin{eqnarray}
1 + \max\{1, 2xy\} - \frac{1}{x} & \geq & 2y\:\:, \forall x\geq 1,
\forall y\geq 0\:\:,
\end{eqnarray}
and so to ensure ineq. (\ref{lastconst}), it is sufficient to ensure
\begin{eqnarray}
T & \leq & \frac{U^2V \hessianmax \expressmax \max_{i\in [\dF]_*}
   \|b_{i}\|^2_{\hg}}{z_\tol -
   z_\alpha z_\tau} \cdot \frac{\darwin^2}{\epsilon^2}\:\:.\label{lastconst2}
\end{eqnarray}
In this case, we
can fix
\begin{eqnarray}
\eta & = & \frac{\sqrt{T}}{2}\cdot \epsilon^2\:\:,
\end{eqnarray}
which replaces ineq. (\ref{constnu}) by
\begin{eqnarray}
\nu_{i+1} & \leq & \frac{z_\tol -
   z_\alpha z_\tau}{2 U^2V(\hessianmax \expressmax W - \hessianmin \expressmin)}\cdot \epsilon^{4} \:\:,\nonumber
\end{eqnarray}
which holds if
\begin{eqnarray}
\nu_{i+1} & \leq & \frac{  z_\tol -
   z_\alpha z_\tau}{2U^2V(2 + \darwin^2 \max_{i\in [\dF]_*}
   \|b_{i}\|^2_{\hg})}\cdot \epsilon^{4} \:\:.\label{constdrift1}
\end{eqnarray}
In this case, ineq. (\ref{assert1}) becomes
\begin{eqnarray}
\perf_{t_{i+1},\varphi}(f_{i+1},
{\mathcal{D}}) & \geq & \perf_{t_{i},\varphi}(f_i,
{\mathcal{D}}) + \frac{z_\tol -
   z_\alpha z_\tau}{2U^2V} \cdot \epsilon^{2}\label{assert2}\:\:.
\end{eqnarray}
So if the drift is bounded as in ineq. (\ref{constdrift1}), then we lose by a
factor at most 2 over the improvement without drift as guaranteed in $\hgmon$
by Lemma \ref{lemmpack}. We then need to check that the number of
iterations in ineq. (\ref{consttt1}) now becomes
\begin{eqnarray}
T & \geq & \frac{U^2V \hessianmax \expressmax \max_{i\in [\dG]_*}
   \|b_{i}\|^2_{\hg}}{z_\tol - z_\alpha z_\tau} \cdot \frac{\darwin^2}{\epsilon^{2}}\:\:,\label{consttt2}
\end{eqnarray}
whose right-hand side matches ineq. (\ref{lastconst2}). Since
ineq. (\ref{nottight}) is never tight, picking $T$ of the order of the
right-hand side of ineq. (\ref{consttt2}) allows for $\hgmon$ to
comply with ineq. (\ref{condgendrift-long}) with a number of steps of the
same order as for $T$ in Lemma
\ref{lemmpack}. Hence, Lemma \ref{lemmpack} still holds.\\

\noindent\textbf{Part (ii) ---} Notice that as long as $f_j \in
\hgmon$, we can still bound $\|t_j-f_j\|^2_{\hg}$ in the same way as
we do in ineq. (\ref{seqdec}), because ineq. (\ref{ingmon}) can still
be used via the fact that $\perf_{t_{i+1},\varphi}(f_{i+1},
{\mathcal{D}}) \geq \perf_{t_{i},\varphi}(f_i,
{\mathcal{D}})$, as shown by ineq. (\ref{assert2}), so the order of
the number of
examples in ineq. (\ref{numevom22}) does not change because the order
of $T$ does not change. \\

\noindent\textbf{Part (iii) ---} Because of the definition of
${\hg}_{t_j, {\mathcal{D}}}$, if we let $f_{j_\star}$ to denote the
first organism out of the prefix sequence $\hgmon$ ($f_{j_\star-1}$ is
in $\hgmon$ but $f_{j_\star}$ is not), and $t_{j_\star}$ the target at the
same index in the target sequence, then ineq. (\ref{eqfjstar})
still holds with $t_{j_\star}$, and so we shall observe again, in place of
ineq. (\ref{boundzzp})
\begin{eqnarray}
\expect_{x \sim {\mathcal{D}}} [D_\varphi(f_{j_\star}(x) \| t_{j_\star}(x))] & \leq & {\hessianmax}\|t_{j_\star}-f_{j_\star}\|_{\hg} \cdot
\rho(f_{j_\star}, t_{j_\star}|{{{\mathcal{D}}}})\nonumber\\
 & \leq & \epsilon\:\:,\label{boundzzpdrift}
\end{eqnarray}
which means
\begin{eqnarray}
\perf_{t_{j_\star},\varphi}(f_{j_\star}, {\mathcal{D}}) & \geq & - \epsilon\:\:,\nonumber
\end{eqnarray}
and so $f_{j_\star}$ satisfies ineq. (\ref{condgendrift-long}), as claimed. 

\section{Proof of Theorem  \ref{thmtrap-short}}\label{proofthmtrap}

The trick we use is the following one: theorem \ref{evolmut-long} relies on the existence of a
monotonic sequence (with respect to performances) which leads to satisfying the conditions of
evolution. Provided we constrain a bit more set
$\mathfrak{R}$, we can do more than the requirements of evolvability:
when we escape this monotonic sequence, the mutated
organism is going to \textit{stay} within the evolvability
requirements, over a number of iterations / mutations steps that we
can control.

\begin{definition}\label{defSER}
Fix $N\in {\mathbb{N}}_*$. Set
$\mathfrak{R}_N\subset {\mathbb{R}}^3$ is the subset of 
triples $(z_1,z_2,z_3)$ such that (i) $z_1,z_2,z_3 > 0$, (ii)
$z_3 - z_1z_2 > 0$, and (iii) $(a+b)z_2^2 - z_2(1-bz_1) +bz_3 \leq 0$, where $a
\defeq N/U$ and $b\defeq 2\hessianmax/\hessianmin$.
\end{definition}
It is worthwhile remarking that $\mathfrak{R}_N \subset \mathfrak{R}$,
and furthermore $\mathfrak{R}_N\neq \emptyset$ since we can choose for example:
\begin{eqnarray}
z_1 = \frac{1}{4b}\:\:, z_2 = \frac{1}{32(a+b)}\:\:, z_3=
\frac{1}{64b(a+b)}\:\:.
\end{eqnarray}

\begin{theorem}\label{thmtrap-long}
Assume (\textbf{SF}) holds, and $(z_\tau, z_\alpha, z_\tol)\in
\mathfrak{R}_N$ for some $N\in {\mathbb{N}}_*$, and all other
parameters are fixed according to Theorem \ref{th00}. Let $f_\star$ be the first organism in the sequence
$f_0, f_1, ..., f_T$ to hit $\hg_t$. Then, with probability $\geq 1 -
\epsilon$, $f_{j_\star + j}
\in \hg_t, \forall j \in [N]$.
\end{theorem}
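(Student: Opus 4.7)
The plan is to lean on Theorem~\ref{evolmut-long}, which with probability $\geq 1-\epsilon$ produces the first index $j_\star$ with $f_{j_\star}\in\hg_t$, and then argue inductively that the strengthened constants $(z_\tau,z_\alpha,z_\tol)\in\mathfrak{R}_N$ create enough slack to absorb up to $N$ consecutive worst-case mutations without leaving $\hg_t$.

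First I would reopen the proof of Lemma~\ref{lemconv1}. Eqs.~(\ref{condconv})--(\ref{boundzzp}) already give $\expect_\mathcal{D}[D_\varphi(f_{j_\star}(x)\|t(x))]\leq(z_\tau+z_\alpha+z_\tol/z_\alpha)\epsilon$, and the original evolvability argument simply invokes condition~(iii) of Definition~\ref{defNER} to cap this at $\epsilon$. Condition~(iii) of Definition~\ref{defSER} can be rewritten as $b(z_\tau+z_\alpha+z_\tol/z_\alpha)+az_\alpha\leq 1$ with $a=N/U$ and $b=2\hessianmax/\hessianmin\geq 2$, so in fact $\perf_{t,\varphi}(f_{j_\star},\mathcal{D})\geq -((1-az_\alpha)/b)\epsilon$, leaving a strictly positive slack $\sigma_\star\defeq \epsilon(b-1+az_\alpha)/b$ below the evolvability threshold $-\epsilon$; Lemma~\ref{lemfund1} then converts this into a companion bound on $\|t-f_{j_\star}\|_\hg$.

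Next I would establish a uniform one-step decrease bound. Using the mean--divergence identity of Lemma~\ref{lemmut} together with the return and premium estimates already derived inside the proof of Lemma~\ref{lemdev} (in particular ineqs.~(\ref{eqqq3}) and (\ref{lastbpp})), for any $f_j$ satisfying $\perf_{t,\varphi}(f_j,\mathcal{D})\geq -\epsilon$ and for any polarity and basis index the mutator might pick, one gets
\[
|\perf_{t,\varphi}(f_{j+1},\mathcal{D})-\perf_{t,\varphi}(f_j,\mathcal{D})|\;\leq\; \alpha\,\hessianmax\expressmax\,M\sqrt{\tfrac{2\epsilon}{\hessianmin\expressmin}}+\tfrac{\alpha^2\hessianmax\expressmax}{2}\,M^2,
\]
with $M\defeq \max_i\|b_i\|_\hg$. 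When $\bene(f_j)\neq\emptyset$ on the sample the argument of Lemma~\ref{lemmpack} actually upgrades this to a strictly positive increment $\tol-\alpha\tau$ w.h.p., so the only real threat to the induction is the case where $\bene$ is empty.

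The induction then runs over $k=0,1,\ldots,N$: assuming $\perf_{t,\varphi}(f_{j_\star+k'},\mathcal{D})\geq -\epsilon$ for all $k'\leq k$, I accumulate the above worst-case loss over at most $N$ rounds, substitute $\alpha=z_\alpha\epsilon/(U\max\{1,V\})$, and verify that the total loss is dominated by $\sigma_\star$ precisely because condition~(iii) of $\mathfrak{R}_N$ injects the extra term $az_\alpha=(N/U)z_\alpha$ that matches this accumulation. Replacing $\log(T/\epsilon)$ by $\log((T+N)/\epsilon)$ in the concentration union bound behind ineq.~(\ref{numevom22}) only affects constants, so the sample complexity of Theorem~\ref{evolmut-long} is preserved up to $\tilde O$ factors. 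The main obstacle is this last accounting step: carefully aligning the $\sqrt{\epsilon}$ factor that enters through $\|t-f_j\|_\hg$ (valid only while the inductive hypothesis still holds) with the $N/U$ scaling of the extra slack, so that the adversarial cumulative loss stays strictly inside budget for every admissible $\epsilon\in(0,1]$.
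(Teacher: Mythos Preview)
Your overall strategy---establish slack at $f_{j_\star}$, bound the one-step performance drop, then accumulate over $N$ steps---is different from what the paper does, and the place you yourself flag as ``the main obstacle'' is in fact a genuine gap.

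The paper does not telescope. It bounds $\expect_{\mathcal{D}}[D_\varphi(f_{j'_\star+N'}(x)\|t(x))]$ \emph{directly} for every $N'\in[N]$, via Lemma~\ref{lemapproxbregdiv} and the elementary inequality $\langle u-v,\gendistnone(u-v)\rangle \leq 2\langle u,\gendistnone u\rangle + 2\langle v,\gendistnone v\rangle$ applied with $u=\matrice{b}^*(t-f_{j'_\star})$ and $v=\matrice{b}^*\sum_{k\leq N'}\sigma_k\alpha b_{\beta(k)}$. This yields, in one shot,
\[
\expect_{\mathcal{D}}[D_\varphi(f_{j'_\star+N'}\|t)] \;\leq\; \frac{2\hessianmax}{\hessianmin}\,\expect_{\mathcal{D}}[D_\varphi(f_{j'_\star}\|t)] \;+\; \frac{N' z_\alpha}{U}\,\epsilon,
\]
and the right-hand side is $\leq\epsilon$ \emph{exactly} when condition~(iii) of Definition~\ref{defSER} holds: the factor $b=2\hessianmax/\hessianmin$ is the round-trip cost of passing from the Bregman divergence to the $\gendistnone$-quadratic form and back, and the additive $N' z_\alpha\epsilon/U$ is the contribution of $N'$ mutations of magnitude $\alpha$. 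No $\sqrt{\epsilon}$ ever appears, and no inductive hypothesis on intermediate $f_{j'_\star+k}$ is needed.

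Your telescoping bound, by contrast, controls the return at step $k$ through $\|t-f_{j_\star+k}\|_{\hg}$, which under the inductive hypothesis $\perf\geq -\epsilon$ is only $O(\sqrt{\epsilon})$. The accumulated loss is then of order $N\alpha\,\hessianmax\expressmax M\sqrt{2\epsilon/(\hessianmin\expressmin)}$, and this does \emph{not} match the $(N/U)z_\alpha\epsilon$ budget that condition~(iii) of $\mathfrak{R}_N$ supplies: after cancelling $N z_\alpha\epsilon/U$ on both sides you are left needing
\[
\frac{\hessianmax\expressmax M}{\max\{1,V\}}\sqrt{\frac{2\epsilon}{\hessianmin\expressmin}} \;\leq\; \frac{\hessianmin}{2\hessianmax},
\]
which is a constraint on $\epsilon$ that is not granted by the hypotheses and can fail for $\epsilon$ near $1$. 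Definition~\ref{defSER} was engineered precisely for the direct quadratic-form bound, not for a step-by-step accumulation, so your route cannot close against that condition without imposing something extra.

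A secondary issue: you apply ineq.~(\ref{condconv}) to $f_{j_\star}$, but that inequality is established for the first organism \emph{outside} $\hg_{t,\mathcal{D}}$, whereas $f_{j_\star}$ (the first hit of $\hg_t$) may still lie in $\hgmon$. The paper separates the two cases: while the sequence remains in $\hgmon$, the strict monotonicity of Theorem~\ref{th00} keeps it in $\hg_t$ for free, and the direct bound above is invoked only from the first index $j'_\star\geq j_\star$ at which the sequence leaves $\hgmon$.
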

\begin{proof}
\begin{figure}[t]
\begin{center}
\includegraphics[trim=10bp 200bp 340bp 200bp,clip,width=0.8\columnwidth]{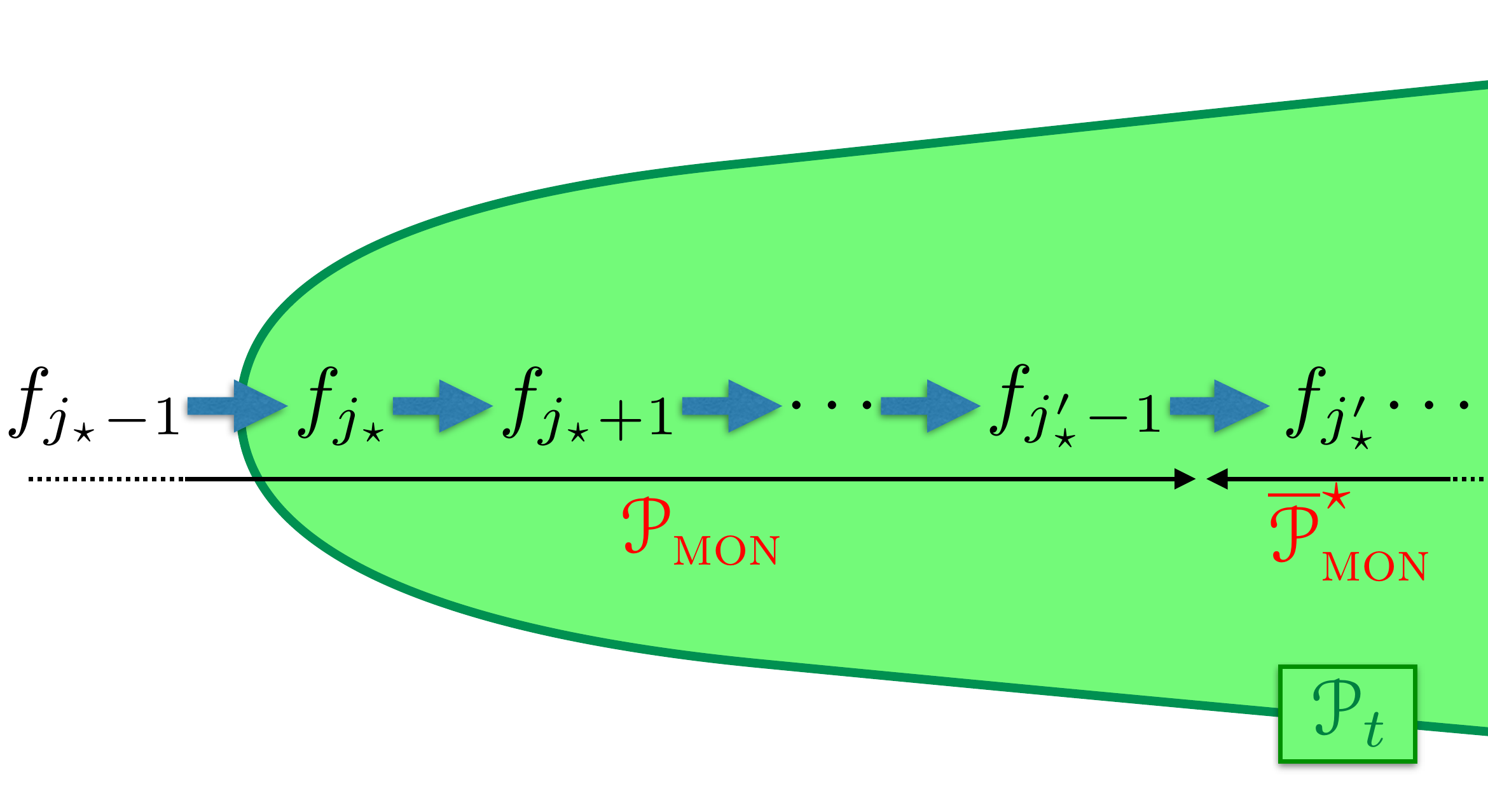}
\end{center}
\caption{Proof of Theorem \ref{thmtrap-long}: we show that the sequence
  $\hgnotmon^\star$ satisfies
  $\mathrm{Card}(\hgnotmon^\star \cap \hg_t)\geq N$ (See text for details).}
\label{f-trap}
\end{figure}
Let $f_{j_\star} \in \hg_t$ be the \textit{first} organism in the sequence $f_0,
f_1, ..., f_T$ to hit $\hg_t$. Because of Theorem \ref{th00} and Lemma \ref{lemconv1}, with
high probability, all organisms before $f_{j_\star}$ belong to
$\hgmon$. 
We have two cases, either $f_{j_\star}
\in \hgmon$ or $f_{j_\star}
\in \hgnotmon$. We know already that with high probability, as long as
the mutated organism $f_j$ stays in $\hgmon$, its performance cannot
decrease; therefore, if $f_{j_\star}
\in \hgmon$, then all subsequent mutated organism in
$\hgmon$ also belong to $\hg_t$. What will be sufficient to show
Theorem \ref{thmtrap-long} will be to show that the sequence $\hgnotmon$,
clamped to its first element not before $f_{j_\star}$ (we call it $\hgnotmon^\star$), satisfies
\begin{eqnarray}
\mathrm{Card}(\hgnotmon^\star \cap \hg_t) & \geq & N\:\:.
\end{eqnarray}
Let $f_{j'_\star}$ denote the first element of $\hgnotmon^\star$, with therefore $j'_\star \geq j_\star$ and $f_{j'_\star
- 1}\in \hgmon$.
Let us define $\delta \defeq t - f_{j_\star}$, $\delta'' \defeq \sum_{k=1}^{N}\sigma_k \alpha b_{\beta(k)}$
 and $\delta' \defeq \delta -
\delta''$, $\sigma_k \in \{-1,1\}$
and $\beta : [N]\rightarrow [\dF]$ gives the mutations chosen to
evolve further $f_{j'_\star}$ for $N$ steps. All coordinates for $\delta,
\delta', \delta'', b_{\beta(k)}$ are expressed in basis
$\mutbasis^*$. 
Using eq. (\ref{eqbregm1}) in Lemma
\ref{lemapproxbregdiv} and the expression of $\matrice{m}\succ 0$ in
(\ref{defMM}), and Lemma \ref{lemapproxbregdiv}, 
we get, for any $N' \in [N]$,
\begin{eqnarray}
\expect_{x \sim {\mathcal{D}}}
  \left[D_\varphi\left(\left(\left. f_{j'_\star}+\sum_{k\in [N']} \sigma_k\alpha
  b_{\beta(k)}\right)(x) \right\| t(x)\right)\right] & = & \frac{1}{2} \cdot \langle \matrice{b}^* \delta', \matrice{m}
 \matrice{b}^* \delta'\rangle\nonumber\\
 & \leq & \frac{\hessianmax}{2}  \cdot  \langle \matrice{b}^* \delta', \gendistnone
 \matrice{b}^* \delta'\rangle\label{Neq2}\:\:.
\end{eqnarray}
We then observe
\begin{eqnarray}
 \frac{\hessianmax}{2} \cdot  \langle \matrice{b}^* \delta', \gendistnone
 \matrice{b}^* \delta'\rangle & \leq & \hessianmax \cdot  \langle \matrice{b}^* \delta, \gendistnone
 \matrice{b}^* \delta\rangle + \alpha \hessianmax \cdot \langle \matrice{b}^* \delta'', \gendistnone
 \matrice{b}^* \delta''\rangle\label{Neq3}\\
& \leq &\hessianmax \cdot  \langle \matrice{b}^* \delta, \gendistnone
 \matrice{b}^* \delta\rangle + N' \alpha \hessianmax \cdot \sum_{k\in
 [N']}\langle \matrice{b}^* b_{\beta(k)}, \gendistnone
 \matrice{b}^* b_{\beta(k)}\rangle\label{Neq4}\\
&  & = \hessianmax \cdot  \langle \matrice{b}^* \delta, \gendistnone
 \matrice{b}^* \delta\rangle + N' \alpha \hessianmax \cdot \sum_{k\in [N']}\expect_{x \sim {\mathcal{D}}} [\|b_{\beta(k)}(x)\|_2^2]\nonumber\\
& \leq & \frac{2\hessianmax}{\hessianmin} \cdot \expect_{x \sim {\mathcal{D}}} [D_\varphi(f_{j'_\star}(x) \| t(x))] + N'\alpha \hessianmax \cdot
 \sum_{k\in [N']}\expect_{x \sim {\mathcal{D}}} [\|b_{\beta(k)}(x)\|_2^2]\label{eqbd2}\\
 & \leq &\frac{2 \hessianmax}{\hessianmin} \cdot \expect_{x \sim
   {\mathcal{D}}} [D_\varphi(f_{j'_\star}(x) \| t(x))] + N' \alpha\cdot  \max\{1, V\} \label{bsupV1}\\
 & & = \frac{2\hessianmax}{\hessianmin} \cdot \expect_{x \sim {\mathcal{D}}} [D_\varphi(f_{j'_\star}(x) \| t(x))] + \frac{N'z_\alpha}{U}\epsilon \label{bsupAlpha1}\\
 & \leq &\frac{2\hessianmax}{\hessianmin}\cdot  \left(
   z_\tau + z_\alpha + \frac{z_\tol}{z_\alpha}\right) \cdot \epsilon +
 \frac{N'z_\alpha}{U}\epsilon \label{defgstar}\\
 & & = \left(\frac{2\hessianmax}{\hessianmin}\cdot\left(
   z_\tau + z_\alpha + \frac{z_\tol}{z_\alpha}\right) +
 \frac{N'z_\alpha}{U}\right) \cdot \epsilon \nonumber\\
 & \leq &\left(\frac{2\hessianmax}{\hessianmin}\cdot\left(
   z_\tau + z_\alpha + \frac{z_\tol}{z_\alpha}\right) +
 \frac{Nz_\alpha}{U}\right) \cdot \epsilon\label{deffact}\\
 & \leq & \epsilon\:\:.\label{deflasteqEPS}
\end{eqnarray}
Ineqs (\ref{Neq3}) and (\ref{Neq4}) hold because for any inner
product $\langle .,.\rangle$ and set $\{u_1, u_2, ..., u_M\}$,
\begin{eqnarray}
\left\langle \sum_{k\in [M]} u_k, \sum_{k\in [M]} u_k\right\rangle &
\leq & M \cdot \sum_{k\in [M]} \langle u_k, u_k\rangle\:\:,
\end{eqnarray}
since right hand side minus left hand side is $\sum_{k\neq k'} \langle u_k - u_{k'}, u_k -
u_{k'}\rangle \geq 0$. Ineq. (\ref{eqbd2}) comes from Lemma
\ref{lemapproxbregdiv}.  Ineq. (\ref{bsupV1}) holds because of the definition of $V$ in
eq. (\ref{defV}). Eq. (\ref{bsupAlpha1}) holds because of the definition of $\alpha$ in
eq. (\ref{defAlpha}). Finally, ineq. (\ref{defgstar}) holds because we
know from ineq. (\ref{condconv}) that since $f_{j'_\star} \not\in \hgmon$,
then $f_{j'_\star} \not\in
{\hg}_{t, {\mathcal{D}}}$, and so
\begin{eqnarray}
\expect_{x \sim {\mathcal{D}}} [D_\varphi(f_{j'_\star}(x) \| t(x))] & \leq & \left( z_\tau + z_\alpha + \frac{z_\tol}{z_\alpha}\right) \cdot
 \epsilon\label{condconv2}\:\:,
\end{eqnarray}
and ineq. (\ref{deflasteqEPS}) holds because $(z_\tau, z_\alpha, z_\tol)\in
\mathfrak{R}_N$ (Definition \ref{defSER}). Indeed, constraint (iii)
yields equivalently
\begin{eqnarray}
\frac{2\hessianmax}{\hessianmin} \cdot \left(z_\alpha^2 + z_\alpha z_\tau +
  z_\tol\right) + \frac{Nz^2_\alpha}{U} & \leq & z_\alpha\nonumber\:\:,
\end{eqnarray}
which, after dividing by $z_\alpha > 0$ yields that the factor in
front of $\epsilon$ in eq. (\ref{deffact}) is $\leq 1$.

Hence, the moment the sequence $f_0, f_1, ...$ hits $\hg_t$, it shall
stay inside $\hg_t$ with high probability for at least $N$ further evolution
steps. 
This ends the proof of Theorem \ref{thmtrap-long}.
\end{proof}

\section{Evolution on the efficient frontier}\label{app-add-res}

The fact that beneficial mutations involve a mean-divergence
decomposition of the expression asks for the nature of the efficient
frontier, that is, the set of mutations that would minimize
$\expect_{\mathcal{D}} [\Pi_{f,i}(x)]$ subject to a fixed
$\expect_{\mathcal{D}} [\mathrm{R}_{f, i} (x)]$. This 
would give ``nature's best bet'' against our, perhaps, modest and
small ${\mathcal{B}}$.
To answer this
question, we consider the restricted case of Mahalanobis divergence. 
Let us alleviate the constraint that $b$ involves only elements from
${\mutbasis}$, and, for notational convenience, define $b(x) \defeq
\sum_{i=1}^{\dG} b^i g_i(x)$. In this case, finding the efficient frontier is solving, similarly to
that of the efficient
portfolio \citep{mAA},
\begin{eqnarray}
\arg\min_{b\in {\mathbb{R}}^{\dG}} \Pi(b) & \defeq & \frac{\alpha}{2} \cdot
\expect_{\mathcal{D}}\left[\langle b(x), \matrice{m}
  b(x)\rangle\right]\label{defoptf}\\
\mbox{s.c.} & & \left\{
\begin{array}{rcl}
\expect_{\mathcal{D}}\left[\langle b(x), \matrice{m}
  (t-f)(x)\rangle\right] & = & r\\
\langle 1_{\dG}, b\rangle & = & n
\end{array}\:\:.
\right.
\end{eqnarray}
We let the ``efficient
frontier'' denote the equation that gives $r$ as a function of
$\Pi(.)$, and for that purpose generalize $\gendistnone$ to $\gendist{\textsc{m}}\defeq \expect_{x \sim {\mathcal{D}}} [\matrice{g}^\top_x \matrice{m}
\matrice{g}_x]$, for any $\matrice{m} \in
{\mathbb{R}}^{d\times d}$. 
\begin{theorem}\label{theffront}
Under setting (\textbf{SF}), the equation of the efficient frontier is
\begin{eqnarray}
r & = & 
\left\{\begin{array}{lcl}
\frac{n\cdot \langle 1_{\dG}, t-f\rangle}{\langle 1_{\dG},
  \distortion^{-1} 1_{\dG} \rangle}\cdot \left(1 \pm \sqrt{(\xi(t-f)  - 1)(\xi(b) - 1)}\right) & \mbox{
 if } & \langle 1_{\dG}, t-f\rangle \neq 0\\
\pm n\cdot \sqrt{\langle 1_{\dG}, \distortion^{-1} 1_{\dG} \rangle \cdot \langle t-f, \distortion
(t-f)\rangle\cdot
  \left(\xi(b) - 1\right)} & \mbox{
 if } & \langle 1_{\dG}, t-f\rangle = 0
\end{array}\right. \label{defEFth}
\end{eqnarray}
where $\xi(u) \defeq \langle 1_{\dG}, \distortion^{-1} 1_{\dG} \rangle
  \cdot \langle u, \distortion u\rangle/(\langle 1_{\dG}, u\rangle)^2\geq 1$
  (defined for $\langle 1_{\dG}, u\rangle\neq 0$).
\end{theorem}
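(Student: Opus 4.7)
The plan is to reduce the problem to a finite-dimensional quadratic program with two linear equality constraints, and then solve it with Lagrange multipliers. Writing any candidate as $b(x) = \matrice{g}_x b$ with $b \in \mathbb{R}^{\dG}$, and using the very definition $\distortion = \expect_{x \sim \mathcal{D}}[\matrice{g}^\top_x \matrice{m}\matrice{g}_x]$, the objective becomes $\Pi(b) = (\alpha/2)\langle b, \distortion b\rangle$ and the return constraint rewrites as $\langle b, \distortion(t-f)\rangle = r$; the second constraint $\langle 1_{\dG}, b\rangle = n$ is already finite-dimensional. Setting $A \defeq \langle t-f, \distortion(t-f)\rangle$, $B \defeq \langle 1_{\dG}, t-f\rangle$, $C \defeq \langle 1_{\dG}, \distortion^{-1} 1_{\dG}\rangle$ — exactly the three quantities appearing in $\xi(\cdot)$ — the stationarity condition for the Lagrangian with multipliers $\lambda_1, \lambda_2$ reads $\alpha \distortion b = \lambda_1 \distortion(t-f) + \lambda_2 1_{\dG}$, which yields
\[
b^* \;=\; \frac{\lambda_1}{\alpha}(t-f) + \frac{\lambda_2}{\alpha}\,\distortion^{-1} 1_{\dG}.
\]
Invertibility of $\distortion$ is justified exactly as in Lemma \ref{lemphi}: under (\textbf{SF}) the quadratic form $\langle b, \distortion b\rangle$ is strictly positive on every nonzero $b$, reusing the argument from the proof of Lemma \ref{lemmareturndec}.

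Next, substituting $b^*$ into the two constraints produces the $2\times 2$ linear system
\[
\begin{pmatrix} A & B \\ B & C \end{pmatrix}\begin{pmatrix} \lambda_1/\alpha \\ \lambda_2/\alpha \end{pmatrix} = \begin{pmatrix} r \\ n \end{pmatrix},
\]
whose determinant $AC - B^2$ is strictly positive by a Cauchy-Schwarz argument in the $\distortion$-inner product (write $\distortion = LL^\top$; then $\langle 1_{\dG}, t-f\rangle = \langle L^{-1} 1_{\dG}, L^\top (t-f)\rangle$, and the claim follows). The identity $\langle b^*, \distortion b^*\rangle = (\lambda_1 r + \lambda_2 n)/\alpha$ (direct consequence of stationarity) combined with Cramer's rule then gives the key quadratic relation between $\Pi$ and $r$:
\[
\Pi(b^*) \;=\; \frac{\alpha\bigl(Cr^2 - 2Bnr + An^2\bigr)}{2(AC - B^2)}.
\]

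The remaining step is to invert this quadratic in $r$. Noting that at the optimum $\xi(b^*) = C\langle b^*, \distortion b^*\rangle/n^2 = 2C\Pi/(\alpha n^2)$, so that $2C\Pi/\alpha - n^2 = n^2(\xi(b^*)-1)$, the quadratic formula yields
\[
r \;=\; \frac{Bn \;\pm\; n\sqrt{(AC - B^2)(\xi(b^*)-1)}}{C}.
\]
When $B \neq 0$, factoring $Bn/C$ out and using $\xi(t-f) - 1 = (AC - B^2)/B^2$ reproduces the first case of the statement. When $B = 0$, the linear term vanishes and one reads off the second case directly from the discriminant. The positivity $\xi(u) \geq 1$ for all $u$ with $\langle 1_{\dG}, u\rangle \neq 0$ follows from the same Cauchy-Schwarz inequality applied to $\langle 1_{\dG}, u\rangle$, and guarantees that the quantity under the square root is non-negative, i.e.\ that the efficient frontier is a well-defined real curve above the minimum-premium threshold $\Pi_{\min} = \alpha n^2/(2C)$ — the analogue of the minimum variance portfolio in Markowitz' theory.

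The main obstacle, beyond routine algebra, is recognising the Cauchy-Schwarz structure twice (once to ensure $AC > B^2$ so that the Lagrange system is invertible, and once to certify $\xi \geq 1$); both uses are really the same inequality applied to a factorisation $\distortion = LL^\top$, so the geometric content is that $(t-f)$ and $1_{\dG}$ must be sufficiently linearly independent in the $\distortion$-metric for the frontier to be non-degenerate. Handling the case $B = 0$ cleanly requires taking the limit of the factored form rather than trying to divide by $B$, which is why the statement is split.
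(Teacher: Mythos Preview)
Your proof is correct and follows essentially the same route as the paper's: Lagrange multipliers yielding a $2\times 2$ linear system in the multipliers, the identity $\langle b^*, \distortion b^*\rangle = (\lambda_1 r + \lambda_2 n)/\alpha$, the resulting quadratic relation between $\Pi$ and $r$, and the case split on $B = \langle 1_{\dG}, t-f\rangle$. One small caveat: Cauchy--Schwarz gives only $AC \geq B^2$, not the strict inequality you assert; equality occurs precisely when $t-f$ and $\distortion^{-1}1_{\dG}$ are collinear, and in that case your $2\times 2$ system is singular so Cramer's rule fails. The paper handles this degenerate case separately (the constraints then force $r = Bn/C$, which is consistent with the general formula since $\xi(t-f)=1$ kills the square-root term), whereas you only gesture at it informally in your closing paragraph.
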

\begin{proof}
Let $\delta \defeq t
- f$ for short. We solve
\begin{eqnarray}
\min \Pi(b) & \defeq & \frac{\alpha}{2} \cdot\langle b, \distortion b\rangle\nonumber\\
\mbox{s.c.} & & \left\{
\begin{array}{rcl}
\langle b, \distortion \delta\rangle & = & r\\
\langle 1_{\dG}, b\rangle & = & n
\end{array}\:\:,
\right.
\end{eqnarray}
Letting $\lambda_r$ and $\lambda_n$ the two Lagrange multipliers for
the two constraints, we obtain the first order condition $\alpha \distortion b -
\lambda_r \distortion \delta - \alpha \lambda_n 1 = 0$, \textit{i.e.},
\begin{eqnarray}
b & = & \frac{\lambda_r }{\alpha} \cdot \delta + \frac{\lambda_n }{\alpha}  \cdot
\distortion^{-1} 1_{\dG}\label{eqbfo}\:\:,
\end{eqnarray}
from which we obtain, using the constraints, the following system:
\begin{eqnarray}
\left\{
\begin{array}{rcl}
\alpha r & = & \lambda_r\cdot \langle \delta,
\distortion\delta\rangle + \lambda_n \cdot \langle 1_{\dG}, \delta\rangle\\
\alpha n & = & \lambda_r \cdot \langle 1_{\dG}, \delta\rangle +\lambda_n \cdot \langle 1_{\dG}, \distortion^{-1}
1_{\dG}\rangle
\end{array}
\right. \:\:.\nonumber
\end{eqnarray}
\begin{lemma}\label{simplelemMM}
For any symmetric
positive definite $\matrice{m}$, $\gendist{\textsc{m}}\succ 0$.
\end{lemma}
The lemma is a direct consequence of assumption (iii) in
(\textbf{SF}). Lemma \ref{simplelemMM} yields that $\distortion$ is invertible. 
To simplify this system, let us denote for short:
\begin{eqnarray}
P_b & \defeq & \frac{\alpha}{2} \cdot\langle b, \distortion b\rangle\nonumber\:\:,\\
P_\delta & \defeq & \frac{\alpha}{2} \cdot\langle \delta, \distortion
\delta\rangle\nonumber\:\:,\\
S & \defeq & \langle 1_{\dG}, \distortion^{-1} 1_{\dG} \rangle\nonumber\:\:,\\
\Delta & \defeq & \langle 1_{\dG}, \delta\rangle\nonumber\:\:.
\end{eqnarray}
We obtain the simplified system:
\begin{eqnarray}
\left\{
\begin{array}{rcl}
\alpha r & = & \frac{2P_\delta}{\alpha} \cdot \lambda_r  + \Delta \cdot \lambda_n \\
\alpha n & = & \Delta \cdot \lambda_r + S\cdot 
\lambda_n
\end{array}
\right. \:\:,\label{eqsystemEF2}
\end{eqnarray}
admitting the solution
\begin{eqnarray}
\lambda_r & = & \frac{\alpha^2 S r - \alpha^2 \Delta n}{2SP_\delta - \alpha\Delta^2}\:\:,\label{eqlambdar}\\
\lambda_n & = & \frac{2\alpha P_\delta n - \alpha^2 \Delta r}{2
  SP_\delta - \alpha \Delta^2}\:\:.\label{eqlambdan}
\end{eqnarray}
We note that $2SP_\delta - \alpha\Delta^2 = \alpha(\langle \delta, \distortion
\delta\rangle \langle 1_{\dG}, \distortion^{-1} 1_{\dG} \rangle - (\langle
1_{\dG}, \delta\rangle)^2)\geq 0$ from Cauchy-Schwartz inequality.
Suppose that $2SP_\delta -
   \alpha\Delta^2 > 0$. Multiplying eq. (\ref{eqbfo}) by $\alpha \distortion b$ yields
\begin{eqnarray}
2P_b & = & \lambda_r r + \lambda_n n\nonumber\\
 & = & \frac{\alpha^2 S r^2 - \alpha^2 \Delta n r}{2SP_\delta -
   \alpha\Delta^2} + \frac{2\alpha P_\delta n^2 - \alpha^2 \Delta n
   r}{2 SP_\delta - \alpha\Delta^2}\nonumber\\
 & = & \frac{\alpha^2 S r^2 - 2 \alpha^2 \Delta n r + 2\alpha P_\delta
   n^2}{2 SP_\delta -
   \alpha \Delta^2}\:\:,\nonumber
\end{eqnarray}
that is, $r$ is solution of 
\begin{eqnarray}
\alpha^2 S r^2 -2\alpha^2 \Delta n r + 2(\alpha P_\delta n^2 - 2 S P_b P_\delta + \alpha P_b
\Delta^2) & = & 0\:\:,\nonumber
\end{eqnarray}
from which
\begin{eqnarray}
r & = & \frac{\Delta n}{S} \pm \frac{1}{\alpha
  S}\sqrt{\alpha^2\Delta^2n^2 - 2 S(\alpha P_\delta n^2 - 2 S P_b P_\delta + \alpha P_b
\Delta^2)}\nonumber\\
 & = & \frac{\Delta n}{ S} \pm \sqrt{ \left(\frac{\Delta
       n}{S}\right)^2 -  \frac{2}{\alpha^2 S}(\alpha P_\delta n^2 - 2 S P_b P_\delta + \alpha P_b
\Delta^2)}\nonumber\\
 & = & \frac{\Delta n}{S} \pm \sqrt{ \left(\frac{\Delta
       n}{S}\right)^2 -  \left(\frac{\Delta
       n}{S}\right)^2\left(\frac{2S}{\alpha\Delta^2} \cdot
     P_\delta - \frac{4 S^2}{\alpha^2 \Delta^2 n^2}\cdot P_b P_\delta +
     \frac{2 S}{\alpha n^2} \cdot P_b
\right)}\nonumber\\
 & = & \frac{\Delta n}{S}\cdot \left(1 \pm \sqrt{1 -
     (\xi(\delta) + \xi(b) - \xi(\delta)\cdot \xi(b))}\right) \nonumber\\
 & = & \frac{\Delta n}{S}\cdot \left(1 \pm \sqrt{(\xi(\delta)  - 1)(\xi(b) - 1)}\right)\label{lastxie}\:\:,
\end{eqnarray}
with, whenever $\langle 1_{\dG},
u\rangle\neq 0$,
\begin{eqnarray}
\xi(u) & \defeq & \frac{\langle 1_{\dG}, \distortion^{-1} 1_{\dG} \rangle
  \langle u, \distortion u\rangle}{(\langle 1_{\dG}, u\rangle)^2}\nonumber\\
 & & \geq 1 \:\:,\nonumber
\end{eqnarray}
from Cauchy-Schwartz inequality. This ends the proof of Theorem
\ref{theffront} when $\Delta \neq 0$ and $\langle \delta, \distortion
\delta\rangle \langle 1_{\dG}, \distortion^{-1} 1_{\dG} \rangle - (\langle
1_{\dG}, \delta\rangle)^2 > 0$.

Now, if $\Delta = 0$, 
$r$ is solution of
\begin{eqnarray}
\alpha^2 S r^2 - 2 P_\delta (2 S P_b - \alpha n^2) & = & 0\:\:.\label{eqfqEFDeltaZ}
\end{eqnarray}
Remark that $2 S P_b - \alpha n^2 = \alpha(\langle 1_{\dG}, \distortion^{-1}
1_{\dG} \rangle \langle b, \distortion b\rangle - (\langle 1_{\dG},
b\rangle)^2) \geq 0$ (Cauchy-Schwartz inequality and $\alpha \geq 0$), so eq. (\ref{eqfqEFDeltaZ})
always has a solution,
\begin{eqnarray}
r & = & \pm \left(\frac{n}{S}\right)\cdot \sqrt{\frac{2SP_\delta}{\alpha}\cdot \left(\xi(b) - 1\right)}\:\:.\nonumber
\end{eqnarray}
To finish up, when $2SP_\delta -
   \alpha\Delta^2 = 0$, system (\ref{eqsystemEF2}) simplifies to
\begin{eqnarray}
\left\{
\begin{array}{rcl}
\alpha S r & = & \Delta\cdot(\Delta \cdot \lambda_r  + S \cdot \lambda_n) \\
\alpha n & = & \Delta \cdot \lambda_r + S\cdot 
\lambda_n
\end{array}
\right. \:\:,\nonumber
\end{eqnarray}
and so $r = \Delta n / S$. We check that $2SP_\delta -
   \alpha\Delta^2 = 0$ is equivalent to stating $\xi(\delta) = 1$, in
   which case we also check that eq. (\ref{lastxie}) becomes $r =
   \Delta n / S$. This ends the proof of Theorem \ref{theffront}.
\end{proof}
 The equation
depends on $\Pi(b)$ since $\xi(b) = (2 \langle 1_{\dG},
\distortion^{-1} 1_{\dG}\rangle/(\alpha n^2))\cdot \Pi(b) \propto \Pi(b)$. The question is now how large can $r$ be \textit{independently of the
  mutation process}, under the constraint that the mutator picks $b \propto \alpha b'$ with
$\alpha = O(\epsilon)$ and $\langle 1_{\dG}, b'\rangle$ 
constant (call it setting ``\textbf{D}''). This prevents this mutator to ``artificially'' beat ours just because of the
magnitude of mutations. 
Let us define vectors $\ell_l \defeq [\langle 1_{\dG}, \distortion^{-1} 1_{\dG} \rangle\:\:\langle 1_{\dG},
  t-f\rangle]^\top$, $\ell_r \defeq [\langle 1_{\dG},
  t-f\rangle \:\: \langle t-f, \distortion (t-f)\rangle]^\top$ and
  $\ell_* \defeq [n \:\: r]^\top$. We also put in
(\textbf{D}) the constraint $\xi(t-f)>1$ (implying $\det[\ell_l|\ell_r] \neq 0$), and 
the fact that the decomposition $\ell_* \defeq v_l\ell_l
  + v_r \ell_r$ satisfies $|v_l| + |v_r| =
  O(\trace{[\ell_l|\ell_r]})$. This implies in particular that $n,r$ cannot be significantly larger than
  $\|t-f_0\|^2_{\dG}$, and so nature cannot have the organism ``jump''
  from far ($f_0$) to close to target ($t$) in just one or few mutations.
\begin{lemma}\label{lemopteff}
Under settings (\textbf{SF} + \textbf{D}), returns on the efficient
frontier satisfy $r  =  \tilde{O}(\epsilon)$.
\end{lemma}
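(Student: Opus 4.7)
My plan is to feed the closed-form efficient-frontier equation from Theorem~\ref{theffront} through both ingredients of (\textbf{D}): the magnitude cap $\alpha=O(\epsilon)$ and the coordinate cap $|v_l|+|v_r|=O(\trace{[\ell_l|\ell_r]})$. I will write $A\defeq\langle 1_{\dG},\distortion^{-1}1_{\dG}\rangle$, $B\defeq\langle 1_{\dG},t-f\rangle$, $C\defeq\langle t-f,\distortion(t-f)\rangle$, so that $[\ell_l|\ell_r]$ is the $2\times 2$ matrix with trace $A+C$ and determinant $AC-B^2=B^2(\xi(t-f)-1)>0$ (by Lemma~\ref{simplelemMM} and $\xi(t-f)>1$). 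Under (\textbf{SF}), the standing bound $\|t-f\|_{\hg}^2\leq(\hessianmax\expressmax/\hessianmin\expressmin)\darwin^2\max_i\|b_i\|_{\hg}^2$ coming from ineq.~(\ref{seqdec}) and Lemma~\ref{lemfund1}, combined with Cauchy--Schwartz, yields $A,|B|,C=\tilde{O}(1)$; the magnitude cap then immediately gives $n=\alpha\langle 1_{\dG},b'\rangle=O(\epsilon)$.

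The heart of the plan is to eliminate the free direction parameter $\xi(b)$ from the frontier using the coordinate cap. In the branch $B\neq 0$, Theorem~\ref{theffront} reads $r=(Bn/A)(1\pm\lambda)$ with $\lambda\defeq\sqrt{(\xi(t-f)-1)(\xi(b)-1)}$. Inverting $\ell_*=v_l\ell_l+v_r\ell_r$ by Cramer's rule and substituting this frontier expression for $r$ yields the compact identities
\[
v_l=\frac{n}{A}(1\mp\mu),\qquad v_r=\pm\frac{n}{B}\mu,\qquad \mu\defeq\sqrt{\frac{\xi(b)-1}{\xi(t-f)-1}}.
\]
Feeding these into $|v_l|+|v_r|=O(A+C)$ converts the Pareto cap into an upper bound on $\mu$ of the order $(A+C)|B|/|n|$; reinserting that bound into the frontier and using $B^2(\xi(t-f)-1)=AC-B^2=\tilde{O}(1)$ to simplify $|B|\lambda/A$ collapses the correction term so that $|r|$ retains only one factor of $|n|=O(\epsilon)$, i.e., $|r|=\tilde{O}(\epsilon)$. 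The edge case $B=0$ follows directly from $r=\pm n\sqrt{AC(\xi(b)-1)}$ combined with the coordinate cap applied to $v_r=r/C$, which gives $r=\tilde{O}(\epsilon)$ by the same absorption into $\tilde{O}$.

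The main obstacle, and where the argument becomes delicate, is the near-cancellation in $v_l=(n/A)(1\mp\mu)$. A term-by-term triangle bound on $|v_l|+|v_r|$ loses this cancellation and yields only the trivial $r=\tilde{O}(1)$; the sign choice $\mp$ in $v_l$ is coupled to the sign $\pm$ in $v_r$ through the frontier identity, and the Pareto cap has to be invoked on the coupled sum rather than on each summand separately. Getting this coupling right is what forces $\mu$ to scale in a way that the quantity $|B|\cdot|n|\cdot\mu(\xi(t-f)-1)/A$ remains of order $\alpha$, so that the $O(\epsilon)$ prefactor from $n$ survives the multiplication by the $\sqrt{\,\cdot\,}$ correction. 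Once this sign-aware manipulation is in place, both branches of Theorem~\ref{theffront} collapse to $|r|=\tilde{O}(\epsilon)$, proving the lemma.
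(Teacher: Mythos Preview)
Your Cramer identities $v_l=(n/A)(1\mp\mu)$ and $v_r=\pm(n/B)\mu$ are correct, and you are right that they give $\mu=|B|\,|v_r|/|n|$. But this is exactly where your plan breaks: the coordinate cap in (\textbf{D}) is a bound on $|v_l|+|v_r|$, a sum of \emph{nonnegative} numbers, so there is no sign cancellation to harvest. From $|v_r|\le|v_l|+|v_r|=O(A+C)$ you get $\mu=O((A+C)|B|/|n|)$, and since $\lambda=\mu\,(\xi(t-f)-1)=\mu(AC-B^2)/B^2$, the correction term in the frontier is
\[
\Bigl|\frac{Bn}{A}\Bigr|\lambda \;=\; \frac{|v_r|\,(AC-B^2)}{A}\;=\;O\!\left(\frac{(A+C)(AC-B^2)}{A}\right)\;=\;\tilde O(1),
\]
with the factor $|n|$ cancelling exactly against the $1/|n|$ in your bound on $\mu$. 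No ``sign-aware'' reading of $|v_l|+|v_r|$ can avoid this: for large $\mu$ one has $|1\mp\mu|\sim\mu$ regardless of the branch, so both summands scale like $|n|\mu$ and the cap still only yields $\mu=O(1/|n|)$. Equivalently, $r=Bv_l+Cv_r$ directly, so $|r|\le\max(|B|,C)\,(|v_l|+|v_r|)=\tilde O(1)$; the coordinate cap alone cannot produce an $O(\epsilon)$ bound on $r$ because it is itself $\epsilon$-free.

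The paper does not try to squeeze $r=\tilde O(\epsilon)$ out of the Cramer coefficients. Instead it goes back to the Lagrangian minimiser $b=(\lambda_r/\alpha)\delta+(\lambda_n/\alpha)\distortion^{-1}1_{\dG}$ and bounds $\xi(b)=A\langle b,\distortion b\rangle/n^2$ \emph{directly}, obtaining (their ineq.~(\ref{lastan})) a factor $(\alpha/n)^2$ in front of the $(|v_l|+|v_r|)^2$ term. With $n=\theta(\alpha)$ from (\textbf{D}) this makes $\xi(b)=\tilde O(1)$ independently of $\epsilon$, and then the single surviving factor $|n|$ in $r=(Bn/A)(1\pm\sqrt{(\xi(\delta)-1)(\xi(b)-1)})$ gives $r=\tilde O(\epsilon)$. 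The crucial ingredient your route lacks is precisely this $(\alpha/n)^2$ scaling of $\xi(b)$, which in the paper comes from writing the Lagrange multipliers (eqs.~(\ref{eqlambdar})--(\ref{eqlambdan})) and tracking the $\alpha$'s through $\|b\|_{\dG}^2$ rather than from any manipulation of $v_l,v_r$ after the fact. If you want to repair your argument, you should abandon the attempt to bound $\mu$ via the coordinate cap and instead reproduce the paper's bound on $\xi(b)$ through the explicit minimiser; only then does the $n=\theta(\alpha)$ assumption do real work.
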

\begin{proof}
(We prove an explicit bound without the tilde notation) 
We shall prove the more explicit bound that $r  =  O(\uprho(t-f_0) \cdot \epsilon)$, with (here, $x\vee y \defeq \max\{x,y\}$)
\begin{eqnarray}
\uprho(u) & \defeq & |\langle 1_{\dG},
    u\rangle| \dG \cdot
\left(\frac{\hessianmax \expressmax}{\hessianmin \expressmin}\right)^{\frac{5}{2}}  \cdot
\left(1\vee \left(\frac{1}{\hessianmax \expressmax}\right) +
  \frac{\hessianmin \expressmin\|u\|_{\dG}}{\sqrt{\dG}}\cdot 1\vee \left(\frac{\|u\|_{\dG}}{\sqrt{\dG}}\right)\right)^2\:\:.\nonumber
\end{eqnarray}
Let us define for short $0<\psi_{\min}\leq \psi_{\max}$ the minimal
and maximal eigenvalues of $\distortion$. Let $\delta \defeq t
- f$ for short and column vectors
\begin{eqnarray}
\ell_l = \left[
\begin{array}{c}
\langle 1_{\dG}, \distortion^{-1} 1_{\dG} \rangle\\
\langle 1_{\dG},
  \delta\rangle
\end{array}
\right]\:\:, \ell_r = \left[
\begin{array}{c}
\langle 1_{\dG},
  \delta\rangle\\
\langle \delta, \distortion \delta\rangle\\
\end{array}
\right]\:\:, \ell_* = \left[
\begin{array}{c}
n\\
r
\end{array}
\right]\:\:.\nonumber
\end{eqnarray}
Since $\langle 1_{\dG}, \distortion^{-1} 1_{\dG}\rangle \leq \dG /\psi_{\min}$
and $\langle b, \distortion b\rangle\leq \psi_{\max} \|b\|^2_{\dG}$, we have
\begin{eqnarray}
\xi(b) & = & \frac{\langle 1_{\dG}, \distortion^{-1} 1_{\dG} \rangle
  \langle b, \distortion b\rangle}{n^2}\nonumber\\
 & \leq & \frac{\dG \psi_{\max}}{n^2 \psi_{\min}} \cdot \|b\|^2_{\dG}\:\:.\nonumber
\end{eqnarray}
We have also from eq. (\ref{eqbfo}),
\begin{eqnarray}
\|b\|^2_{\dG} & = & \frac{\lambda^2_r}{\alpha^2} \cdot
\|\delta\|^2_{\dG} + \frac{2\lambda_r\lambda_n}{\alpha^2} \cdot \langle \delta, \distortion^{-1} 1_{\dG} \rangle + \frac{\lambda^2_n }{\alpha^2}  \cdot
\langle 1_{\dG}, \distortion^{-2} 1_{\dG} \rangle\label{normbg}\:\:.
\end{eqnarray}
It comes from eqs (\ref{eqlambdar}, \ref{eqlambdan}),
\begin{eqnarray}
\frac{\lambda^2_r}{\alpha^2} & = & \frac{\alpha^2 (\langle 1_{\dG},
  \distortion^{-1} 1_{\dG} \rangle \cdot r - \langle 1_{\dG},
  \delta\rangle \cdot n)^2}{(\langle 1_{\dG}, \distortion^{-1} 1_{\dG} \rangle
  \langle \delta, \distortion \delta\rangle - (\langle 1_{\dG},
  \delta\rangle)^2)^2}\:\:,\nonumber\\
  & = & \frac{\alpha^2 \det^2[\ell_l|\ell_*]}{\det^2[\ell_l|\ell_r]}\:\:,\nonumber\\
\frac{\lambda_r \lambda_n}{\alpha^2} & = & \frac{\alpha^2 \det[\ell_l|\ell_*]\det[\ell_*|\ell_r]}{\det^2[\ell_l|\ell_r]}\:\:,\nonumber\\
\frac{\lambda^2_n}{\alpha^2} & = & \frac{\alpha^2 \det^2[\ell_*|\ell_r]}{\det^2[\ell_l|\ell_r]}\:\:.\nonumber
\end{eqnarray}
Using the fact that $\langle \delta, \distortion^{-1} 1_{\dG} \rangle \leq
\sqrt{\dG} \|\delta\|_{\dG} /\psi_{\min}$ and $\langle 1_{\dG},
\distortion^{-2} 1_{\dG} \rangle\leq \dG/\psi_{\min}^2$, we obtain from eq. (\ref{normbg})
\begin{eqnarray}
\xi(b) & \leq & \frac{\dG \psi_{\max}}{\psi_{\min}} \cdot 
  \left(\frac{\alpha}{n}\right)^2\cdot\left(
    \frac{\psi_{\min} \|\delta\|_{\dG}\cdot \det[\ell_l|\ell_*] +
    \sqrt{\dG}\cdot \det[\ell_*|\ell_r]}{\psi_{\min}\cdot
      \det[\ell_l|\ell_r]}\right)^2\:\:.
\end{eqnarray}
Let $v_l, v_r$ be such that $\ell_* = v_l\cdot \ell_l + v_r\cdot
\ell_r$. Such reals are guaranteed to exist since $\det[\ell_l|\ell_r]  = \langle 1_{\dG}, \distortion^{-1} 1_{\dG}
\rangle\cdot \langle \delta, \distortion \delta\rangle - (\langle 1_{\dG},
  \delta\rangle)^2\neq 0$ by assumption. Then $\det[\ell_*|\ell_r] =
  v_l \cdot \det[\ell_l|\ell_r]$ and $\det[\ell_l|\ell_*] =
  v_r \cdot \det[\ell_l|\ell_r]$. We get
\begin{eqnarray}
\xi(b) & \leq & \frac{\dG \psi_{\max}}{\psi_{\min}} \cdot 
  \left(\frac{\alpha}{n}\right)^2\cdot\left(
    \frac{\sqrt{\dG}}{\psi_{\min}}+\|\delta\|_{\dG}\right)^2\cdot \left(
    \frac{|\det[\ell_l|\ell_*]| +
    |\det[\ell_*|\ell_r]|}{
      \det[\ell_l|\ell_r]}\right)^2\nonumber\\
 &  & = \frac{\dG \psi_{\max}}{\psi_{\min}}\cdot 
  \left(\frac{\alpha}{n}\right)^2\cdot
    \left(
    \frac{\sqrt{\dG}}{\psi_{\min}}+\|\delta\|_{\dG}\right)^2 \cdot \left(|v_l| +
    |v_r|\right)^2\nonumber\\
&  \leq & \frac{\dG \psi_{\max}}{\psi_{\min}}\cdot 
  \left(\frac{\alpha}{n}\right)^2\cdot
    \left(
    \frac{\sqrt{\dG}}{\psi_{\min}}+\|\delta\|_{\dG}\right)^2 \cdot
  \left(\frac{\dG}{\psi_{\min}} + \psi_{\max}\|\delta\|^2_{\dG}\right)^2\label{lastan}\:\:.
\end{eqnarray}
The last inequality comes from assumption (\textbf{D}), since
\begin{eqnarray}
\trace{[\ell_l|\ell_r]} & = & \langle 1_{\dG}, \distortion^{-1} 1_{\dG}\rangle + \langle \delta, \distortion \delta\rangle\nonumber\\
 & \leq & \frac{\dG}{\psi_{\min}} + \psi_{\max}\|\delta\|^2_{\dG}\nonumber\:\:.
\end{eqnarray}
If $n = \theta(\alpha)$, then the dependence in the magnitude of
mutations disappear and, taking the square root in ineq. (\ref{lastan}),
\begin{eqnarray}
\sqrt{\xi(b)} & = & O\left( \sqrt{\frac{\dG
      \psi_{\max}}{\psi_{\min}}} \left(
    \frac{\sqrt{\dG}}{\psi_{\min}}+\|\delta\|_{\dG}\right) \cdot
\left(\frac{\dG}{\psi_{\min}} + \psi_{\max}\|\delta\|^2_{\dG}\right)\right)\:\:,
\end{eqnarray}
out of which we get, assuming $\langle 1_{\dG}, \delta\rangle \neq 0$,
the following upperbound for returns on the efficient frontier:
\begin{eqnarray}
r & = & O\left\{ \frac{|\langle 1_{\dG},
    \delta\rangle|\psi_{\max}}{\dG} \cdot\left(1+  \sqrt{\frac{\dG
      \psi_{\max}}{\psi_{\min}}} \left(
    \frac{\sqrt{\dG}}{\psi_{\min}}+\|\delta\|_{\dG}\right) \cdot
\left(\frac{\dG}{\psi_{\min}} +
  \psi_{\max}\|\delta\|^2_{\dG}\right)\right) \cdot \epsilon\right\}
\nonumber\\
 & = & O\left\{ |\langle 1_{\dG},
    \delta\rangle|\cdot 
\left(\frac{\psi_{\max}}{\psi_{\min}}\right)^{\frac{3}{2}} \cdot \left(
   1 + \frac{\psi_{\min}\|\delta\|_{\dG}}{\sqrt{\dG}}\right) \cdot
\left(\frac{\dG}{\psi_{\min}} +
  \psi_{\max}\|\delta\|^2_{\dG}\right) \cdot \epsilon\right\}\nonumber\\
 & = & O\left\{ |\langle 1_{\dG},
    \delta\rangle| \dG \cdot
\left(\frac{\psi_{\max}}{\psi_{\min}}\right)^{\frac{5}{2}} \cdot \left(
   1+ \frac{\psi_{\min}\|\delta\|_{\dG}}{\sqrt{\dG}}\right) \cdot
\left(\frac{1}{\psi_{\max}} +
  \frac{\psi_{\min}\|\delta\|^2_{\dG}}{\dG}\right) \cdot \epsilon\right\}\nonumber\\
 & = & O\left\{ |\langle 1_{\dG},
    \delta\rangle| \dG \cdot
\left(\frac{\psi_{\max}}{\psi_{\min}}\right)^{\frac{5}{2}}  \cdot
\left(1 \vee \left(\frac{1}{\psi_{\max}}\right) +
  \frac{\psi_{\min}\|\delta\|_{\dG}}{\sqrt{\dG}}\cdot 1\vee \left(\frac{\|\delta\|_{\dG}}{\sqrt{\dG}}\right)\right)^2 \cdot \epsilon\right\}\:\:.\nonumber
\end{eqnarray}
We then conclude, noting that  we can fix
$\psi_{\min} = \hessianmin \expressmin$ and $\psi_{\max} = \hessianmax
\expressmax$. 
\end{proof}
Hence, in the (\textbf{D}) regime, the mutation mechanism on the
efficient frontier enjoys a dependence on $\epsilon$ of the same order
as that of superior beneficial mutations, that always exist in
$\mutbasis$ under setting (\textbf{SF}) alone --- if we modify our mutator so that it picks the best
mutation at each iteration, like $\mathsf{Opt\mbox{-}Sel}$ in \citep{akAEa},
then we are guaranteed to have mutations with a near-optimal
dependence in $\epsilon$ throughout all $\hgmon$.

\section{Toy experiments (full)}\label{app-toy-exp-setting}

\begin{table}[t]
\begin{center}
\begin{tabular}{c|cc}\hline\hline
& Performance plots & Evolution in the ambient space\\\hline
\rotatebox{90}{\hspace{2cm}Unsupervised} & \includegraphics[width=0.4\columnwidth]{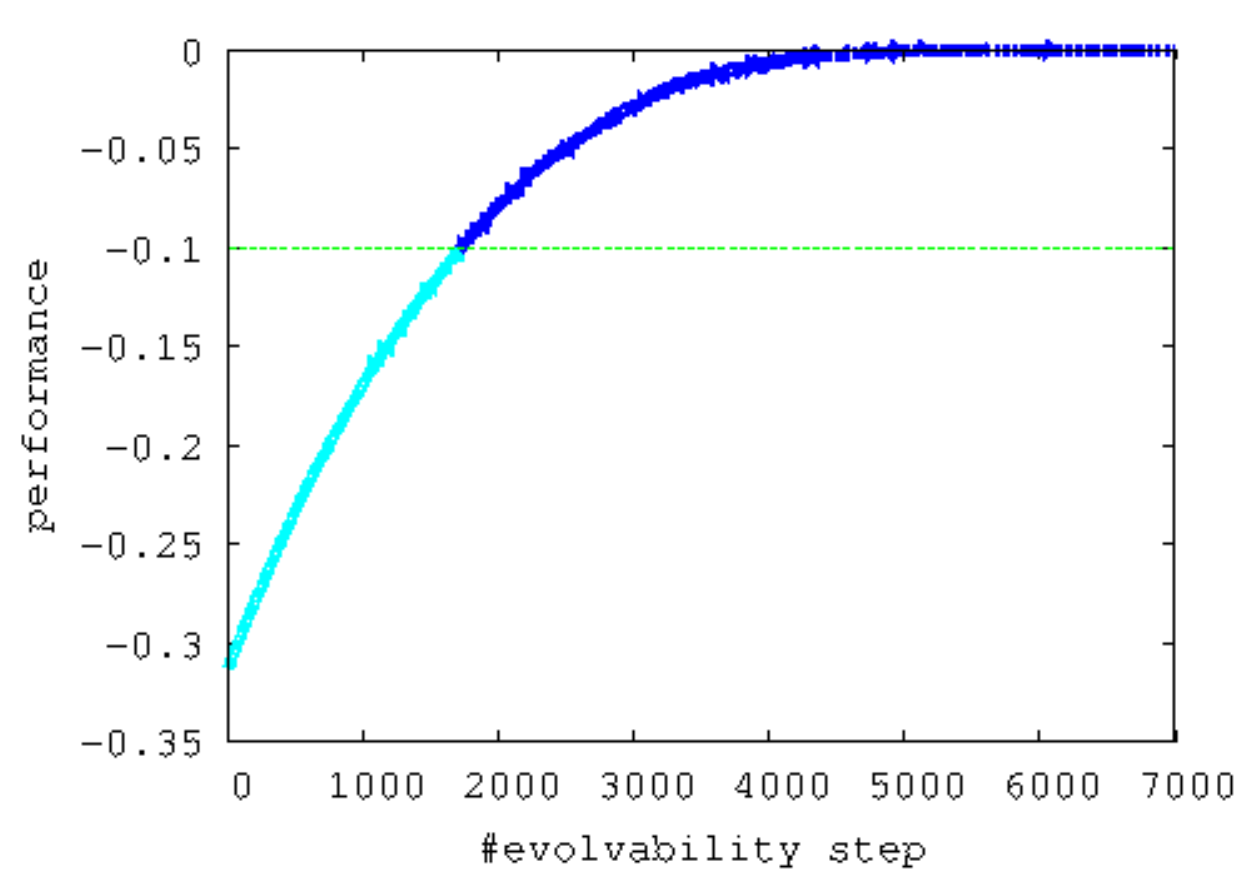} & \includegraphics[trim=0bp 100bp 0bp
150bp,clip,width=0.3\columnwidth]{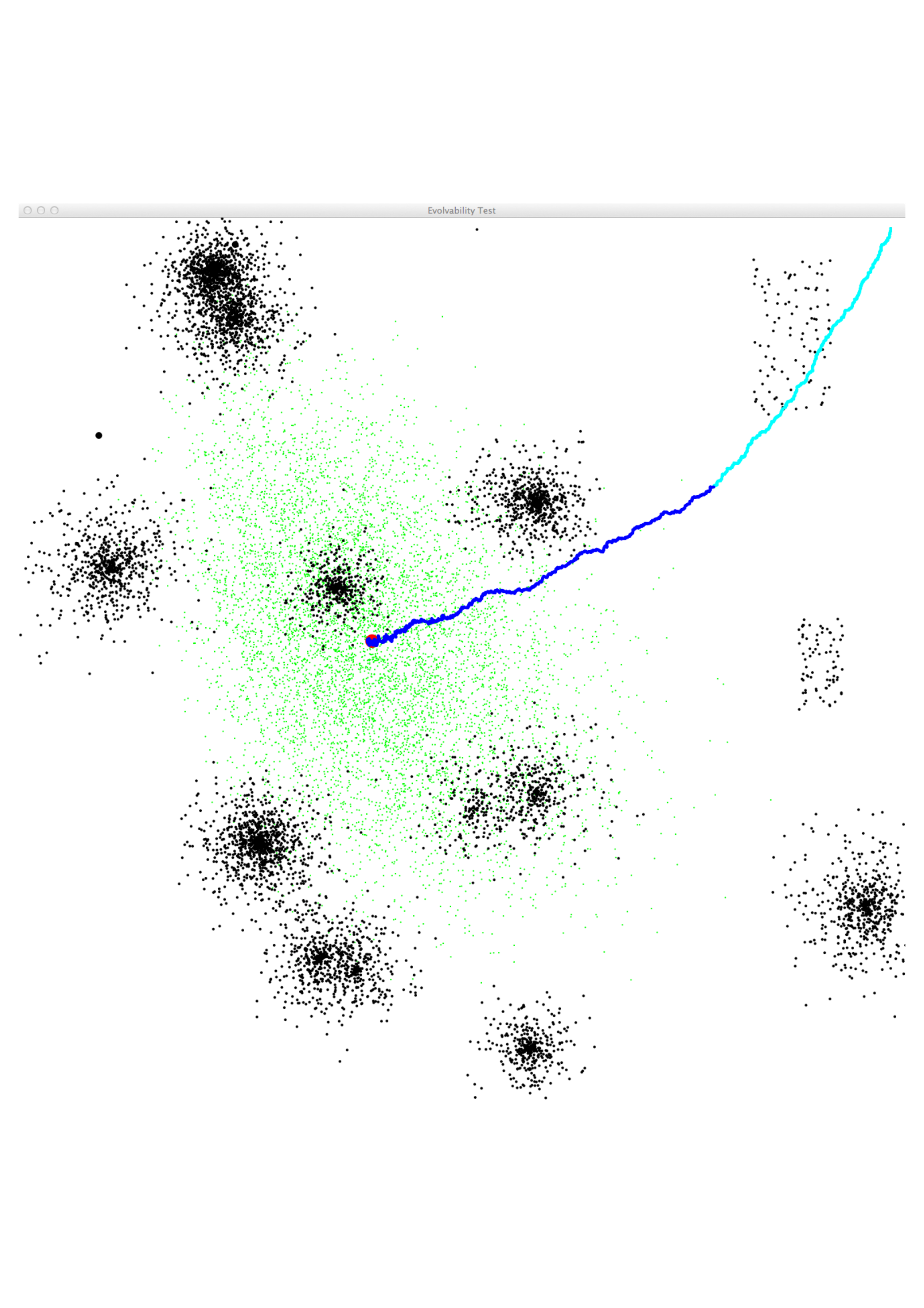}\\
\hline
\rotatebox{90}{\hspace{2cm}Supervised}& \includegraphics[width=0.4\columnwidth]{Figs/testSuper_super_optvstrueperf_2.pdf} & \includegraphics[trim=0bp 130bp 0bp
150bp,clip,width=0.3\columnwidth]{Figs/Supervised_EPS0_1_CLAMP50000_2.pdf}\\ \hline\hline
\end{tabular}
\end{center}
\caption{{\small Experiments on toy domains for unsupervised (top) and
  supervised learning (bottom). Left plots: performances, indicating
  the $-\epsilon$ evolvability threshold in green. In the case
  of supervised learning, the data (right: black = positive class,
  grey = negative class) are not linearly separable, so we compute the
  performance minus that of the optimal linear separator
  (eq. (\ref{condgenrob})). Right plots: data and organisms
  evolved. Organisms are displayed in cyan and dark blue. Dark blue
  means that the organism satisfies the condition of the evolvability model
  (eqs. (\ref{condgen}) for top, (\ref{condgenrob}) for bottom). A red
  dot (top) and red vector (down) display the target. Top right: green
  dots are estimates of the target used for evolution. Bottom right: the last
  vector evolved is displayed with a dark blue line; it is almost confounded
  with the target.}}
\label{t-exp-full}
\end{table}

\newcommand{\snd}{$\mathsf{Simple\mbox{-}Evol}$}

\begin{algorithm}[t]
\caption{\snd($\mathcal{S}$)}
\begin{algorithmic}
\label{algo:SND}
\STATE \textbf{Input}: sample $\mathcal{S}$;
\STATE Initialize $f \leftarrow f_0$;
\STATE For $t=0, 1, ..., T-1$
\STATE Step $t$.1: \textbf{if} $t\%1000 = 0$ \textbf{then} $\mutbasis \leftarrow \mathsf{Gen}(\mathcal{S})$;
\STATE Step $t$.2: Compute $\bene(f_t)$ and $\neut(f_t)$ using
$\neigh_\epsilon(f_t)$ as in (\ref{defneighbor})
\STATE Step $t$.3: \textbf{if} $\bene(f_t) \neq \emptyset$ then
$f_{t+1} \sim_{\mathrm{unif.}} \bene(f_t)$;
\STATE \qquad\qquad \textbf{else if} $\neut(f_t) \neq \emptyset$ then
$f_{t+1} \sim_{\mathrm{unif.}} \neut(f_t)$;
\STATE \qquad\qquad \textbf{else} $f_{t+1} \sim_{\mathrm{unif.}} \neigh_\epsilon(f_t)$;
\STATE \textbf{Return} $f_{T}$;
\end{algorithmic}
\end{algorithm}

The
high-level implementation of the algorithm, \snd, is sketched in
Algorithm \ref{algo:SND} ($\mathsf{Gen}(\mathcal{S})$ returns a random
set from $\mathcal{S}$ that generates ${\mathbb{R}}^2$). Details of
the experiments are as follows.

\noindent$\hookrightarrow$ \textit{Supervised learning}: we generate a mixture of 2D spherical Gaussians with
  random variance and a random number of vectors in each; vectors of
  each Gaussian are all labeled positive (black) or negative
  (grey, class picked uniformly at random). The data is not linearly separable, so the optimal linear
  separator does not have zero error, and we are in the agnostic
  setting of Theorem \ref{throbust}. The performance chosen is
  (minus) the square loss. To speed the mutator, we threshold the
  number $m$ of conditions sampled at each iteration to a maximum of
  $50 000$.\\
\noindent$\hookrightarrow$ \textit{Unsupervised learning}: to guarantee an optimum that can be measured and
  compared with, the problem is the estimation of a sample mean. The
  performance of an organism $f$ with respect to target $\hat{\mu}$ is
  $\perf_{\mu,\varphi}(f, {\mathcal{S}}) \defeq -\|f -
  \hat{\mu}_{\mathcal{S}}\|_2^2$. Notice that the target is the
  distribution's expectation. To complicate this easy task, we
  restrict the computation of $\hat{\mu}_{\mathcal{S}}$ over
  \textbf{5} vectors chosen at random in the data. Hence, while the
  expectation of $\hat{\mu}_{\mathcal{S}}$ is still the sample's
  expectation, the variance of $\hat{\mu}_{\mathcal{S}}$ is large. In
  Table \ref{t-exp-full} (top right), the small green dots display the
  $\hat{\mu}_{\mathcal{S}}$ picked. They spread on a large portion of
  the domain around the target.\\

In both supervised and unsupervised experiments, all data are
normalized to fit in a disk of unit norm. we implement the
mutator and all other parameters as they are given above, picking $\epsilon = \frac{1}{10}$
and $\mutbasis$ consisting of two randomly chosen vectors in the data
that generate ${\mathbb{R}}^2$,
flipped to get four mutations. Each 1000 evolvability steps, we renew
the basis, still completely at random. Sometimes, in particular for
the supervised experiment, both $\bene$ and $\neut$ are empty. In this
case of ``failure'', we just force evolution's hand by taking one of the mutants, chosen
uniformly at random in the neighborhood. Therefore, there is not other
optimization process carried out in our implementation than the weak optimization
achieved by the mutator.

\end{document}